\documentclass[11pt,3p]{elsarticle}
\usepackage{verbatim}
\usepackage{amsmath,amsthm,amsfonts,amssymb}
\usepackage{algorithm}
\usepackage{enumerate}
\usepackage{amssymb}
\usepackage{graphics}
\usepackage{algpseudocode}
\usepackage{cases}
\usepackage{xcolor,sectsty}
\usepackage{sectsty}

\definecolor{astral}{RGB}{46,116,181}
\definecolor{blue1}{RGB}{0, 150, 255}
\subsectionfont{\color{astral}}
\sectionfont{\color{astral}}
\usepackage{mathtools}
\newtheorem{theorem}{Theorem}[section]
\newtheorem{lemma}[theorem]{Lemma}

\newtheorem{definition}[theorem]{Definition}

\newtheorem{thm}{Theorem}[section]

\newtheorem{corollary}[thm]{Corollary}
\newtheorem{proposition}[thm]{Proposition}

\usepackage{tikz}
\usetikzlibrary{shapes.geometric, arrows.meta, positioning, calc, patterns, decorations.pathreplacing}
\definecolor{darkslategray}{rgb}{0.18, 0.31, 0.31}
\definecolor{warmblack}{rgb}{0.0, 0.26, 0.26}
\journal{arXiv.org}

\newcommand{\mb}{\mathbb}
\newcommand{\mc}[1]{\mathcal {#1}}

\usepackage{graphicx}
\usepackage{mathdots}
\usepackage{MnSymbol}
\usepackage{mathrsfs}

\usepackage[english]{babel}

\usepackage{multirow}

\usepackage[hidelinks,colorlinks=true, urlcolor=blue, linkcolor=red, citecolor=green]{hyperref}

\usepackage{hyperref}
\usepackage{comment}

\usepackage{subfigure}

\usepackage{multicol}

\newcommand\x{\times}

\usepackage[customcolors]{hf-tikz}

\tikzset{style green/.style={
    set fill color=green!50!lime!60,
    set border color=white,
  },
  style cyan/.style={
    set fill color=cyan!90!blue!60,
    set border color=white,
  },
  style orange/.style={
    set fill color=black!10!blue!10,
    set border color=white,
  },
  hor/.style={
    above left offset={-0.15,0.31},
    below right offset={0.15,-0.125},
    #1
  },
  ver/.style={
    above left offset={-0.1,0.3},
    below right offset={0.15,-0.15},
    #1
  }
}

\usepackage{tikz,xcolor}
\definecolor{lime}{HTML}{A6CE39}
\definecolor{lightblue}{rgb}{0.0, 0.0, 0.5}
\DeclareRobustCommand{\orcidicon}{%
	\begin{tikzpicture}
	\draw[lime, fill=lime] (0,0)
	circle [radius=0.16]
	node[white] {{\fontfamily{qag}\selectfont \tiny ID}};
	\draw[white, fill=white] (-0.0625,0.095)
	circle [radius=0.007];
	\end{tikzpicture}
	\hspace{-2mm}
}
\foreach \x in {A, ..., Z}{%
	\expandafter\xdef\csname orcid\x\endcsname{\noexpand\href{https://orcid.org/\csname orcidauthor\x\endcsname}{\noexpand\orcidicon}}
}

\begin{document}

\begin{frontmatter}

\title{
Robust Low-Rank Tensor Completion based on M-product with Weighted Correlated Total Variation
and Sparse Regularization 
}
\author{ Biswarup Karmakar$^a$ Ratikanta Behera$^b$, }
\vspace{.3cm}

\address{  
Department of Computational and Data Sciences, \\
Indian Institute of Science, Bangalore, 560012, India.\\
                        \textit{E-mail$^a$}: \texttt{biswarupk@iisc.ac.in}\\
                        \textit{E-mail$^b$}: \texttt{ratikanta@iisc.ac.in}\\
                        
                        }
\begin{abstract}
The robust low-rank tensor completion problem addresses the challenge of recovering corrupted high-dimensional tensor data with missing entries, outliers, and sparse noise commonly found in real-world applications. Existing methodologies have encountered fundamental limitations due to their reliance on uniform regularization schemes, particularly the tensor nuclear norm and $\ell_1$ norm regularization approaches, which indiscriminately apply equal shrinkage to all singular values and sparse components, thereby compromising the preservation of critical tensor structures. The proposed tensor weighted correlated total variation (TWCTV) regularizer addresses these shortcomings through an $M$-product framework that combines a weighted Schatten-$p$ norm on gradient tensors for low-rankness with smoothness enforcement and weighted sparse components for noise suppression. The proposed weighting scheme adaptively reduces the thresholding level to preserve both dominant singular values and sparse components, thus improving the reconstruction of critical structural elements and nuanced details in the recovered signal. Through a systematic algorithmic approach, we introduce an enhanced alternating direction method of multipliers (ADMM) that offers both computational efficiency and theoretical substantiation, with convergence properties comprehensively analyzed within the $M$-product framework.Comprehensive numerical evaluations across image completion, denoising, and background subtraction tasks validate the superior performance of this approach relative to established benchmark methods.
\end{abstract}

\begin{keyword}
Tensor $M$-SVD\sep Robust low-rank tensor completion \sep ADMM algorithm \sep Tensor weighted Schatten-$p$ norm \sep Weighted $\ell_1$ norm.
\end{keyword}
\end{frontmatter}

\section{Introduction}

A tensor is a multi-dimensional generalization of matrices that can represent data in more than two dimensions. Tensor-based methods help us to handle and analyze multi-dimensional data more efficiently than traditional matrix methods. These have been successful in various application domains, such as signal processing \cite{10125009,cichocki2015}, machine learning \cite{8884203}, and scientific computing \cite{BraliNT13}, offering efficient representation of large-scale data with powerful problem-solving capabilities. Real-world data often suffer from corruption and missing values during acquisition and transmission processes. The robust low-rank tensor completion (RLRTC) framework addresses this challenge by reconstructing the underlying low-rank tensor structure from such corrupted and incomplete observations. This framework has proven effective in various applications including traffic data prediction~\cite{quan2023}, image restoration~\cite{8698334}, object detection~\cite{SOBRAL201722}, and recommendation systems~\cite{3278607}. Mathematically, the RLRTC model for the $d$-th order tensor $\mc{X}\in\mb{R}^{n_1\times n_2\times n_3\times\cdots\times n_d}$ is expressed as
\begin{equation}\label{eq:rltc}
    \min_{\mathcal{X}, \mathcal{E}} \Psi_1(\mathcal{X}) + \lambda \Psi_2(\mathcal{E}), \quad \text{subject to} \quad P_{\Omega}(\mathcal{M})=P_{\Omega}(\mathcal{X} + \mathcal{E}) ,
\end{equation}
where $\Psi_1(\mathcal{X})$ denotes the low-rank regularization term (a convex or nonconvex surrogate of the tensor rank), $\Psi_2(\mathcal{E})$ accounts for noise and outliers, and $\lambda > 0$ is a regularization parameter that controls the trade-off between the two terms. Here, $\mathcal{M}$ represents the observed tensor, and $P_{\Omega}(\cdot)$ denotes the projection operator onto the observation set $\Omega$, defined as
\begin{equation*}
  (P_{\Omega}(\mathcal{M}))(i_1,i_2,\dots,i_d) = 
\begin{cases}
    \mathcal{M}(i_1,i_2,\dots,i_d), & \text{when } (i_1,i_2,\dots,i_d) \in \Omega, \\
    0, & \text{otherwise}.
\end{cases}  
\end{equation*}

If the index set $\Omega$ includes all elements, meaning that the entire tensor is observed, the RLRTC model simplifies to the tensor robust principal component analysis (TRPCA) problem. However, if no corruption exists, that is, $\mathcal{E} = 0$, the model becomes equivalent to the tensor completion (TC) problem. Consequently, RLRTC can be seen as a generalization of both TC and TRPCA.
Different tensor factorization frameworks introduce various tensor rank definitions and relaxations, making optimization problem~\eqref{eq:rltc} challenging to solve directly. For instance, CP decomposition~\cite{6979248} aims to represent a tensor as a sum of rank-one tensors, but determining the CP rank is NP-hard, making it computationally expensive and challenging to apply in practice. Similarly, Tucker decomposition~\cite{7265046, PanLingALow2024}, which breaks a tensor into a core tensor and factor matrices, requires unfolding the tensor into matrices along different modes. This unfolding process can lead to a substantial loss of the tensor’s inherent structure and local spatial correlations, especially in high-dimensional data. Following previous studies, the $t$-product based tensor algebra framework proposed by Kilmer \& Martin~\cite{kilmer13} has made substantial contributions to tensor completion research~\cite{8606166,kong18}. In 2015, Kernfeld et al.~\cite{Kernfeld15} generalized tensor-tensor multiplication by developing the $M$-product and $c$-product frameworks using invertible linear transformations. Unlike CP and Tucker decompositions, $M$-SVD~\cite{kilmer2021, Guangjing2020} preserves both the intrinsic low-rank structure and spatial relationships without tensor unfolding, making it effective for applications such as image completion and hyperspectral data recovery.

The literature on tensor data recovery has progressed significantly, particularly with the development of advanced regularization techniques. Initially, the $t$-product-based nuclear norm (TNN) \cite{8606166} was introduced to encourage low-rankness by reducing the singular values of a tensor. Although TNN worked better than older matrix-based methods, it struggled to preserve the local structure of the data because it focused too much on global features. To improve this, total variation (TV) regularization was added to keep things smooth and maintain edges, especially in data like hyperspectral images (HSIs) \cite{qiu2021tnntv}. Although TV was good at maintaining local smoothness, it couldn’t fully capture both low-rank and smoothness properties simultaneously. These approaches \cite{ZhiYangTensor2024, he2016lrtv,wang2018lrtdtv,yokota2016spctv} employ the sum of two regularization terms, requiring careful fine-tuning of trade-off parameters to handle real-world scenarios effectively. To solve this, a tensor correlated total variation (TCTV) \cite{Wang202310990} was created by combining the strengths of both TNN and TV. By applying the tensor nuclear norm in the gradient domain, the TCTV can simultaneously enforce both the low-rank structure and smoothness. This makes TCTV a more effective method for tensor completion, as it handles both global structure and local details better than the previous methods. Previous approaches penalized all singular values equally, leading to the over-penalization of significant components in the optimization process. Recent advances in $t$-product-based frameworks have explored weighted nuclear norm regularizers~\cite{yang2025}. For matrix completion, the Schatten-$p$ quasi-norm with a small $p$ can recover low-rank matrices from fewer observed entries than nuclear norm minimization. Motivated by this observation, we propose a nonconvex formulation using exponential weights with stronger decay properties, applying the tensor weighted Schatten-$p$ norm to the gradient tensor within the $M$-product framework to enhance recovery performance. The discrete Fourier transform (DFT) assumes periodic boundary conditions, whereas the discrete cosine transform (DCT) assumes reflective boundary conditions, leading to a smoother and more natural extension at the signal boundaries~\cite{Ng99}. Thus, using DCT as a transform not only provides better recovery accuracy for low-rankness and smoothness by reducing the estimation bias but also achieves faster computation times.

Moreover, the previously mentioned methods focus solely on the variations among singular values while neglecting the differences among the elements of the sparse tensor. Specifically, these methods often utilize the tensor $\ell_1$ norm, applying uniform shrinkage to all elements of the sparse tensor. This approach can overly penalize significant entries, potentially compromising the overall performance and leading to suboptimal outcomes \cite{Candes2008}. Inspired by the superior performance of weighted $\ell_1$ norm minimization over standard $\ell_1$ norm in enhancing signal sparsity~\cite{zhao2012}, we introduce an adaptive weighting scheme for sparse components.

The main contributions of this paper are as follows:
\begin{itemize}
\item A tensor weighted correlated total variation (TWCTV) regularizer is presented that incorporates exponential weighting in the Schatten-$p$ norm (based on $M$-SVD) on the gradient tensor to enhance low-rank structure and smoothness properties. A weighted $\ell_1$ norm is also proposed for the RLRTC problem to effectively handle sparse corruptions while preserving significant components.

\item An efficient alternating direction method of multipliers-based algorithm is established to solve the resulting optimization problem of RLRTC, along with a theoretical analysis of its convergence properties under nonconvex settings.

\item Comprehensive numerical evaluations are conducted across multiple data, including RGB images, hyperspectral data, and video sequences, where our method outperforms in recovery accuracy compared to contemporary approaches.
\end{itemize}
   
The organization of this paper proceeds as follows: The notations and preliminaries within the tensor $M$-product framework are provided in Section 2. Section 3 discusses the weight design for the low-rank regularizer and sparse term. In Section 4, we present the ADMM-based algorithm to solve the RLRTC model. Section 5 presents a detailed convergence analysis of the proposed method. Section 6 presents extensive experimental validation across various applications. Finally, the conclusions about our work are drawn in Section 7.

\section{Preliminaries}\label{SecPreliminaries}
The following notation is adopted throughout this paper: lowercase letters ($x$) denote scalars, uppercase letters (${X}$) indicate matrices, and calligraphic letters ($\mathcal{X}$) denote tensors. For a $d$-th order tensor {$\mathcal{X} \in \mathbb{R}^{n_1 \times n_2\times\cdots \times n_d}$} (or {$\mb{C}^{n_1\times n_2\times\cdots\times n_d})$}, we use $\mathcal{X}(i_1,i_2,\dots,i_d)$ to denote its $(i_1,i_2,\dots,i_d)$-th entry. We denote $\mathcal{X}(:, \dots, :, i_k, :, \dots, :)$ as the $i_k$-th slice along mode-$k$, obtained by fixing the $k$-th index to $i_k$ and varying all other indices. In particular, when $\mathcal{X} \in \mathbb{R}^{n_1 \times \cdots \times n_d}$, the matrix frontal slice $\mathcal{X}^{(j)} := \mathcal{X}(:, :, i_3, \dots, i_d)$ is indexed by 
$j = \sum_{a=4}^{d}(i_a - 1) \prod_{b=3}^{a-1} n_b + i_3,$ which flattens the multi-index $(i_3, \dots, i_d)$ into a single frontal slice index. We also use $\operatorname{bdiag}(\mc{X})\in\mb{R}^{n_1 n_3\cdots n_d \times n_2 n_3\cdots n_d  }$ as block diagonal matrix whose $j$-th block is $\mc{X}^{(j)}$ for $j=1,\dots,n_3\cdots n_d$.

\begin{definition}[Mode-$k$ product {\cite{Kolda09Rev}}]
For a $d$-th order tensor $\mathcal{X} \in \mathbb{R}^{n_1 \times n_2 \times \dots \times n_d}$ (or $\mathbb{C}^{n_1 \times n_2 \times \dots \times n_d}$), and a matrix $U_k \in \mathbb{C}^{n_k \times n_k}$, the mode-$k$ product $\mathcal{X} \times_k U_k$ is defined elementwise as
\[
(\mathcal{X} \times_k U_k)(i_1, \dots, i_{k-1}, i_k, i_{k+1}, \dots, i_d) = \sum_{l=1}^{n_k} \mathcal{X}(i_1, \dots, i_{k-1}, l, i_{k+1}, \dots, i_d) \cdot U_k(i_k, l),
\]
for $k = 1, 2, \dots, d$. This results in a tensor of the same size $n_1 \times n_2 \times \dots \times n_d$.
\end{definition}

Given an invertible matrix $M_k \in \mathbb{C}^{n_k\times n_k}$, the transform $\mathfrak{L}$ and its corresponding inverse $\mathfrak{L}^{-1}$ are defined as following~\cite{kilmer2021}:
\begin{align*}
\mathfrak{L}(\mathcal{X}) &= \hat{\mathcal{X}} := \mathcal{X} \times_3 M_3 \times_4 M_4 \times_5 \cdots \times_d M_d \in \mathbb{C}^{n_1 \times n_2 \times \cdots \times n_d}, \\
\mathfrak{L}^{-1}(\mathcal{X}) &:= \mathcal{X} \times_3 M_3^{-1} \times_4 M_4^{-1} \times_5 \cdots \times_d M_d^{-1} \in \mathbb{C}^{n_1 \times n_2 \times \cdots \times n_d},
\end{align*}
where $M_k$ can be chosen as the unnormalized DFT matrix, the DCT matrix, or any matrix satisfying $M_k M_k^H = M_k^H M_k = \alpha_k I_k$ ($\alpha_k \neq 0$), with $I_k$ denoting the $n_k \times n_k$ identity matrix and $H$ denoting the conjugate transpose.

\begin{definition}[$M$-product \cite{Kernfeld15}]
For two tensors $\mathcal{X} \in \mathbb{R}^{n_1 \times n_2 \times n_3\times\cdots
\times n_d}$ and $\mathcal{Y} \in \mathbb{R}^{n_2 \times l \times n_3\times \cdots\times n_d}$, their $M$-product under invertible transform $\mathfrak{L}$ is defined as:
\begin{equation*}
\mathcal{X} *_M \mathcal{Y} = \mathfrak{L}^{-1}(\hat{\mathcal{X}} \triangle \hat{\mathcal{Y}}), 
\end{equation*}
where $\triangle$ represents the face-wise product such that $\mc{Z}=\mc{X}*_M\mc{Y}\iff \hat{\mathcal{Z}}^{(j)} = \hat{\mathcal{X}}^{(j)}\hat{\mathcal{Y}}^{(j)}\iff\operatorname{bdiag}(\hat{\mc{Z}})=\operatorname{bdiag}(\hat{\mc{X}})\operatorname{bdiag}(\hat{\mc{Y}}) $ for each $j$-th frontal slice matrix. For the special case where $M$ is the unnormalized DFT matrix, this reduces to the standard $t$-product.
\end{definition}

\begin{definition}[$M$-SVD \cite{kilmer2021}]
Let $\mathcal{X} \in \mathbb{R}^{n_1 \times n_2 \times \dots \times n_d}$ be any tensor, then it can be decomposed as:
\begin{equation}\label{eq:msvd}
\mathcal{X} = \mathcal{U} *_M \mathcal{S} *_M \mathcal{V}^T,
\end{equation}
where $\mathcal{U} \in \mathbb{R}^{n_1 \times n_1 \times \dots \times n_d}$ and $\mathcal{V} \in \mathbb{R}^{n_2 \times n_2 \times \dots \times n_d}$ are orthogonal tensors, $\mathcal{S} \in \mathbb{R}^{n_1 \times n_2 \times \dots \times n_d}$ is an f-diagonal tensor (i.e., each frontal slices are diagonal) and $(.)^T$ denotes the transpose.
\end{definition}

\begin{definition}\cite{kilmer2021}
The tubal rank of tensor $\mathcal{X} \in \mathbb{R}^{n_1 \times n_2 \times \cdots \times n_d}$  is defined as:
\begin{equation*}
\text{rank}_{\text{$M$-SVD}}(\mathcal{X}) = \#\{i: \mathcal{S}(i,i,:,\cdots,:) \neq 0\},
\end{equation*}
where $\#$ counts the number of nonzero tubes in the f-diagonal tensor $\mathcal{S}$.
\end{definition}
\begin{definition}[Tensor norms\cite{kilmer11}]
Let $\mc{X} \in \mathbb{R}^{n_1 \times n_2 \times \dots \times n_d}$, then the Frobenius norm, $\ell_1$ norm, and infinity pre-norm are respectively defined as:
\begin{equation*}
    \|\mc{X}\|_F=\sqrt{\sum_{i_1,\dots,i_d} {\mathcal{X}(i_1,\dots,i_d)}^2}, \quad \|\mathcal{X}\|_1 = \sum_{i_1,\dots,i_d} |\mathcal{X}(i_1,\dots,i_d)|, \quad \|\mathcal{X}\|_{\infty} = \max_{i_1,\dots,i_d} |\mathcal{X}(i_1,\dots,i_d)|.
\end{equation*}
\end{definition}

\begin{definition}\label{def:wsn}
(Tensor weighted Schatten $p$-norm\cite{10496551}). Let $\mathcal{X} \in \mathbb{R}^{n_1 \times n_2 \times \dots \times n_d}$ be a tensor with $M$-SVD~\eqref{eq:msvd}, then for $p \in (0,1)$ its tensor weighted Schatten $p$-norm (quasi-norm) is defined as:
\begin{equation*}
  \|\mathcal{X}\|_{{\mc{W},S_p}} := \left(\frac{1}{\sqrt{c}}\sum_{l=1}^{n_3\cdots n_d} \|\hat{\mathcal{X}}^{(l)}\|_{{\mc{W}^{(l)},S_p}}^p\right)^{\frac{1}{p}}= \left(\frac{1}{\sqrt{c}}\sum_{l=1}^{n_3\cdots n_d} \sum_{i=1}^{\min\{n_1,n_2\}}\mc{W}(i,i,l)|\hat{\mc{S}}(i,i,l)|^p\right)^{\frac{1}{p}}, 
\end{equation*}
where $\|\cdot\|_{\mc{W},{S}_p}$ denotes the weighted Schatten $p$-norm with f-diagonal tensor $\mc{W}$ as weight, and $\hat{\mc{S}}(i,i,l)$ represents $i$-th the singular values of $\hat{\mathcal{X}}^{(l)}$. In the above definition, the constant $c = \alpha_1 \cdots \alpha_d$ arises from the choice of the transform matrices. Also, for $\mathcal{W}_d = \operatorname{diag}(\operatorname{bdiag}(\mathcal{W}))$, the norm $\|\mathcal{X}\|_{\mathcal{W}, S_p}$ is defined as
$\|\mathcal{X}\|_{\mathcal{W}, S_p} = \left( \frac{1}{\sqrt{c}} \left\| \operatorname{bdiag}(\hat{\mathcal{X}}) \right\|_{\mathcal{W}_d, S_p}^p \right)^{1/p},$
where $\|\cdot\|_{\mathcal{W}_d, S_p}$ denotes the weighted Schatten-$p$ norm for matrices.

\end{definition}

\begin{definition}
(Tensor weighted group $\ell_p$-norm) Let $\mathcal{X} \in \mathbb{R}^{n_1 \times n_2 \times \dots \times n_d}$ be any tensor, then its weighted group $\ell_p$ quasi-norm is defined as
\begin{equation*}
\|\mathcal{X}\|_{\mathcal{W},\ell_p} = \left(\frac{1}{\sqrt{c}}\sum_{l=1}^{n_3\cdots n_d} \|\hat{\mathcal{X}}^{(l)}\|_{\mathcal{W}^{(l)},\ell_p}^p\right)^{1/p},
\end{equation*}
where the weighted group norm for each frontal slice $l$ is:
\begin{equation*}
\|\hat{\mathcal{X}}^{(l)}\|_{\mathcal{W}^{(l)},\ell_p} = \|[\mathcal{W}(1,1,l)\|\hat{\mc{X}}(:,1,l)\|_2, \ldots, \mathcal{W}(n_2,n_2,l)\|\hat{\mc{X}}(:,n_2,l)\|_2]\|_{l_p}.
\end{equation*}
Here, $\|\cdot\|_{l_p}$ denotes the group $l_p$ norm~\cite{wang2017} of the $l^{th}$ frontal slice.
\end{definition}

\begin{definition}[Gradient tensor \cite{Wang202310990}]\label{def:gt}
Let $\mathcal{X} \in \mathbb{R}^{n_1 \times n_2 \times \dots \times n_d}$ be any tensor, then its gradient tensor $\mathcal{G}_k$ along mode-$k$ ($k=1,2,3$) is defined as:
\begin{equation*}
\mathcal{G}_k := \nabla_k(\mathcal{X}) = \mathcal{X} \times_k D_{n_k}, 
\end{equation*}
where $D_{n_k}$ is a row circulant matrix of vector $(-1, 1, 0, \ldots, 0)$.
\end{definition}
Based on Definition~\ref{def:gt}, the TV norm of a tensor $\mathcal{X}$ is typically expressed using the $\ell_1$-norm as {$\|\mathcal{X}\|_{\text{TV}} = \sum_{k \in \Gamma} \|\mathcal{G}_k\|_1$,} where $\Gamma$ denotes the set of directional modes. {A rank-consistent inequality between $\mathcal{X}$ and its gradient tensors is given by
\[
R - 1 \;\leq\; \operatorname{rank}_{\mathrm{M\text{-}SVD}}(\mathcal{G}_k) \;\leq\; R,
\]
where $R$ denotes the tubal rank of $\mathcal{X}$, reflecting the structural alignment of low-rank representations.} To jointly exploit spatial smoothness and low-rank correlation, the tensor (M-product based) correlated total variation (M-CTV) norm is defined as $\|\mathcal{X}\|_{\text{M-CTV}} = \frac{1}{\gamma} \sum_{k \in \Gamma} \|\mathcal{G}_k\|_{TNN}$, where $\gamma$ is the cardinality of $\Gamma$ and TNN is the convex tensor nuclear norm.

\section{ Proposed Low-rank and Sparse Regularization}
In this section, we will propose low-rank and sparse regularization for the RLRTC problem \eqref{eq:rltc}. In addition, the gradient tensors $\mathcal{G}_k$ preserve the low-rank and smoothness characteristics of the original tensor $\mc{X}$ due to the linearity of the mode-$k$ products. Following this, we present a nonconvex regularizer based on a tensor weighted Schatten $p$-norm applied to gradient tensors, promoting both low-rankness and smoothness simultaneously.

\begin{definition}
For a tensor $\mathcal{X} \in \mathbb{R}^{n_1 \times n_2 \times \dots \times n_d}$, its tensor weighted correlated total variation (TWCTV) norm is defined as:
\begin{equation*}
\|\mathcal{X}\|_{\text{TWCTV}} := \frac{1}{\gamma} \sum_{k \in \Gamma} \|\mathcal{G}_k\|_{\mathcal{W},S_p},
\end{equation*}
where $\Gamma$ is the set of smoothness directions with cardinality $\gamma$, and $\|.\|_{\mathcal{W},S_p}$ denotes the weighted Schatten $p$-norm.
\end{definition}
 Following the TWCTV definition, the set $\Gamma$ varies by data type: Color images use $\Gamma=\{1,2\}$ ($\gamma=2$) for spatial smoothness, while HSI and color videos use $\Gamma=\{1,2,3\}$ ($\gamma=3$) to include spectral/temporal smoothness. 
 The proposed tensor weighted Schatten $p$-norm is defined on transform-domain slices, and the
corresponding weights are constructed accordingly. Let
$\widehat{\mathcal{G}}_k = \mathcal{G}_k \times_3 M$ denote the tensor
obtained by applying the invertible linear transform
$M \in \mathbb{R}^{n_3 \times n_3}$ along the third mode.
For each $k \in \Gamma$ and frontal slice index $l$,
consider the singular value decomposition
\[
\widehat{\mathcal{G}}_k^{(l)}
= U^{(l)} \, \hat{\mathcal{S}}^{(l)} \, V^{(l)T},
\]
where $\hat{\mathcal{S}}^{(l)} = \operatorname{diag}
(\sigma_1^{(l)}, \ldots, \sigma_{n_{\min}}^{(l)})$
contains the singular values of the $l$-th frontal slice
$\widehat{\mathcal{G}}_k^{(l)}$, and
$n_{\min} = \min\{n_1, n_2\}$.

To construct adaptive weights, the singular values are first normalized.
Define the normalized singular values as
\[
\mathcal{S}_1(i,i,l)
=
\frac{\sigma_i^{(l)} \, m}
{\max_j \sigma_j^{(l)}},
\quad i = 1, \ldots, n_{\min},
\]
where $m>0$ is a scaling parameter controlling the steepness of the
sigmoid function. Based on these normalized values, the weight tensor
$\mathcal{W}$ is defined as an $f$-diagonal tensor whose diagonal entries
are given by the sigmoid function
\begin{equation}
\mathcal{W}(i,i,l)
=
\frac{1}{
1 + \exp\!\big(-\mathcal{S}_1(n_{\min}-i+1,\, n_{\min}-i+1,\, l)\big)
}.
\end{equation}
Here, $\mathcal{W}(i,i,l)$ corresponds to the weight associated with the
$i$-th singular value of the $l$-th transform-domain slice.
The reversal of the index $(n_{\min}-i+1)$ assigns larger weights to
larger singular values and smaller weights to smaller ones, promoting a
nonuniform penalization consistent with low-rank structure. The weights are determined in the transform domain and inversely follow an exponential dependence on the singular values. Due to its exponential form, this weight structure exhibits a faster decay than the commonly used log-sum weights \cite{qinwenjin2024, Chen2020}, enabling more effective discrimination between significant and noise-related singular values. The normalization of singular values ensures scale-invariant regularization, making the weighting scheme robust across different scales of data.\\

The ${M}$-CTV formulation has been proposed to explore low-rankness and smoothness together, but it is possible to establish a rigorous relationship between its weighted Schatten-$p$ regularization and the traditional total variation (TV) measure. The following proposition establishes a quantitative bound between the proposed TWCTV semi-norm and the conventional tensor total variation (TV) norm.
\begin{proposition}\label{prop:TWCTVbound}
Let $\mc{X}\in\mathbb{R}^{n_1\times\cdots\times n_d}$ and define its gradient tensors 
{$\mc{G}_k=\nabla_k(\mc{X})= \mathcal{X} \times_k D_{n_k}$} with invertible transform representations $\hat{\mc{G}}_k=\mathfrak{L}(\mc{G}_k)$. 
For each $k\in\Gamma$ and slice $l$, let $\hat{\mathcal{S}}(i,i,l)=\sigma_i^{(l)}$ $(i=1,\dots,n_{min})$ denote the singular values of 
$\widehat{\mc{G}}_k^{(l)}$, and assume the weights satisfy 
$0<w_{\min}\le \mc{W}(i,i,l)\le w_{\max}\le1$. 
Then, for $0<p<1$, the TWCTV seminorm satisfies
\begin{equation}\label{eq:tv_twctv_relation}
\|\mc{X}\|_{\mathrm{TV}}
\;\le\;
\|\mc{X}\|_{\mathrm{TWCTV}}
\;\le\;
C_p\sqrt{R}\,\|\mc{X}\|_{\mathrm{TV}},
\end{equation}
where $R$ is the tubal rank of $\mc{X}$ and 
$C_p=c^{\frac{1}{2p}}w_{\min}^{-\frac{1}{p}}N^{1-\frac{1}{p}}$ depends on 
the transform normalization $c$, the minimum weight $w_{\min}$, and the number of frontal slices $N=n_3\cdots n_d$.
\end{proposition}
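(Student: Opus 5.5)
The plan is to prove both inequalities one direction $k\in\Gamma$ at a time and then average over $\Gamma$ with the factor $\frac{1}{\gamma}$. Throughout, the argument works with the block-diagonal form $\operatorname{bdiag}(\widehat{\mc{G}}_k)$ from Definition~\ref{def:wsn}. Three facts drive the argument.
\begin{enumerate}[(i)]
\item The transform is a scaled unitary map, $M_kM_k^H=\alpha_kI_k$. Hence there is a Parseval-type identity $\sum_{l}\|\widehat{\mc{G}}_k^{(l)}\|_F^2=c\,\|\mc{G}_k\|_F^2$, with $c$ the normalization constant from Definition~\ref{def:wsn}.
\item Each transformed slice satisfies $\operatorname{rank}(\widehat{\mc{G}}_k^{(l)})\le \operatorname{rank}_{M\text{-SVD}}(\mc{G}_k)\le R$, by the rank-consistent inequality stated after Definition~\ref{def:gt}.
\item The scalar comparisons between $\ell_1$, $\ell_2$ and $\ell_p$ norms for $0<p<1$, both on singular-value vectors and on the vector of slice contributions.
\end{enumerate}

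\textbf{Upper bound.} For a fixed slice, the weights satisfy $\mc{W}(i,i,l)\le 1$. Since $\widehat{\mc{G}}_k^{(l)}$ has at most $R$ nonzero singular values, Hölder's inequality gives
\[
\sum_i \mc{W}(i,i,l)\,(\sigma_i^{(l)})^p\le R^{1-p/2}\,\|\widehat{\mc{G}}_k^{(l)}\|_F^{p}.
\]
Summing over the $N$ slices and applying $(\sum_l a_l^p)^{1/p}\le N^{1/p-1}\sum_l a_l$, and then $\sum_l a_l\le\sqrt{N}\,(\sum_l a_l^2)^{1/2}$, turns the slice Frobenius norms into $\sqrt{c}\,\|\mc{G}_k\|_F$ by fact (i). The last step is $\|\mc{G}_k\|_F\le\|\mc{G}_k\|_1$. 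Averaging over $k$ then yields a bound of the form $C\sqrt{R}\,\|\mc{X}\|_{\mathrm{TV}}$. The powers of $c$, $N$ and $w_{\min}$ should be collected at the end and matched to $C_p$. I expect $w_{\min}^{-1/p}$ to enter only if one normalizes by the weights, as in the lower bound.

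\textbf{Lower bound.} Here I would use $\mc{W}(i,i,l)\ge w_{\min}$ and $(\sum_i\sigma_i^p)^{1/p}\ge\sum_i\sigma_i=\|\widehat{\mc{G}}_k^{(l)}\|_*\ge\|\widehat{\mc{G}}_k^{(l)}\|_F$, where the first step holds for $p<1$. Summing over slices, using $(\sum_l a_l^p)^{1/p}\ge(\sum_l a_l^2)^{1/2}$ and fact (i), gives a lower bound by a multiple of $\|\mc{G}_k\|_F$. It then remains to compare $\|\mc{G}_k\|_F$ with $\|\mc{G}_k\|_1$.

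\textbf{Main obstacle.} This last comparison is the step I expect to be hardest. The entrywise $\ell_1$ norm dominates the Frobenius norm, not the reverse, unless a sparsity or dimensional factor is introduced. So the lower inequality as stated cannot follow from the norm comparisons above alone. My first attempt would be to exploit the fact that the TV norm is being compared against a quantity raised to the power $1/p$: combining the factor $w_{\min}^{1/p}c^{-1/(2p)}$ with $N^{1/p-1}$ and then rescaling the TV norm might absorb the gap. Failing that, I would prove the two-sided inequality with an explicit dimension factor $\sqrt{n_1\cdots n_d}$ on the left, or with the constant placed on the left as $C_p^{-1}$, which is the form the norm-equivalence argument naturally produces. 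I would track the constants so that the ratio of upper to lower constant matches $C_p\sqrt{R}$.
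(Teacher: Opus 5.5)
Your instinct about the left inequality is right, but the proposal cannot be completed. The statement is false on \emph{both} sides. The step you treat as routine, collecting the constants into $C_p\sqrt{R}$, is where your upper-bound argument breaks.

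\textbf{Left inequality.} No rescaling can rescue it, since both sides are positively $1$-homogeneous in $\mathcal{X}$. Take $N=1$ (so $M=[1]$, $c=1$), $\Gamma=\{1,2\}$, any admissible weights, and $\mathcal{X}=e_1e_1^T$. Since $D_{n}e_1=e_{n}-e_1$, you get $\mathcal{G}_1=(e_{n_1}-e_1)e_1^T$ and $\mathcal{G}_2=e_1(e_{n_2}-e_1)^T$. Each has entrywise $\ell_1$ norm $2$ and a single singular value $\sqrt{2}$, so $\|\mathcal{X}\|_{\mathrm{TV}}=4$ while $\|\mathcal{X}\|_{\mathrm{TWCTV}}\le\sqrt{2}$. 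Your alternative of putting $C_p^{-1}$ on the left fails on the same example, because $\mathcal{W}\equiv1$ gives $C_p=1$.

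The paper's proof reaches the left inequality only by using $\|\mathcal{G}_k\|_F\le\|\mathcal{G}_k\|_1$ in the wrong direction. Even then it ends with the constant $A_p=c^{-\frac{1}{2p}}w_{\min}^{1/p}$ rather than $1$. Your second fallback is the correct repair. Keeping the factor $\sqrt{c}$ from Parseval and the $1/\gamma$, your steps give $\|\mathcal{X}\|_{\mathrm{TWCTV}}\ge\gamma^{-1}c^{\frac12-\frac{1}{2p}}w_{\min}^{1/p}(n_1\cdots n_d)^{-1/2}\|\mathcal{X}\|_{\mathrm{TV}}$.

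\textbf{Right inequality.} Carry your own chain to the end: per-slice H\"older with at most $R$ nonzero singular values, the two slice-sum inequalities, Parseval, $\|\cdot\|_F\le\|\cdot\|_1$, and $\mathcal{W}\le w_{\max}$. It gives
\[
\|\mathcal{X}\|_{\mathrm{TWCTV}}\le\gamma^{-1}c^{\frac12-\frac{1}{2p}}\,w_{\max}^{1/p}\,(RN)^{\frac1p-\frac12}\,\|\mathcal{X}\|_{\mathrm{TV}}.
\]
This has $R^{\frac1p-\frac12}$ instead of $\sqrt{R}$, and $N^{\frac1p-\frac12}$ instead of $N^{1-\frac1p}<1$. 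No bookkeeping turns it into $C_p\sqrt{R}$, because the stated bound is false.

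Take $N=1$, $c=1$, $\mathcal{W}\equiv1$ (so $C_p=1$), $p=\tfrac14$, $n_1=n_2=6$, and $\mathcal{X}=e_3e_3^T+e_6e_6^T$, so $R=2$. Each $\mathcal{G}_k$ has two singular values $\sqrt2$ and $\ell_1$ norm $4$. Hence $\|\mathcal{X}\|_{\mathrm{TWCTV}}=(2\cdot 2^{1/8})^4=16\sqrt{2}>8\sqrt{2}=C_p\sqrt{R}\,\|\mathcal{X}\|_{\mathrm{TV}}$.

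For $N>1$ the factor $N^{1-\frac1p}$ fails even at $R=1$. Use the unnormalized DFT along mode $3$ (so $c=N$), $\mathcal{W}\equiv1$, and $\mathcal{X}$ a single spike in the first frontal slice. Then $\|\mathcal{X}\|_{\mathrm{TWCTV}}/(C_p\sqrt{R}\,\|\mathcal{X}\|_{\mathrm{TV}})=N^{\frac1p-1}/(2\sqrt2)$, which is unbounded in $N$.

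The paper's proof reaches $C_p\sqrt{R}$ only through two false steps:
\begin{itemize}
\item $n_{\min}\le R$, which is the reverse of what holds;
\item $\sum_l s_{k,l}\le N^{1-\frac1p}(\sum_l s_{k,l}^p)^{1/p}$, which fails as soon as only one $s_{k,l}$ is nonzero and $N>1$.
\end{itemize}
So the provable result is the two-sided estimate with the constants above.

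One caution remains for that corrected estimate. Your fact (ii) is safe for $k=1,2$, where $\widehat{\mathcal{G}}_k^{(l)}$ equals $D_{n_1}\widehat{\mathcal{X}}^{(l)}$ or $\widehat{\mathcal{X}}^{(l)}D_{n_2}^T$. For $k=3$ it needs $MD_{n_3}M^{-1}$ to be diagonal, which holds for the DFT. For a real DCT with $n_3\ge3$ it is not diagonal. Then $\widehat{\mathcal{G}}_3^{(l)}$ is a combination of several $\widehat{\mathcal{X}}^{(j)}$ and can have rank larger than $R$, so in that direction $R$ must be replaced by $n_{\min}$.
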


\begin{proof}
Let $s_{k,l}=(\tfrac{1}{\sqrt{c}}\sum_{i=1}^{n_{min}}\mc{W}(i,i,l)\sigma_i^{(l)p})^{1/p}$ 
denote the slicewise contribution to $\|\mc{G}_k\|_{\mc{W},S_p}$.  
Since $w_{\min}\le \mc{W}(i,i,l)\le w_{\max}$, we have
\[
c^{-\frac{1}{2p}}w_{\min}^{\frac{1}{p}}\|\widehat{\mc{G}}_k^{(l)}\|_{S_p}
\le s_{k,l}
\le
c^{-\frac{1}{2p}}w_{\max}^{\frac{1}{p}}\|\widehat{\mc{G}}_k^{(l)}\|_{S_p}.
\]
Using the standard $\ell_p$–$\ell_2$ relation for $0<p<2$,
\[
\|\boldsymbol\sigma^{(l)}\|_2 \le \|\boldsymbol\sigma^{(l)}\|_p 
\le n_{min}^{\frac{1}{p}-\frac{1}{2}}\|\boldsymbol\sigma^{(l)}\|_2,
\]
it follows that
\begin{equation}\label{eq:s_bound}
c^{-\frac{1}{2p}}w_{\min}^{\frac{1}{p}}\|\widehat{\mc{G}}_k^{(l)}\|_F
\le s_{k,l}
\le
c^{-\frac{1}{2p}}w_{\max}^{\frac{1}{p}}n_{min}^{\frac{1}{p}-\frac{1}{2}}\|\widehat{\mc{G}}_k^{(l)}\|_F.
\end{equation}
Combining slices results in
\[
\|\mc{G}_k\|_{\mc{W},S_p}
=\Big(\sum_{l=1}^{N}s_{k,l}^p\Big)^{\frac{1}{p}}
\ge
c^{-\frac{1}{2p}}w_{\min}^{\frac{1}{p}}\Big(\sum_{l=1}^{N}\|\widehat{\mc{G}}_k^{(l)}\|_F^p\Big)^{\frac{1}{p}}.
\]
Since $\|\widehat{\mc{G}}_k\|_F^2=c\,\|\mc{G}_k\|_F^2$ and 
$\|\mc{G}_k\|_F\le\|\mc{G}_k\|_1$, we obtain
\[
\|\mc{X}\|_{\mathrm{TWCTV}}
=\tfrac{1}{\gamma}\sum_{k\in\Gamma}\|\mc{G}_k\|_{\mc{W},S_p}
\ge
c^{-\frac{1}{2p}}w_{\min}^{\frac{1}{p}}\tfrac{1}{\gamma}\sum_{k\in\Gamma}\|\mc{G}_k\|_F
\ge
A_p\|\mc{X}\|_{\mathrm{TV}},
\]
establishing the left inequality in~\eqref{eq:tv_twctv_relation} with $A_p=c^{-\frac{1}{2p}}w_{\min}^{1/p}$.

For the upper bound, from~\eqref{eq:s_bound} we have
$s_{k,l}\le c^{-\frac{1}{2p}}w_{\max}^{\frac{1}{p}}n_{min}^{\frac{1}{p}-\frac{1}{2}}\|\widehat{\mc{G}}_k^{(l)}\|_F$.
Applying the $\ell_p$–$\ell_1$ relation 
$\sum_l s_{k,l}\le N^{1-\frac{1}{p}}(\sum_l s_{k,l}^p)^{1/p}$ 
and using $n_{min}\le R$ for all $l$ yields
\[
\sum_{l=1}^N\|\widehat{\mc{G}}_k^{(l)}\|_F
\le 
c^{\frac{1}{2p}}w_{\min}^{-\frac{1}{p}}N^{1-\frac{1}{p}}R^{\frac{1}{p}-\frac{1}{2}}
\|\mc{G}_k\|_{\mc{W},S_p}.
\]
Hence,
\[
\|\mc{G}_k\|_1
\le \sqrt{n_1n_2}\sum_{l=1}^N\|\mc{G}_k^{(l)}\|_F
\le 
C_p\sqrt{R}\,\|\mc{G}_k\|_{\mc{W},S_p},
\]
with $C_p=c^{\frac{1}{2p}}w_{\min}^{-\frac{1}{p}}N^{1-\frac{1}{p}}$. 
Summing over $k$ and dividing by $\gamma$ gives the right-hand inequality of~\eqref{eq:tv_twctv_relation}.
\end{proof}

This part presents how singular values are regulated during optimization using thresholding operations. First, we use a key lemma on generalized soft thresholding that forms the basis of our singular value thresholding scheme.
\begin{lemma} \cite{hu25convergence} 
For given $p \in (0,1)$ and $w > 0$, the optimization problem:
\begin{equation}
\min_x w|x|^p + \frac{1}{2}(x-y)^2
\end{equation}
is solved by the generalized soft-thresholding (GST) operator:
\begin{equation}\label{eq:gst}
\hat{x} = \text{GST}(y, w, p) = 
\begin{cases}
0 & \text{if }\; |y| \leq \delta, \\
\text{sign}(y)x^* & \text{if }\; |y| > \delta,
\end{cases}
\end{equation}
where $\delta = [2w(1-p)]^{\frac{1}{2-p}} + wp[2w(1-p)]^{\frac{p-1}{2-p}}$ is the threshold value, $\text{sign}(x)$ denotes the signum function, and $x^*$ can be obtained by solving $x^* - |y| + wp(x^*)^{p-1} = 0$ for $x^* > 0$.
\end{lemma}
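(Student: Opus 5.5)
The statement to prove is the GST lemma: for $p\in(0,1)$ and $w>0$, $\min_x f(x):=w|x|^p+\tfrac12(x-y)^2$ is solved by $\text{GST}(y,w,p)$. The plan is a direct scalar analysis, reducing to $y\ge 0$ and comparing the candidate minimizer $0$ with the largest positive stationary point.

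\textbf{Reduction and critical points.} First observe that $f(\operatorname{sign}(y)|x|)\le f(x)$ for every $x$: replacing $x$ by $\operatorname{sign}(y)|x|$ keeps $|x|^p$ unchanged and does not increase $(x-y)^2$. So we may assume $y\ge 0$ and minimize over $x\ge 0$. The case $y=0$ is trivial, with minimizer $x=0$.

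For $x>0$, set $g(x)=w x^p+\tfrac12(x-y)^2$. Then
\[
g'(x)=x-y+wpx^{p-1}, \qquad g''(x)=1-wp(1-p)x^{p-2}.
\]
Since $g''$ is increasing, $g'$ is convex on $(0,\infty)$, and $g'(x)\to+\infty$ both as $x\to0^+$ and as $x\to\infty$. Hence $g'$ has a unique minimizer $\tau=[wp(1-p)]^{1/(2-p)}$. As a consequence, $g$ has either no positive stationary point or at most two. When there are two, the larger one, $x^*>\tau$, is a local minimum, and the smaller one is a local maximum. So the global minimizer over $[0,\infty)$ is either $0$ or $x^*$.

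\textbf{Comparing $0$ and $x^*$ (the main step).} We need the sign of $g(x^*)-g(0)=w x^{*p}+\tfrac12x^{*2}-yx^*$. Substituting $y=x^*+wpx^{*p-1}$ from the stationarity equation gives
\[
g(x^*)-g(0)=x^{*p}\big(w(1-p)-\tfrac12 x^{*\,2-p}\big).
\]
This is $\le0$ exactly when $x^*\ge \bar x:=[2w(1-p)]^{1/(2-p)}$. Note that $\bar x>\tau$.

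The map $y\mapsto x^*(y)$ is the inverse of $h(x)=x+wpx^{p-1}$ restricted to $(\tau,\infty)$, where $h$ is strictly increasing. Therefore $x^*\ge\bar x$ if and only if $y\ge h(\bar x)=[2w(1-p)]^{1/(2-p)}+wp[2w(1-p)]^{(p-1)/(2-p)}=\delta$. This is exactly the threshold in the statement.

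For $y<h(\tau)$ there is no positive stationary point, $g'>0$ throughout, and the minimizer is $0$. This is consistent with the claimed formula since $h(\tau)<h(\bar x)=\delta$.

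Putting the cases together: for $|y|>\delta$ the unique minimizer is $\operatorname{sign}(y)x^*$, with $x^*$ the root of $x^*-|y|+wp(x^*)^{p-1}=0$ lying above $\tau$. For $|y|\le\delta$ the minimizer is $0$. At $|y|=\delta$ both $0$ and $\operatorname{sign}(y)\bar x$ are minimizers, and the formula's choice of $0$ is still a valid solution.

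The main obstacle is this last comparison. It requires identifying the correct root (the larger one) and showing that the threshold on $x^*$ translates monotonically into the threshold $\delta$ on $y$, which is where the monotonicity of $h$ on $(\tau,\infty)$ is used.
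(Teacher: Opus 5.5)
Your argument is correct and complete. The paper gives no proof of this lemma; it is quoted directly from \cite{hu25convergence}, so you are not competing with an argument in the text but supplying the one the citation stands for. Your route is the standard one. You reduce to $y\ge 0$, then use strict convexity of $g'$ on $(0,\infty)$ to get at most two positive stationary points, of which only the larger can be a minimizer besides $0$. You then combine the identity $g(x^*)-g(0)=x^{*p}\bigl(w(1-p)-\tfrac12 x^{*\,2-p}\bigr)$ with strict monotonicity of $h(x)=x+wpx^{p-1}$ on $(\tau,\infty)$, which converts the condition $x^*\ge\bar x$ into $|y|\ge h(\bar x)=\delta$.

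Compared with the bare citation, your self-contained version adds three things. First, it resolves an ambiguity in the statement. For $|y|>h(\tau)$ the equation $x-|y|+wpx^{p-1}=0$ has two positive roots, and only the larger one (the one exceeding $\tau$) is the minimizer. That larger root is also the limit of the fixed-point iteration $x\leftarrow |y|-wpx^{p-1}$ initialized at $|y|$, as in Algorithm~\ref{alg:GTSVT}. Second, it makes explicit that at $|y|=\delta$ the minimizer is not unique, so the formula selects one of two global minimizers. Third, as a by-product it shows that every nonzero output satisfies $|\hat x|>[2w(1-p)]^{1/(2-p)}$. This is the kind of lower bound the paper later imports from \cite{Lu2014} in Lemma~\ref{eqn:lemma2}.
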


The following theorem introduces the generalized tensor singular value thresholding (GTSVT) operator based on $M$-product, and establishes how it shrinks singular values during optimization.

\begin{theorem}\label{thm:proxsnn}
Let \(\mathcal{X} \in \mathbb{R}^{n_1 \times n_2 \times \dots \times n_d}\) be any tensor with the $M$-SVD 
$\mathcal{X} = \mathcal{U}*_M\mathcal{S}*_M\mathcal{V}^T,$ and \(\mathcal{W} \in \mathbb{R}^{n_1 \times n_2 \times \cdots \times n_3\cdots n_d}\) be a weight tensor composed of an order-3 f-diagonal tensor. For any \(\tau > 0\), \(0 < p < 1\), the generalized tensor singular value thresholding (GTSVT) operator is defined as
\begin{equation*}
    \mathfrak{D}_{\mathcal{W},\tau}(\mathcal{X}) = \mathcal{U} *_M \mathcal{S}_{\mathcal{W},\tau} *_M \mathcal{V}^T,
\end{equation*}
where
\begin{equation*}
   \mathcal{S}_{\mathcal{W},\tau} =  \big( \text{GST}(\mathcal{S}\times_3 M, \tau \mathcal{W}, p) \big)\times_3 M^{-1}, 
\end{equation*}
 and GST(generalized soft thresholding) is the element-wise shrinkage operator~\eqref{eq:gst}. Then, the GTSVT operator solves the following optimization problem:
\begin{equation*}
    \mathfrak{D}_{\mathcal{W},\tau}(\mathcal{X}) = \arg\min_{\mathcal{Y}} \left\{ \frac{1}{2} \|\mathcal{Y} - \mathcal{X}\|_F^2 + \tau \|\mathcal{Y}\|_{\mc{W},S_p}^p \right\}.
\end{equation*}

\end{theorem}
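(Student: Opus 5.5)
The plan is to move the problem to the transform domain, where it splits into independent matrix problems, one per frontal slice. Each of those is then solved by applying the scalar GST lemma to the singular values.

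\textbf{Step 1: decoupling.} Because $M_k M_k^H=\alpha_k I$, the transform $\mathfrak{L}$ is a scaled unitary map. Hence
\[
\|\mathcal{Y}-\mathcal{X}\|_F^2=\tfrac{1}{c}\|\hat{\mathcal{Y}}-\hat{\mathcal{X}}\|_F^2=\tfrac{1}{c}\sum_{l}\|\hat{\mathcal{Y}}^{(l)}-\hat{\mathcal{X}}^{(l)}\|_F^2
\]
(or with $\sqrt{c}$, depending on the convention; I will track the constant so it matches the $1/\sqrt{c}$ in Definition~\ref{def:wsn}). By Definition~\ref{def:wsn}, the penalty $\tau\|\mathcal{Y}\|_{\mathcal{W},S_p}^p=\tfrac{\tau}{\sqrt c}\sum_l\sum_i\mathcal{W}(i,i,l)\sigma_i(\hat{\mathcal{Y}}^{(l)})^p$ is also a sum over slices. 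Since $\mathfrak{L}$ is a bijection, minimizing over $\mathcal{Y}$ is the same as minimizing over $\hat{\mathcal{Y}}$. The objective therefore separates into $N=n_3\cdots n_d$ independent matrix problems:
\[
\min_{Y_l}\ \tfrac12\|Y_l-\hat{\mathcal{X}}^{(l)}\|_F^2+\tau'\sum_i w_i^{(l)}\sigma_i(Y_l)^p ,
\]
where $\tau'$ absorbs the normalization constants.

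\textbf{Step 2: the matrix problem.} Write $\hat{\mathcal{X}}^{(l)}=U\Sigma V^H$. By von Neumann's trace inequality, $\|Y-\hat X\|_F^2\ge\sum_i(\sigma_i(Y)-\sigma_i(\hat X))^2$, with equality when $Y$ shares the singular vectors $U,V$. So for any fixed singular values of $Y$, taking $Y=U\,\mathrm{diag}(y)V^H$ is optimal. The problem then reduces to
\[
\min_{y_1\ge\cdots\ge0}\ \sum_i\Big[\tfrac12(y_i-\sigma_i)^2+\tau' w_i y_i^p\Big].
\]
If we drop the ordering constraint, this separates coordinatewise. Each coordinate is solved by the GST lemma, giving $y_i=\mathrm{GST}(\sigma_i,\tau' w_i,p)$; since $\sigma_i\ge0$, the output is nonnegative.

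\textbf{Step 3: the ordering constraint (main obstacle).} We must show the unconstrained minimizers still satisfy $y_1\ge y_2\ge\cdots$. The weights here are assigned through the reversed index, so that larger singular values receive larger weights. The standard argument for weighted Schatten-$p$ or weighted nuclear norm minimization requires the weights to be nondecreasing in $i$ when the $\sigma_i$ are nonincreasing. With that ordering, the map $\sigma\mapsto\mathrm{GST}(\sigma,\tau' w,p)$ is nondecreasing in $\sigma$ and nonincreasing in $w$, so the outputs keep the order of the $\sigma_i$. I would first check that the sigmoid weights indeed have this monotonicity after the index reversal. If the weights turn out not to be monotone in the required direction, I would either assume it explicitly as a hypothesis or invoke the known result for weighted Schatten-$p$ proximal maps. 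A rearrangement argument, pairing sorted outputs with sorted weights, could then justify optimality of the sorted GST output.

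\textbf{Step 4: reassembly.} Stacking the slice solutions gives $\hat{\mathcal{Y}}^{(l)}=\hat{\mathcal{U}}^{(l)}\,\mathrm{GST}(\hat{\mathcal{S}}^{(l)},\tau\mathcal{W}^{(l)},p)\,\hat{\mathcal{V}}^{(l)H}$. Applying $\mathfrak{L}^{-1}$ and using the facewise definition of $*_M$ yields $\mathcal{U}*_M\mathcal{S}_{\mathcal{W},\tau}*_M\mathcal{V}^T=\mathfrak{D}_{\mathcal{W},\tau}(\mathcal{X})$. Finally, I would reconcile the scaling of $\tau$ coming from the $c$ and $\sqrt c$ factors with the statement's formula, absorbing it into $\tau$ if necessary.
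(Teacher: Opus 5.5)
Your route is the paper's route: pass to the transform domain, split into $n_3\cdots n_d$ independent slice problems, solve each by GST on the singular values, and apply $\mathfrak{L}^{-1}$. The paper only asserts that each slice problem ``can be effectively solved using an element-wise GST operator''. Your von Neumann reduction in Step~2 and the ordering check in Step~3 are exactly the justification it omits. Step~3 in particular is not a formality.

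\textbf{Step 3: make monotonicity a hypothesis and drop the fallback.} The argument needs $\mathcal{W}(1,1,l)\le\cdots\le\mathcal{W}(n_{\min},n_{\min},l)$ for every $l$. For a general f-diagonal $\mathcal{W}$, as in the statement, the GST formula is not the proximal map. Neither a rearrangement argument nor the known weighted Schatten-$p$ result (which also assumes nondecreasing weights) repairs this. Here is a counterexample.
\begin{itemize}
\item Take one slice with $c=1$, $\hat{\mathcal{X}}^{(1)}=I_2$, weights $w_1>w_2>0$, and $\tau$ with $\delta(\tau w_2)<1<\delta(\tau w_1)$. This is possible because $\delta(w)$ is a constant multiple of $w^{1/(2-p)}$.
\item The formula returns $\operatorname{diag}(0,y_2)$ with $y_2>0$. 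Its only nonzero singular value is paired with $w_1$.
\item Its objective is therefore $\tfrac12+\tfrac12(1-y_2)^2+\tau w_1 y_2^p>\tfrac12+\tfrac12=1$, which is the value at $\mathcal{Y}=0$.
\item The inequality is strict because $0$ is the unique minimizer of $\tfrac12(x-1)^2+\tau w_1|x|^p$ when $1<\delta(\tau w_1)$.
\end{itemize}
Under the monotone hypothesis, the GST monotonicity you invoke does hold. The threshold increases with $w$, and on the nonzero branch implicit differentiation of $x-y+wpx^{p-1}=0$ gives $\partial x^*/\partial y>0>\partial x^*/\partial w$.

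You also took the paper's sentence ``larger weights to larger singular values'' at face value. The actual formula $\mathcal{W}(i,i,l)=\bigl(1+\exp(-m\,\sigma^{(l)}_{n_{\min}-i+1}/\sigma^{(l)}_1)\bigr)^{-1}$ is nondecreasing in $i$, so the largest singular value receives the smallest weight. That is the direction you need, and it is the hypothesis Theorem~\ref{thm:stablization} imposes explicitly.

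\textbf{Steps 1 and 4: the constants do not wash out.} Use the paper's identities $\|\mathcal{A}\|_F=\frac{1}{\sqrt{c}}\|\operatorname{bdiag}(\hat{\mathcal{A}})\|_F$ and the $1/\sqrt{c}$ in Definition~\ref{def:wsn}. Then the objective equals $\frac{1}{c}\sum_l\bigl[\frac12\|\hat{\mathcal{Y}}^{(l)}-\hat{\mathcal{X}}^{(l)}\|_F^2+\tau\sqrt{c}\sum_i\mathcal{W}(i,i,l)\,\sigma_i(\hat{\mathcal{Y}}^{(l)})^p\bigr]$. So the correct threshold is $\tau\sqrt{c}\,\mathcal{W}$, and the displayed formula is exact only when $c=1$ (e.g.\ an orthonormal DCT). ``Absorbing it into $\tau$'' therefore amends the statement, and you should say so; the paper's proof silently drops these constants.

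\textbf{Complex transforms.} For a complex $M$ such as the DFT, real tensors map onto conjugate-symmetric slice families, not arbitrary slice tuples. So ``minimize over $\hat{\mathcal{Y}}$ slice by slice'' is a relaxation. It is exact because the slicewise minimizers inherit the conjugate symmetry, making the assembled $\mathcal{Y}$ real. This requires $\mathcal{W}$ to agree on conjugate slices, which the paper's singular-value-based weights do, since conjugate slices share singular values.
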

\begin{proof}
With the invertible transform $\mathfrak{L}$, the optimization problem can be rewritten equivalently as
\begin{equation}
\arg\min_{\mathcal{Y}} \sum_{l=1}^{n_3\cdots n_d}
\left\{ \frac{1}{2} \left\|\hat{\mathcal{Y}}^{(l)} - \hat{\mathcal{X}}^{(l)}\right\|_{F}^{2}
+ \tau \left\|\hat{\mathcal{Y}}^{(l)}\right\|_{\mathcal{W}^{(l)},S_p}^{p} \right\},
\end{equation}
where $\hat{\mathcal{Y}}^{(l)}$ and $\hat{\mathcal{X}}^{(l)}$ represent the $l$-th frontal slice in the transform domain. Since each transformed slice is independent, the optimization problem can be split into $n_3\cdots n_d$ independent subproblems. Each subproblem can be viewed as \begin{equation*}
    \arg\min_{\mathcal{Y}^{(l)}} \{ \frac{1}{2} \|\hat{\mathcal{Y}}^{(l)} - \hat{\mathcal{X}}^{(l)}\|_{F}^2 + \tau \|\hat{\mathcal{Y}}^{(l)}\|_{\mc{W}^{(l)},S_p}^p \},
\end{equation*} which can be effectively solved using an element-wise GST operator. The global solution \(\mathcal{X}\) is computed slice-by-slice in the transform domain, and the inverse transform \(\mathfrak{L}^{-1}\) is then applied to obtain the result in the original domain.
Therefore, the GTSVT operator $\mathfrak{D}_{\mathcal{W},\tau}(\mathcal{X})$ provides the global optimum for the original optimization problem.
\end{proof}

The following corollary introduces a connection between the weighted Schatten p-norm minimization and tensor dictionary learning for group sparse representation, which generalizes Corollary 1 of \cite{zha2020}. This highlights its effectiveness in improving the sparse recovery and low-rank approximation.
\begin{corollary}
    The tensor weighted Schatten-$p$ quasi-norm minimization  \begin{equation*}
        \min_{\mathcal{Y}} \left\{\frac{1}{2}\|\mathcal{X} - \mathcal{Y}\|_F^2 + \|\mathcal{Y}\|_{\mathcal{W},S_p}^p\right\} 
    \end{equation*}can be reformulated as a weighted group $\ell_p$ norm tensor dictionary learning problem:
\begin{equation}
\min_{(\mathcal{D},\mathcal{B})} \left\{\frac{1}{2}\|\mathcal{X} - \mathcal{D} *_M \mathcal{B}\|_F^2 + \|\mathcal{B}\|_{\mathcal{W}',\ell_p}^p\right\},
\end{equation}
where $\mathcal{D} = \mathcal{U}$ and $\mathcal{B} = \mathcal{S} *_M \mathcal{V}^T$. Here $\mathcal{S}$ is a f-diagonal tensor and the weights satisfy the relation:
\begin{equation*}
\mathcal{W}'(i,i,l) = \mathcal{W}(i,i,l)^{1/p}, 1 \leq i \leq \min\{n_1,n_2\}, 1 \leq l \leq n_3\cdots n_d.
\end{equation*}
\end{corollary}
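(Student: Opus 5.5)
The proof moves everything to the transform domain, where the $M$-product becomes a facewise matrix product, and reduces the statement slice by slice to the matrix result (Corollary 1 of \cite{zha2020}). We show that the two objectives take the same value whenever $\mathcal{Y}=\mathcal{D}*_M\mathcal{B}$ with $\mathcal{D}=\mathcal{U}$ orthogonal and $\mathcal{B}=\mathcal{S}*_M\mathcal{V}^T$, where $\mathcal{Y}=\mathcal{U}*_M\mathcal{S}*_M\mathcal{V}^T$ is the $M$-SVD of $\mathcal{Y}$. Minimizing over $\mathcal{Y}$ is then the same as minimizing over such pairs $(\mathcal{D},\mathcal{B})$.

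\textbf{Data-fidelity term.} Apply $\mathfrak{L}$. By definition of the $M$-product, $\widehat{\mathcal{D}*_M\mathcal{B}}^{(l)}=\hat{\mathcal{D}}^{(l)}\hat{\mathcal{B}}^{(l)}$. The fidelity terms coincide because $\mathcal{D}*_M\mathcal{B}=\mathcal{Y}$ exactly.

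\textbf{Regularizer term.} Take the $M$-SVD of $\mathcal{Y}$ in the transform domain:
\[
\hat{\mathcal{Y}}^{(l)}=\hat{\mathcal{U}}^{(l)}\hat{\mathcal{S}}^{(l)}\hat{\mathcal{V}}^{(l)T}.
\]
Then $\hat{\mathcal{B}}^{(l)}=\hat{\mathcal{S}}^{(l)}\hat{\mathcal{V}}^{(l)T}$.

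\begin{enumerate}
\item Because $\hat{\mathcal{S}}^{(l)}$ is diagonal and $\hat{\mathcal{V}}^{(l)}$ has orthonormal columns, the $i$-th row of $\hat{\mathcal{B}}^{(l)}$ has Euclidean norm $|\hat{\mathcal{S}}(i,i,l)|=\sigma_i^{(l)}$. The group $\ell_p$ norm of the definition must therefore be applied to the rows of $\mathcal{B}$ (equivalently the columns of $\mathcal{B}^T$, i.e.\ the groups indexed by the atoms of $\mathcal{D}$). This indexing convention must be fixed first, since the earlier definition is written with columns.
\item The weighted group term for slice $l$ is then
\[
\|\hat{\mathcal{B}}^{(l)}\|_{\mathcal{W}'^{(l)},\ell_p}^p=\sum_i \big(\mathcal{W}'(i,i,l)\,\sigma_i^{(l)}\big)^p=\sum_i \mathcal{W}'(i,i,l)^p\,(\sigma_i^{(l)})^p .
\]
\item Setting $\mathcal{W}'(i,i,l)=\mathcal{W}(i,i,l)^{1/p}$ makes this equal to $\sum_i\mathcal{W}(i,i,l)(\sigma_i^{(l)})^p=\|\hat{\mathcal{Y}}^{(l)}\|_{\mathcal{W}^{(l)},S_p}^p$.
\item Summing over $l$ with the common factor $1/\sqrt{c}$ gives $\|\mathcal{B}\|_{\mathcal{W}',\ell_p}^p=\|\mathcal{Y}\|_{\mathcal{W},S_p}^p$.
\end{enumerate}

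\textbf{Main obstacle: rigour of the equivalence.} The sticking point is the converse direction, namely that a general pair $(\mathcal{D},\mathcal{B})$ cannot do better. We restrict the dictionary to orthogonal $\mathcal{D}$, as in \cite{zha2020}. Then for each $\mathcal{Y}$ the feasible $\mathcal{B}$ is $\mathcal{D}^T*_M\mathcal{Y}$. Among such choices, the SVD basis yields exactly the weighted Schatten value, and this value is what the reformulation uses. With $\mathcal{D}=\mathcal{U}$ fixed in this way, the two optimization problems have identical objective values and the same minimizer $\mathcal{Y}^\ast=\mathcal{D}*_M\mathcal{B}^\ast$. That minimizer is given by Theorem~\ref{thm:proxsnn} with $\tau=1$.
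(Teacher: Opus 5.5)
The paper states this corollary without proof and only points to Corollary~1 of \cite{zha2020}, so your slice-wise reduction is the natural intended route, and your forward computation is correct. Your remark about the grouping is a genuine fix to the paper's definition. With column groups the identity fails: for $\hat{\mathcal{B}}^{(l)}=\operatorname{diag}(2,1)V^T$ with $V$ a $45^\circ$ rotation, both columns have norm $\sqrt{5/2}$, while the rows have norms $2$ and $1$. Restricting to orthogonal $\mathcal{D}$ is also necessary. Otherwise $(\mathcal{D},\mathcal{B})\mapsto(t\mathcal{D},\mathcal{B}/t)$ drives the penalty to zero as $t\to\infty$.

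The gap is the converse, which you flag and then settle by fiat. Suppose $\mathcal{D}=\mathcal{U}(\mathcal{Y})$ is tied to the unknown. Then the ``dictionary learning problem'' is just the Schatten problem rewritten in $M$-SVD coordinates, and nothing about minimizing over $(\mathcal{D},\mathcal{B})$ has been shown. If instead $\mathcal{D}$ ranges over orthogonal tensors, you need
\[
\|\mathcal{D}^T*_M\mathcal{Y}\|_{\mathcal{W}',\ell_p}^p\ \ge\ \|\mathcal{Y}\|_{\mathcal{W},S_p}^p \quad\text{for every orthogonal } \mathcal{D}.
\]
Saying that ``the SVD basis yields exactly the weighted Schatten value'' does not give this. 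The inequality is in fact false without an ordering hypothesis. Take one slice with $\hat{\mathcal{Y}}=\operatorname{diag}(2,1)$, weights $(1,\tfrac12)$, $p=\tfrac12$, and $\mathcal{D}$ the coordinate swap: the group term is $1+\tfrac12\sqrt{2}\approx1.71$, while the Schatten term is $\sqrt{2}+\tfrac12\approx1.91$.

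The missing argument runs as follows. The squared row norms of $\hat{\mathcal{D}}^{(l)H}\hat{\mathcal{Y}}^{(l)}$ form the diagonal of $\hat{\mathcal{D}}^{(l)H}\hat{\mathcal{Y}}^{(l)}\hat{\mathcal{Y}}^{(l)H}\hat{\mathcal{D}}^{(l)}$, so by Schur's theorem they are majorized by $((\sigma_i^{(l)})^2)_i$. The map $a\mapsto\sum_i\mathcal{W}(i,i,l)\,a_i^{p/2}$ is concave, so its minimum over the permutohedron is attained at a permutation of $((\sigma_i^{(l)})^2)_i$. By the rearrangement inequality, the identity permutation is optimal when $\mathcal{W}(1,1,l)\le\mathcal{W}(2,2,l)\le\cdots$, but not otherwise; in that case the dictionary problem effectively uses the weights re-sorted into non-descending order. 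So you must add the non-descending-weight hypothesis and prove this inequality. The paper's sigmoid weights satisfy this hypothesis, and Theorem~\ref{thm:stablization} assumes it.

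The reading of \cite{zha2020}, with $\mathcal{D}=\mathcal{U}$ fixed from the $M$-SVD of the data $\mathcal{X}$, does not avoid the issue. There the problem decouples over the rows of $\hat{\mathcal{S}}^{(l)}\hat{\mathcal{V}}^{(l)H}-\hat{\mathcal{B}}^{(l)}$ and gives $\mathcal{D}*_M\mathcal{B}^\ast=\mathfrak{D}_{\mathcal{W},1}(\mathcal{X})$. But identifying this with the Schatten minimizer via Theorem~\ref{thm:proxsnn} needs the same weight ordering. Von Neumann's trace inequality reduces the matrix problem to index-matched scalar GST problems only when the thresholded values stay sorted.
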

  The complete process of singular value thresholding based on weighted Schatten-$p$ norm is outlined in Algorithm~\ref{alg:GTSVT}.

\begin{algorithm}[htb]
\caption{Generalized tensor singular value thresholding }
\label{alg:GTSVT}
\begin{algorithmic}[1]
    \State \textbf{Input:} $\mathcal{X} \in \mathbb{R}^{n_1 \times n_2 \times \cdots \times n_d}$, $p \in (0,1)$, $\tau > 0$, invertible matrices $M_k$, and weight tensor $\mathcal{W}$.
    \State \textbf{Compute} $\hat{\mathcal{X}} = \mathfrak{L}(\mathcal{X})=\mc{X}\times_3 M_3 \times_4 M_4 \times_5 \cdots \times_d M_d$.
        \For {$l = 1, 2, \dots, n_3\cdots n_d$}
            \State $[\hat{\mathcal{U}}^{(l)}, \hat{\mathcal{S}}^{(l)}, \hat{\mathcal{V}}^{(l)}] = \operatorname{SVD}(\hat{\mathcal{X}}^{(l)})$;
            \State $\mathbf{w} = \operatorname{diag}(\mathcal{W}^{(l)});$
            \For {$i = 1$ to $\min(n_1,n_2)$}
                \State $y = \hat{\mathcal{S}}(i,i,l);$
                \State $\delta = [2\tau w_i(1-p)]^{\frac{1}{2-p}} + \tau w_ip[2\tau w_i(1-p)]^{\frac{p-1}{2-p}};$
                \If {$|y| \leq \delta$}
                    \State $\hat{\mathcal{S}}(i,i,l) = 0;$
                \Else
                    \State Update $x^*$ by solving $x - |y| + \tau w_i p(x)^{p-1} = 0$ (use fixed-point iteration)
                    \State $\hat{\mathcal{S}}(i,i,l) = \text{sign}(y)x^*;$
                \EndIf
            \EndFor
            \State ${\mathcal{C}}^{(l)} = \hat{\mathcal{U}}^{(l)}  \hat{\mathcal{S}}^{(l)}  (\hat{\mathcal{V}}^{(l)})^{T};$
        \EndFor
       
    \State \textbf{Compute} $\mathfrak{D}_{\mathcal{W},\tau}(\mathcal{X}) = \mathfrak{L}^{-1}(\mathcal{C})=\mc{C}\times_3 M_3^{-1} \times_4 M_4^{-1} \times_5 \cdots \times_d M_d^{-1}$.
    \State \textbf{Output:} $\mathfrak{D}_{\mathcal{W},\tau}(\mathcal{X}).$
\end{algorithmic}
\end{algorithm}

We now present our sparse regularization approach using weighted $\ell_1$ norms, where the key challenge lies in the weight selection. Rather than using fixed weights that may ignore the underlying data structure and yield suboptimal solutions, we employed an adaptive approach to learn the weights.
\begin{lemma}\cite{yuan09}\label{lemma:conj}
There exists a convex function $\varphi: \mathbb{R} \rightarrow \mathbb{R}$ such that the function $g(z,\sigma)=\exp(-\frac{\|z\|^2}{\sigma^2})$ admits the representation
\[
g(z, \sigma) = \sup_{p \in \mathbb{R}^-} \left( p \cdot \frac{\|z\|^2}{\sigma^2} - \varphi(p) \right),
\]
where for each fixed $z$, the supremum is achieved at $p = -g(z, \sigma)$.
\end{lemma}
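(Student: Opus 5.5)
The plan is to build $\varphi$ explicitly as a convex conjugate and then read off the maximizer. Write $t=\|z\|^2/\sigma^2\ge 0$ and $f(t)=e^{-t}$. Then $g(z,\sigma)=f(t)$, so it suffices to show that
\[
f(t)=\sup_{p\le 0}\bigl(pt-\varphi(p)\bigr)
\]
for a suitable convex $\varphi$, with the supremum attained at $p=-f(t)$. The function $f$ is convex and decreasing on $\mathbb{R}$. Hence it equals its own biconjugate by Fenchel--Moreau, and only slopes $p=f'(t)=-e^{-t}\in[-1,0)$ are relevant. This is why the supremum may be restricted to $\mathbb{R}^-$.

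\textbf{Step 1: define $\varphi$.} Set $\varphi(p)=f^*(p)=\sup_{t}\bigl(pt-e^{-t}\bigr)$ for $p<0$. Differentiating in $t$ gives $p+e^{-t}=0$, so $t=-\log(-p)$. This yields
\[
\varphi(p)=-p\log(-p)+p \qquad (p<0).
\]
On the boundary we take $\varphi(0)=0$, the limiting value. The function is convex because $\varphi''(p)=-1/p>0$ on $p<0$. If a finite convex function on all of $\mathbb{R}$ is really required, one can extend $\varphi$ by any convex function on $p>0$ that is large enough (for instance $+\infty$ in the extended sense, or a steep convex continuation). Since the supremum is taken only over $p\le 0$, the extension is irrelevant.

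\textbf{Step 2: verify the representation.} Fix $t\ge0$ and maximize $h(p)=pt-\varphi(p)=pt+p\log(-p)-p$ over $p<0$. We have $h'(p)=t+\log(-p)$ and $h''(p)=1/p<0$, so $h$ is strictly concave. The unique critical point is $-p=e^{-t}$, that is, $p=-e^{-t}=-g(z,\sigma)$. Substituting gives
\[
h\bigl(-e^{-t}\bigr)=-te^{-t}+te^{-t}+e^{-t}=e^{-t}=g(z,\sigma),
\]
which proves both the identity and the location of the maximizer.

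\textbf{Main obstacle.} The delicate point is the boundary and domain issue: $\varphi$ must be convex on all of $\mathbb{R}$, while the explicit formula lives on $p<0$. We must also check that $p=0$ never beats the interior maximizer. Indeed $h(0)=0<e^{-t}$, and $h(p)\to-\infty$ as $p\to-\infty$. Concavity of $h$ then settles this directly.
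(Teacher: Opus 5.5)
The paper gives no proof of this lemma; it cites it from \cite{yuan09}. Your argument is therefore a different, self-contained route. You identify $\varphi$ explicitly as the Fenchel conjugate of $t\mapsto e^{-t}$, namely $\varphi(p)=-p\log(-p)+p$ for $p<0$ with $\varphi(0)=0$. You then verify the supremum formula and the maximizer $p=-e^{-t}$ by a direct, strictly concave maximization. That computation is correct, including $h(0)=0<e^{-t}$ and $h(p)\to-\infty$. The Fenchel--Moreau remark is only motivation; Step 2 stands on its own. Your approach also gives something the citation does not: an explicit formula that makes Proposition~\ref{eqn:conjugate} concrete. There $\phi(w)=\eta(1+\varphi(-w))=\eta(1+w\log w-w)$, and minimizing $w|x|+\phi(w)$ visibly gives $w=e^{-|x|/\eta}$ and the value $\eta(1-e^{-|x|/\eta})$.

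One assertion in your proposal is false, and it bears on the literal wording of the lemma. You cannot extend $\varphi$ to $p>0$ by a finite ``steep convex continuation.'' The representation forces $\varphi(q)=-q\log(-q)+q$ on $[-1,0)$: insert the prescribed maximizer $q=-e^{-t}$, $t\ge 0$, into the identity. On that interval $\varphi'(q)=-\log(-q)\to+\infty$ as $q\to 0^-$. A finite convex function on $\mathbb{R}$ is continuous, so $\varphi(0)=0$. Its chord slopes $(\varphi(0)-\varphi(-h))/h=1-\log h$ would then have to stay below $\varphi(1)-\varphi(0)<\infty$, which fails as $h\to 0^+$. Hence no finite-valued convex $\varphi:\mathbb{R}\to\mathbb{R}$ satisfies the lemma as written. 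It holds only with $\varphi$ extended-real-valued, i.e.\ $\varphi=+\infty$ on $(0,\infty)$, which is exactly the true conjugate, or with $\varphi$ regarded as convex on $\mathbb{R}^-$ only. You should drop the ``steep continuation'' option and state explicitly that you prove the lemma in this sense. Nothing downstream in the paper is affected, since $\varphi$ is only ever evaluated on $\mathbb{R}^-$.
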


 The following proposition presents a key result for introducing non-convex sparsity-inducing penalties into optimization problems through conjugate function theory. 
\begin{proposition}\label{eqn:conjugate}
The function \( f_{\eta}(x) = \eta(1 - \exp(-|x|/\eta)) \) with $\eta>0$  has a convex conjugate function \(\phi\) such that  
\begin{equation*}
   f_{\eta}(x) = \min_{w \in \mathbb{R}_{+}} \big(w|x| + \phi(w)\big), 
\end{equation*}
and for any fixed value of $x$, the minimizer occurs at $w = \exp(-|x|/\eta)$.

\end{proposition}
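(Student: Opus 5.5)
The plan is to write down $\phi$ explicitly and verify the variational identity by elementary one-variable calculus, rather than going through Lemma~\ref{lemma:conj}. Set $t=|x|\ge 0$ and note that $f_\eta$ depends on $x$ only through $t$. As a function of $t$, $h(t)=\eta(1-e^{-t/\eta})$ is concave and nondecreasing on $[0,\infty)$. This suggests that $h$ is the lower envelope of the affine functions $t\mapsto wt+\phi(w)$ with $w\ge 0$, where $\phi$ is the (negative-sign) concave conjugate of $h$:
\[
\phi(w):=\sup_{t\ge 0}\big(h(t)-wt\big),\qquad w\in\mathbb{R}_+ .
\]
Being a pointwise supremum of functions affine in $w$, $\phi$ is automatically convex. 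This settles the convexity claim in the statement.

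The next step is to compute $\phi$ in closed form. For $0<w\le 1$, the map $t\mapsto h(t)-wt$ is concave. Its derivative $e^{-t/\eta}-w$ vanishes at $t^*=-\eta\log w\ge 0$. Substituting gives
\[
\phi(w)=\eta-\eta w+\eta w\log w .
\]
This function is convex, since $\phi''(w)=\eta/w>0$. For $w>1$, the derivative is negative on all of $[0,\infty)$, so the supremum is attained at $t=0$ and $\phi(w)=0$. One could also restrict attention to $w\in(0,1]$, which is enough for the minimization below.

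Then I verify the identity $h(t)=\min_{w\ge0}(wt+\phi(w))$. By construction $wt+\phi(w)\ge h(t)$ for every $w$, which is the Fenchel--Young inequality for this conjugate pair. To get equality, I plug in $w=e^{-t/\eta}\in(0,1]$:
\[
wt+\phi(w)=te^{-t/\eta}+\eta-\eta e^{-t/\eta}-te^{-t/\eta}=\eta\big(1-e^{-t/\eta}\big)=h(t).
\]
So the minimum is attained at $w=\exp(-|x|/\eta)$. For uniqueness, I differentiate $w\mapsto wt+\phi(w)$ on $(0,1)$ to get $t+\eta\log w$. By strict convexity, the only stationary point is $w=e^{-t/\eta}$. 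On $w\ge1$ the objective equals $wt\ge t\ge h(t)$, and this is strict unless $t=0$; at $t=0$ the candidate $w=1$ coincides with the claimed minimizer anyway.

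The only delicate point is the boundary bookkeeping: the endpoint behaviour near $w=0$, where $\phi(0^+)=\eta$ and the identity degenerates as $t\to\infty$, and the region $w>1$ where $\phi$ vanishes. Handling $w\ge1$ as a separate case, as above, resolves it. Alternatively, the result can be tied to Lemma~\ref{lemma:conj} by the substitution $p=-w$ and the change of variable relating $e^{-t/\eta}$ to the Gaussian form. The direct computation is cleaner, though, so it is the route I would take.
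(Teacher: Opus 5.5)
Your proof is correct, but it takes a different route from the paper's.

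The paper never writes $\phi$ down. It starts from the half-quadratic representation of Lemma~\ref{lemma:conj} for $\exp(-\|z\|^2/\sigma^2)$ and substitutes $|x|=\|z\|^2$, $\eta=\sigma^2$. It then rewrites $\eta\bigl(1-\sup_{p\le 0}(\cdot)\bigr)$ as an infimum and renames $w=-p$, $\phi(w)=\eta(1+\varphi(-w))$. So the convexity of $\phi$ and the location $w=\exp(-|x|/\eta)$ of the optimum are both inherited from the cited lemma.

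You instead define $\phi(w)=\sup_{t\ge 0}(h(t)-wt)$ directly, which makes convexity automatic as a supremum of affine functions. You compute the closed form $\phi(w)=\eta(1-w+w\log w)$ on $(0,1]$. You then verify the identity through the Fenchel--Young inequality together with evaluation at $w=e^{-t/\eta}$.

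Your argument is self-contained and gives an explicit $\phi$. That is useful if one actually wants to evaluate the reformulated objective, which contains $\phi(\mathcal{W}_{\mathcal{E}})$. It also settles uniqueness of the minimizer, which the paper does not address. The paper's argument is shorter, but it rests entirely on the external lemma and leaves $\phi$ implicit.

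One small caveat concerns $x=0$. With your extension $\phi\equiv 0$ on $(1,\infty)$, every $w\ge 1$ attains the minimum there, not just $w=1$. So uniqueness holds only for $x\neq 0$, unless you restrict to $w\in[0,1]$ as you suggest. This does not affect the proposition, which only asserts that $w=\exp(-|x|/\eta)$ is a minimizer.
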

\begin{proof}
Let \( |x| = \|z\|^2 \) and \( \eta = \sigma^2 \). Then, from Lemma~\ref{lemma:conj} , we define \( g(z,\sigma) = \exp(-|x| / \eta) \). It follows that
\begin{align*}
f_{\eta}(x) &=\eta(1-g(z,\sigma))\\&= \eta \left(1 - \sup_{p \in \mathbb{R}_{-}} \left(p \frac{|x|}{\eta} - \varphi(p)\right)\right) \\
&= \inf_{p \in \mathbb{R}_{-}} \left(-p |x| + \eta (1 + \varphi(p))\right),
\end{align*}
where the infimum is obtained at \( p = -g(z,\sigma) = -\exp(-|x|/\eta) \).
For convenience , we define \( w = -p \) and \( \phi(w) = \eta (1 + \varphi(-w)) \). Substituting these definitions, we rewrite \( f_{\eta}(x) \) as
\begin{align*}
f_{\eta}(x) &= \min_{w \in \mathbb{R}_{+}} (w |x| + \phi(w)),
\end{align*}

\noindent where \(\phi(w)\) corresponds to the convex conjugate function. Thus, we can conclude that when $w = \exp(-|x|/\eta)$, a minimum value is achieved for any fixed x.
\end{proof}

The function $f_{\eta}(x) = \eta(1 - \exp(-|x|/\eta))$ is effective for sparsity promotion because it approximates the $\ell_0$ norm through its asymptotic behavior: $f_{\eta}(x) \rightarrow |x|$ as $\eta \rightarrow \infty$, and $f_{\eta}(x) \rightarrow 0$ as $\eta \rightarrow 0$. Furthermore, $f_\eta$ exhibits a faster decay than existing weights based on log-sum \cite{qinwenjin2024} and $l_p$ \cite{kong18} regularization.
Given a tensor $\mathcal{E}$, we define its sparse regularization term according to Proposition \ref{eqn:conjugate} as:
\begin{align}\label{eq:sparse}
\mathfrak{R}(\mathcal{E}) 
&= \sum_{i_1,\dots,i_d} f_{\eta}[\mathcal{E}(i_1,\dots, i_d)] \notag\\
&= \sum_{i=1}^{n_1} \sum_{j=1}^{n_2} \sum_{l=1}^{n_3\cdots n_d} f_{\eta}[\mathcal{E}(i,j,l)] \notag\\
&= \min_{\mathcal{W}_{\mathcal{E}}} \sum_{i=1}^{n_1} \sum_{j=1}^{n_2} \sum_{l=1}^{n_3\cdots n_d} \left( \mathcal{W}_{\mathcal{E}}(i,j,l) |\mathcal{E}(i,j,l)| + \phi[\mathcal{W}_{\mathcal{E}}(i,j,l)] \right).
\end{align}

\noindent The weight tensor $\mathcal{W}_{\mc{E}}$ is updated by:
\begin{equation*}
\mathcal{W}_{\mc{E}} = G_{\mathcal{E}}(\mathcal{E}),
\end{equation*}
where $G_{\mathcal{E}}: \mathbb{R}^{n_1 \times n_2 \times n_3  \cdots  n_d} \rightarrow \mathbb{R}^{n_1 \times n_2 \times n_3 \cdots  n_d}$ is defined by:$(G_{\mathcal{E}}(\mathcal{E}))(i,j,l) = \exp(-|\mathcal{E}(i,j,l)|/\eta)$. The parameter $\eta$ can be adaptively chosen as $c_{\mathcal{E}} \cdot \text{mean}(\{|\mathcal{E}(i,j,l)|: (i,j,l) \in [n_1] \times [n_2] \times [n_3\cdots n_d]\}),$
where $c_{\mathcal{E}}>0$ is a constant and {$[n] := \{1,2,\ldots,n\}$ denotes the index set of integers from $1$ to $n$.} 
Thus, it provides adaptive weighting that naturally enhances sparsity by applying stronger penalties to smaller values while preserving larger ones.

\section{Optimization Models}

This section presents the models for RLRTC using the nonconvex regularizer defined in the previous section. Let $\mathcal{M} \in \mathbb{R}^{n_1 \times n_2 \times \dots \times n_d}$ represent the observed tensor possessing global low-rank and local smoothness properties. Following common practice in tensor completion, we employ Bernoulli random sampling, $\Omega \sim \text{Ber}(p)$ to observe entries. By integrating a nonconvex regularizer and modeling the corruption explicitly, the RLRTC problem is formulated as:
\begin{equation}\label{eq:RTCmodel}
\min_{\mathcal{X},\mathcal{E}} \frac{1}{\gamma} \sum_{k\in\Gamma} \|\mathcal{G}_k\|_{\mathcal{W},S_p} + \lambda \, \mathfrak{R}(\mathcal{E}) \quad \text{subject to} \quad P_{\Omega}(\mathcal{X} + \mathcal{E}) = P_{\Omega}(\mathcal{M}),
\end{equation}
where $P_{\Omega}(\cdot)$ is the projection operator onto the observed entries defined by the index set $\Omega$, satisfying the idempotent property $P_{\Omega}^2 = P_{\Omega}$. $\|\cdot\|_{\mathcal{W},S_p}$ denotes a nonconvex weighted Schatten-$p$ norm applied to gradient tensors $\mathcal{G}_k$, and $\mathfrak{R}(\mathcal{E})$ is a sparse regularizer (e.g., weighted $\ell_1$ norm). This model reduces to standard tensor completion (TC) when $\mathcal{E} = 0$, and to tensor robust PCA (TRPCA) when $\Omega$ is the entire index set.

\begin{figure}[ht!]
\centering
\begin{tikzpicture}[
  x=1cm, y=1cm,
  tensor/.style={draw, thick, fill=cyan!40, minimum width=1.8cm, minimum height=1.8cm},
  sparse/.style={draw, thick, pattern=dots, minimum width=1.8cm, minimum height=1.8cm},
  arrow/.style={->, thick, >=stealth, shorten >=2pt, shorten <=2pt},
  dashedbox/.style={draw, dashed, thick, rounded corners},
  every node/.style={inner sep=1pt} 
]
\path[use as bounding box] (-8,-6.5) rectangle (8,6);

\node[anchor=center] (img1) at (-6.5, 3.5) {\includegraphics[width=2cm]{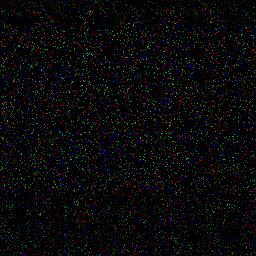}};
\node[below=0.15cm of img1, color=blue1, font=\small] {Incomplete color image};

\node[tensor, inner sep=0pt] (lowrank1) at (1.5, 3.5) {};
\draw[thick, fill=cyan!40] (0.6, 2.6) -- (0.6, 4.4) -- (2.4, 4.4) -- (2.4, 2.6) -- cycle;
\draw[thick, fill=cyan!30] (2.4, 4.4) -- (2.8, 4.8) -- (2.8, 3.0) -- (2.4, 2.6);
\draw[thick, fill=cyan!20] (0.6, 4.4) -- (1.0, 4.8) -- (2.8, 4.8) -- (2.4, 4.4);
\node[font=\Large] at (lowrank1.center) {$\mathcal{X}$};
\node[below=0.3cm of lowrank1, color=blue1, font=\small] {Low-rank tensor};

\node[anchor=center] (out1) at (6.5, 3.5) {\includegraphics[width=2.2cm]{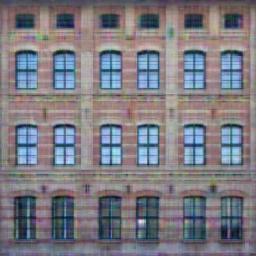}};
\node[below=0.1cm of out1, color=blue1, font=\small] {Completion image};

\node[anchor=center] (img2) at (-6.5, 0) {\includegraphics[width=2cm]{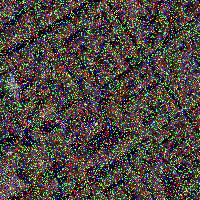}};
\node[below=0.25cm of img2, color=blue1, font=\small] {Noisy hyperspectral image};

\node[tensor, pattern=dots, inner sep=0pt] (observed) at (-3, 0) {};
\draw[thick, fill=cyan!40] (-3.9, -0.9) -- (-3.9, 0.9) -- (-2.1, 0.9) -- (-2.1, -0.9) -- cycle;
\draw[thick, fill=cyan!30] (-2.1, 0.9) -- (-1.7, 1.3) -- (-1.7, -0.5) -- (-2.1, -0.9);
\draw[thick, fill=cyan!20] (-3.9, 0.9) -- (-3.5, 1.3) -- (-1.7, 1.3) -- (-2.1, 0.9);
\node[font=\Large] at (observed.center) {$\mathcal{M}$};

\pgfmathsetseed{42}

\foreach \i in {1,...,35} {
  \pgfmathsetmacro{\randx}{-3.85 + rnd*1.7}  
  \pgfmathsetmacro{\randy}{-0.85 + rnd*1.7}
  \fill (\randx, \randy) circle (0.05);
}

\foreach \i in {1,...,20} {
  \pgfmathsetmacro{\u}{rnd}  
  \pgfmathsetmacro{\v}{rnd}  
  \pgfmathsetmacro{\dotx}{-3.9 + \u*1.8 + \v*0.4}
  \pgfmathsetmacro{\doty}{0.9 + \v*0.4}
  \fill (\dotx, \doty) circle (0.05);
}

\foreach \i in {1,...,20} {
  \pgfmathsetmacro{\u}{rnd}  
  \pgfmathsetmacro{\v}{rnd}  
  \pgfmathsetmacro{\dotx}{-2.1 + \v*0.4}
  \pgfmathsetmacro{\doty}{-0.9 + \u*1.8 + \v*0.4}
  \fill (\dotx, \doty) circle (0.05);
}

\node[font=\Large] at (observed.center) {$\mathcal{M}$};
\node[below=0.35cm of observed, color=blue1, font=\small] {Observed tensor};

\node[tensor, inner sep=0pt] (lowrank2) at (1.5, 0) {};
\draw[thick, fill=cyan!40] (0.6, -0.9) -- (0.6, 0.9) -- (2.4, 0.9) -- (2.4, -0.9) -- cycle;
\draw[thick, fill=cyan!30] (2.4, 0.9) -- (2.8, 1.3) -- (2.8, -0.5) -- (2.4, -0.9);
\draw[thick, fill=cyan!20] (0.6, 0.9) -- (1.0, 1.3) -- (2.8, 1.3) -- (2.4, 0.9);
\node[font=\Large] at (lowrank2.center) {$\mathcal{X}$};
\node[below=0.3cm of lowrank2, color=blue1, font=\small] {Low-rank tensor};

\node[anchor=center] (out2) at (6.5, 0) {\includegraphics[width=2.2cm]{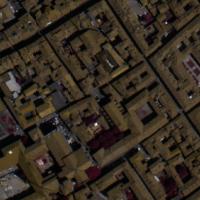}};
\node[below=0.1cm of out2, color=blue1, font=\small] {Recovered image};

\node[anchor=center, opacity=1] (frame4) at (-6.9, -3.2) {\includegraphics[width=2cm, height=1cm]{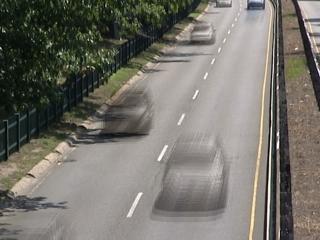}};
\node[anchor=center, opacity=1] (frame3) at (-6.7, -3.5) {\includegraphics[width=2cm, height=1cm]{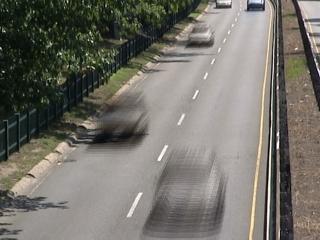}};
\node[anchor=center, opacity=1] (frame2) at (-6.5, -3.8) {\includegraphics[width=2cm, height=1cm]{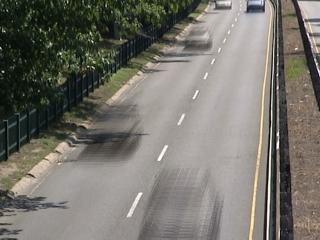}};
\node[anchor=center, opacity=1.0] (frame1) at (-6.3, -4.1) {\includegraphics[width=2cm, height=1cm]{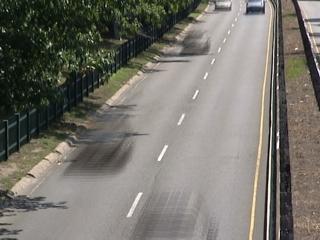}};
\node[below=0.2cm of frame1, color=blue1, font=\small] {Color video frames};

\node[sparse, inner sep=0pt] (sparse) at (1.5, -3.5) {};
\draw[thick, fill=cyan!40] (0.6, -4.4) -- (0.6, -2.6) -- (2.4, -2.6) -- (2.4, -4.4) -- cycle;
\draw[thick, fill=cyan!30] (2.4, -2.6) -- (2.8, -2.2) -- (2.8, -4.0) -- (2.4, -4.4);
\draw[thick, fill=cyan!20] (0.6, -2.6) -- (1.0, -2.2) -- (2.8, -2.2) -- (2.4, -2.6);

\pgfmathsetseed{42}  

\foreach \i in {1,...,25} {  
  \pgfmathsetmacro{\randx}{0.65 + rnd*1.7}
  \pgfmathsetmacro{\randy}{-4.35 + rnd*1.7}
  \fill (\randx, \randy) circle (0.05);
}

\foreach \i in {1,...,15} {
  \pgfmathsetmacro{\u}{rnd}
  \pgfmathsetmacro{\v}{rnd}
  \pgfmathsetmacro{\dotx}{0.6 + \u*1.8 + \v*0.4}
  \pgfmathsetmacro{\doty}{-2.6 + \v*0.4}
  \fill (\dotx, \doty) circle (0.05);
}

\foreach \i in {1,...,15} {
  \pgfmathsetmacro{\u}{rnd}
  \pgfmathsetmacro{\v}{rnd}
  \pgfmathsetmacro{\dotx}{2.4 + \v*0.4}
  \pgfmathsetmacro{\doty}{-4.4 + \u*1.8 + \v*0.4}
  \fill (\dotx, \doty) circle (0.05);
}

\node[font=\Large] at (sparse.center) {$\mathcal{E}$};
\node[below=0.3cm of sparse, color=blue1, font=\small] {Sparse tensor};

\node[anchor=center, opacity=1] (fg4) at (6.1, -3.2) {\includegraphics[width=2cm, height=1cm]{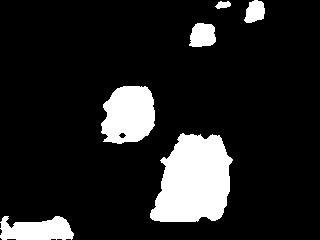}};
\node[anchor=center, opacity=1] (fg3) at (6.3, -3.5) {\includegraphics[width=2cm, height=1cm]{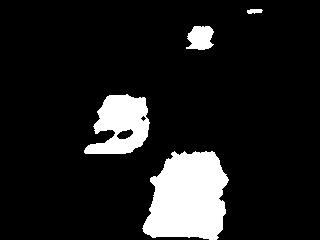}};
\node[anchor=center, opacity=1] (fg2) at (6.5, -3.8) {\includegraphics[width=2cm, height=1cm]{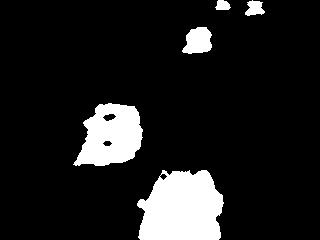}};
\node[anchor=center, opacity=1.0] (fg1) at (6.7, -4.1) {\includegraphics[width=2cm, height=1cm]{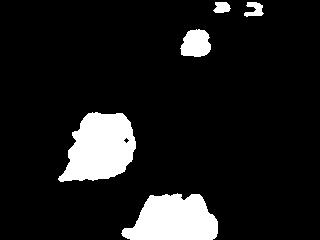}};
\node[below=0.2cm of fg1, color=blue1, font=\small] {Foreground};


\node[dashedbox, minimum width=3cm, minimum height=7cm] (rpca_box) at (1.5, -1.75) {};
\node at (1.5, -1.75) {\Large $+$};


\draw[arrow] (img1.east) -- ++(2.4,0) -- (observed.north);

\draw[arrow] (observed.east) -- (lowrank1.west);

\node[rotate=45, color=blue1] (N) at (-.7,2.7) {Tensor completion}; 

\draw[arrow] (lowrank1.east) -- (out1.west);

\draw[arrow] (img2.east) -- (observed.west);

\draw[arrow] (observed.east) -- (rpca_box.west);
\node[rotate=-45, color=blue1] (N) at (-1,-1.5) {Tensor RPCA}; 

\draw[arrow] (lowrank2.east) -- (out2.west);

\draw[arrow] (frame2.east) -- ++(2.4,0) -- (observed.south);

\draw[arrow] (sparse.east) -- (fg2.west);

\end{tikzpicture}

\caption{Flowchart of the model for tensor completion and TRPCA.}
   \label{fig:flowchart}
\end{figure}
The formulations in Equation~\eqref{eq:RTCmodel} are inherently nonconvex, and the variables are highly coupled, making direct optimization challenging. To overcome these issues, we adopt the ADMM framework from \cite{boyd2011,wenjin2022tnn}. The ADMM effectively handles nonconvex problems by decomposing them into simpler subproblems that often admit closed-form solutions while maintaining convergence properties under suitable conditions.

\subsection{Robust low rank tensor completion (RLRTC) }
{In this subsection, an ADMM-based optimization framework is presented to solve the proposed nonconvex model~\eqref{eq:RTCmodel}:
\begin{equation}\label{eq:trpca1}
\min_{\mathcal{X},\mathcal{E}} \frac{1}{\gamma}\sum_{k=1}^{\gamma}\|\mathcal{G}_k\|_{\mc{W},S_p} + \lambda \mathfrak{R}(\mathcal{E}) \quad \text{subject to} \quad P_{\Omega}(\mathcal{M}) =\mathcal{X} + \mathcal{E},~\mathcal{G}_k = \nabla_k(\mathcal{X}), ~k \in\Gamma,
\end{equation}
where $\mathcal{M} = P_{\Omega}(\mathcal{X} + \mathcal{E})$ denotes the observed tensor on the sampling set $\Omega$, $\mathcal{X}$ is the unknown clean tensor, and $\mathcal{E}$ represents the noise or sparse error component.}
Also note that when the noise tensor $\mathcal{E} = 0$, we replace the term $\mathcal{E}$ with an auxiliary variable $\mathcal{K}$, and the regularization term $\lambda \, \mathfrak{R}(\mathcal{E})$ with $\delta_{\Omega}(\mathcal{K})$. Here, $\mathcal{K}$ models the absence of entries in $\Omega^{\perp}$ using the indicator function $\delta_{\Omega}(\cdot)$, which is defined as
\[
\delta_{\Omega}(\mathcal{K}) =
\begin{cases}
0, & \text{if } P_{\Omega}(\mathcal{K}) = 0, \\
+\infty, & \text{otherwise}.
\end{cases}
\]

\noindent Now, for the weighted $\ell_1$ norm used in sparse regularization of $\mathcal{E}$, denoted as $\|\mathcal{E}\|_{\mathcal{W}_{\mathcal{E}},1}$, we have:

\begin{equation}
\|\mathcal{E}\|_{\mc{W}_{\mc{E}},1} = \min_{\mc{W}_{\mc{E}}} \sum_{i=1}^{n_1}\sum_{j=1}^{n_2}\sum_{l=1}^{n_3\cdots n_d}(\mathcal{W}_{\mc{E}}(i,j,l)|\mathcal{E}(i,j,l)| + \phi(\mathcal{W}_{\mc{E}}(i,j,l))),
\end{equation}
where the minimum is attained at $\mathcal{W}_{\mc{E}}(i,j,l) = \exp(-|\mathcal{E}(i,j,l)|/\eta)$.
Thus, by integrating these equations and combining the definition of weighted schatten p-norm and weighted $\ell_1$, the problem~\eqref{eq:trpca1} can be reformulated as:
\begin{equation}\label{eq:trpca3}
\begin{aligned}
\min_{\mathcal{X},\mathcal{E},\mathcal{G}_k,\mathcal{W}_{\mc{E}}}  \frac{1}{\gamma}\sum_{k\in\Gamma} \|\mathcal{G}_k\|_{\mc{W},S_p} + \lambda\|\mathcal{E}\|_{\mathcal{W}_{\mc{E}},1} 
+ \phi(\mathcal{W}_{\mc{E}}) \quad \text{subject to} \quad P_{\Omega}(\mathcal{M}) = \mathcal{X} + \mathcal{E}, \mathcal{G}_k = \nabla_k(\mathcal{X}),
\end{aligned}
\end{equation}
where $\phi(\mathcal{W})$ is specified such that $(\phi(\mathcal{W}_{\mc{E}}))(i,j,l) = \phi(\mathcal{W}_{\mc{E}}(i,j,l))$.

\noindent Accordingly, the scaled form of the Lagrangian function for Equation~\eqref{eq:trpca3} is expressed as:
\begin{equation}\label{eq:trpca2}
\begin{aligned}
L(\mathcal{X}, \mathcal{G}_k, \mathcal{E}, \mathcal{W}_{\mc{E}}, \mathcal{Y}_k, {\Upsilon}) &= \frac{1}{\gamma}\sum_{k=1}^{\gamma} \|\mathcal{G}_k\|_{\mc{W},S_p} + \lambda\|\mathcal{E}\|_{\mathcal{W}_{\mc{E}},1} + \sum_{k=1}^{\gamma} \frac{\mu}{2}\|\nabla_k(\mathcal{X}) - \mathcal{G}_k + \frac{\mathcal{Y}_k}{\mu}\|_F^2  \\
&+\phi(\mathcal{W}_{\mc{E}})
+ \frac{\mu}{2}\|P_{\Omega}(\mathcal{M}) - \mathcal{X} - \mathcal{E} + \frac{\Upsilon}{\mu}\|_F^2 ,
\end{aligned}
\end{equation}
where ${\Upsilon}$ and $\mathcal{Y}_k$ represent the Lagrange multipliers, and $\mu$ is a positive penalty parameter. Each variable can be updated alternately in the scheme of the ADMM framework.


\noindent\textbf{Step 1:} We derive the $\mathcal{X}$-subproblem to update $\mathcal{X}$ while keeping all other variables fixed in~\eqref{eq:trpca2}:
\begin{equation}\label{eq:trpca31}
\mathcal{X}_{t+1}
=
\arg\min_{\mathcal{X}}
\sum_{k=1}^{\gamma}
\frac{\mu_t}{2}
\left\|
\nabla_k(\mathcal{X})
-
\mathcal{G}_{t(k)}
+
\frac{\mathcal{Y}_{t(k)}}{\mu_t}
\right\|_F^2
+
\frac{\mu_t}{2}
\left\|
P_{\Omega}(\mathcal{M})
-
\mathcal{X}
-
\mathcal{E}_t
+
\frac{\Upsilon_t}{\mu_t}
\right\|_F^2 .
\end{equation}

\noindent Taking the derivative of the objective function in~\eqref{eq:trpca31} with respect to $\mathcal{X}$ and setting it to zero yields
\begin{equation}
P_{\Omega}(\mathcal{M})
-
\mathcal{E}_t
+
\frac{\Upsilon_t}{\mu_t}
+
\sum_{k=1}^{\gamma}
\nabla_k^T
\!\left(
\mathcal{G}_{t(k)}
-
\frac{\mathcal{Y}_{t(k)}}{\mu_t}
\right)
=
\left(
\mathcal{I}
+
\sum_{k=1}^{\gamma}
\nabla_k^T \nabla_k
\right)
(\mathcal{X}),
\end{equation}

\noindent where $\nabla_k^T(\cdot)$ denotes the transpose operator of $\nabla_k(\cdot)$. 
Note that the difference operation on tensors is linear via tensor--tensor product. 
Following~\cite{wang2008newalternating}, we can apply multi-dimensional FFT, which diagonalizes $\nabla_k(\cdot)$'s corresponding difference tensors $\mathcal{D}_k$, enabling efficient computation of the optimal solution of~\eqref{eq:trpca31} based on the convolution theorem of Fourier transforms. Therefore, the closed-form solution for updating $\mathcal{X}$ can be expressed as
\begin{equation*}
\mathcal{X}_{t+1}
=
\mathcal{F}^{-1}
\left(
\frac{
\mathcal{F}\!\left(
P_{\Omega}(\mathcal{M})
-
\mathcal{E}_t
+
\frac{\Upsilon_t}{\mu_t}
+
\mathcal{H}
\right)
}{
\mathbf{1}
+
\sum_{k=1}^{\gamma}
\mathcal{F}(\mathcal{D}_k)^{H}
\odot
\mathcal{F}(\mathcal{D}_k)
}
\right),
\end{equation*}
where $\mathcal{H}
=
\sum_{k=1}^{\gamma}
\nabla_k^T
\!\left(
\mathcal{G}_{t(k)}
-
\frac{\mathcal{Y}_{t(k)}}{\mu_t}
\right).$

Here, $\mathcal{F}$ and $\mathcal{F}^{-1}$ denote the multi-dimensional fast Fourier transform (FFT) and its inverse, respectively, $(\cdot)^{H}$ denotes the conjugate transpose (Hermitian), $\mathbf{1}$ is a tensor with all entries equal to one, and $\odot$ represents the Hadamard (element-wise) product.

\noindent\textbf{Step 2:} We update \(\mathcal{G}_k\) by keeping the other variables fixed:
\begin{equation}\label{eq:trpca41}
\mathcal{G}_{t+1(k)} = \arg\min_{\mathcal{G}_k} \frac{1}{\gamma\mu_t}\sum_{k=1}^{\gamma} \|\mathcal{G}_k\|_{\mc{W},S_p} + \frac{1}{2}\|\nabla_k(\mathcal{X}_{t+1}) - \mathcal{G}_k + \frac{\mathcal{Y}_{t(k)}}{\mu_t}\|_F^2.
\end{equation}
The update for \(\mathcal{G}_{t+1(k)}\) is then obtained using the proximal operator as defined in Theorem~\ref{thm:proxsnn}:
\begin{equation}
\mathcal{G}_{t+1(k)} = \mathfrak{D}_{\mc{W},\frac{1}{\gamma\mu_t}}\left(\nabla_k(\mathcal{X}_{t+1}) + \frac{\mathcal{Y}_{t(k)}}{\mu_t}\right).
\end{equation}

\noindent\textbf{Step 3:} Next, we update $\mathcal{E}$ by fixing the other variables:
\begin{equation}\label{eq:trpca5}
\mathcal{E}_{t+1}
=
\arg\min_{\mathcal{E}}
\frac{1}{2}
\left\|
P_{\Omega}(\mathcal{M})
-
\mathcal{X}_{t+1}
-
\mathcal{E}
+
\frac{\Upsilon_t}{\mu_t}
\right\|_F^2
+
\frac{\lambda}{\mu_t}
\|\mathcal{E}\|_{\mc{W}_{\mc{E}},1}.
\end{equation}

\noindent This problem admits an explicit solution, which can be obtained using the tensor soft-thresholding operator $S$, defined element-wise by
\[
S_\tau(x)
=
\mathrm{sign}(x)\,
\max\!\left(|x|-\tau,\,0\right).
\]

\noindent Accordingly, the $(i,j,l)$-th entry of $\mathcal{E}_{t+1}$ is given by
\[
\mathcal{E}_{t+1}(i,j,l)=
\begin{cases}
S_{\frac{\lambda}{\mu_t}\,\mathcal{W}_{\mathcal{E}(t)}(i,j,l)}
\!\left(
P_{\Omega}(\mathcal{M})(i,j,l)
-
\mathcal{X}_{t+1}(i,j,l)
+
\frac{\Upsilon_t(i,j,l)}{\mu_t}
\right),
& \text{if } (i,j,l)\in\Omega,\\[8pt]
0, & \text{otherwise}.
\end{cases}
\]
Next we update the weight $\mc{W}_\mc{E}$ by $\mc{W}_{\mc{E}(t+1)} =G_\mc{E}(\mc{E}_t).$

\noindent\textbf{Step 4:} 
Also note that when $\mathcal{E} = 0$, Step 3 is omitted, and $\mathcal{E}$ is replaced by an auxiliary variable $\mathcal{K}$ to enforce the data consistency constraint over the observed set. Then we update $\mathcal{K}$ using the equation
\begin{equation}\label{eq:trpca61}
\mathcal{K}_{t+1} = P_{\Omega}(\mathcal{M}) - \mathcal{X}_{t+1} + \frac{\mathcal{Y}_{t(k)}}{\mu}, \quad \text{with} \quad P_{\Omega}(\mathcal{K}_{t+1}) = 0.
\end{equation}

\noindent\textbf{Step 5:} Finally, we update the Lagrange multipliers as
\begin{equation}\label{eq:trpca7}
\begin{aligned}
\mathcal{Y}_{t+1(k)}
&=
\mathcal{Y}_{t(k)}
+
\mu_t
\left(
\nabla_k(\mathcal{X}_{t+1})
-
\mathcal{G}_{t+1(k)}
\right),
\quad k \in \Gamma, \\[6pt]
\Upsilon_{t+1}
&=
\Upsilon_t
+
\mu_t
\left(
P_{\Omega}(\mathcal{M})
-
\mathcal{X}_{t+1}
-
\mathcal{E}_{t+1}
\right).
\end{aligned}
\end{equation}
For low-rank tensor completion, we replace $\mathcal{E}_{t+1}$ with $\mathcal{K}_{t+1}$ in the above formulation.
The complete ADMM optimization steps are presented in Algorithm~\ref{alg:TWCTVTRPCA}.

\begin{algorithm}[htb]

\caption{Algorithm for the TWCTV based RLRTC}
\label{alg:TWCTVTRPCA}
\begin{algorithmic}[1]
    \State \textbf{Input:} Observed tensor $\mathcal{M}$, observation set $\Omega$, priori set $\Gamma$, parameter $\lambda > 0$.
    \State \textbf{Initialize:} $\mathcal{X}_0$, $\mathcal{G}_{0(k)}=\mathcal{E}_{0}=\mathcal{Y}_{0(k)}=\Upsilon_{0}=\mathcal{O}$, $\mc{W}_{\mc{E}}=\bf{1}$, $\rho=1.1$, $\mu_{0}=1 \times 10^{-4}$, $\mu_{\max }=1 \times 10^{10}$, $\epsilon=1 \times 10^{-8}$, $c_{\mathcal{E}}=2$, $t=0$.
    \While{not converged}
        \State Update $\mathcal{X}_{t+1}$ using $\mc{X}$ subproblem \eqref{eq:trpca3};
        \State Update $\mathcal{G}_{t+1(k)}$ ($k\in\Gamma$) by weighted Schatten-$p$ norm based proximal operator used in \eqref{eq:trpca41};
        \State \textbf{if} $\Omega$ is the full index set \textbf{then}
    \State \quad Update $\mathcal{E}_{t+1}$ by solving subproblem~\eqref{eq:trpca5};
    \State \quad Set $\eta = c_{\mathcal{E}} \cdot \text{mean} \left( \left\{ |\mathcal{E}_{t+1}(i,j,l)| : (i,j,l) \in [n_1] \times [n_2] \times [n_3 \cdots n_d] \right\} \right);$
    \State \quad Update weights $\mathcal{W}_{\mathcal{E}(t+1)}$ by: $(\mathcal{W}_{\mathcal{E}(t+1)})(i,j,l) = \exp \left( -\frac{|\mathcal{E}_{t+1}(i,j,l)|}{\eta} \right)$;
    \State \textbf{else if} $\mathcal{E} = 0$ \textbf{then}
    \State \quad Update $\mathcal{K}_{t+1}$ by solving subproblem~\eqref{eq:trpca61};
        \State Update Lagrange multipliers $\mathcal{Y}_{t+1(k)}, \Upsilon_{t+1}$ by~\eqref{eq:trpca7};
           
        \State Update penalty parameter: $\mu_{t+1} = \min(\rho\mu_t, \mu_{\max})$;
             
            \State $t = t + 1$.
        \State \textbf{Check convergence:}
    \State $\|\mathcal{X}_{t+1} - \mathcal{X}_t\|_{\infty} \leq \epsilon$,
    \State $\|\mathcal{E}_{t+1} - \mathcal{E}_t\|_{\infty} \leq \epsilon$ \quad (\textit{or} $\|\mathcal{K}_{t+1} - \mathcal{K}_t\|_{\infty} \leq \epsilon$),
    \State $\|P_{\Omega}(\mathcal{M}) - \mathcal{X}_{t+1} - \mathcal{E}_{t+1}\|_{\infty} \leq \epsilon$ \quad (\textit{or} $\|P_{\Omega}(\mathcal{M}) - \mathcal{X}_{t+1} - \mathcal{K}_{t+1}\|_{\infty} \leq \epsilon$).

    \EndWhile
    \State \textbf{Output:} Recovered tensors $\mathcal{X}$ and $\mathcal{E}$.
\end{algorithmic}
\end{algorithm}
\subsection{Complexity analysis}
 The first step involves computing the transformed representation of the tensor, typically via DCT or any invertible transform along the third mode. This transformation requires a time complexity of $\mathcal{O}(n_1 n_2 n_3\cdots n_d \log(n_3\cdots n_d))$ or $\mathcal{O}(n_1 n_2 (n_3\cdots n_d )^2)$  , as each of the $n_1 n_2$ mode-3 tubes is transformed individually. Following the transformation, the weighted Schatten-$p$ norm minimization is conducted through a slice-wise generalized singular value thresholding (GSVT) operation. For each of the $n_3\cdots n_d$ frontal slices of size $n_1 \times n_2$, this operation involves an SVD and generalised soft-thresholding step, with an overall complexity of $\mathcal{O}(n_{\text{max}} n_{\text{min}}^2 n_3\cdots n_d)$, where $n_{\text{max}} = \max(n_1, n_2)$ and $n_{\text{min}} = \min(n_1, n_2)$. The sparse component $\mathcal{E}$ is updated via a weighted $\ell_1$ norm proximal operator, which admits a closed-form soft-thresholding solution. As it operates elementwise over the entire tensor, its computational cost is $\mathcal{O}(n_1 n_2 n_3 \cdots n_d)$. The update of the gradient tensors $\nabla_k(\mathcal{X})$, the adaptive weight tensor $\mathcal{W}_{\mathcal{E}}$, and the Lagrange multipliers $\mathcal{Y}_k$ and $\Upsilon$ each contribute an additional cost of $\mathcal{O}(n_1 n_2 n_3\cdots n_d)$. Therefore, the total time complexity of a single ADMM iteration for the proposed Algorithm~\ref{alg:TWCTVTRPCA} is 
\[
\mathcal{O}(n_1 n_2 n_3 \cdots n_d \log(n_3 \cdots n_d)) + \mathcal{O}(n_{\text{max}} n_{\text{min}}^2 n_3 \cdots n_d),
\]
when using DCT or DFT (due to fast transform implementation), and 
\[
\mathcal{O}(n_1 n_2 (n_3 \cdots n_d)^2) + \mathcal{O}(n_{\text{max}} n_{\text{min}}^2 n_3 \cdots n_d),
\]
when using any general invertible transform.

\section{ Convergence Analysis }

This section analyzes the convergence behavior of the sequences produced by Algorithm~\ref{alg:TWCTVTRPCA}. We first present several key lemmas concerning the boundedness of these sequences, which form the foundation for convergence analysis.
\begin{lemma}\label{eqn:lemma2}
    The sequences $\{\mc{Y}_{t(k)}\}_{t=1}^{\infty}$,$\{\mc{W}_{\mc{E}(t)}\}_{t=1}^{\infty}$,$\{{\Upsilon}_{t}\}_{t=1}^{\infty}$ generated by the Algorithm \ref{alg:TWCTVTRPCA} are bounded.
\end{lemma}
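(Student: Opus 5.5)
The weight sequence is handled directly, and the two multiplier sequences through the first-order optimality conditions of the $\mathcal{G}_k$- and $\mathcal{E}$-subproblems.

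\textbf{The weights $\mathcal{W}_{\mathcal{E}(t)}$.} Every entry has the form $\exp(-|\mathcal{E}_t(i,j,l)|/\eta)$ with $\eta>0$, or equals $1$ at initialization. Hence $0<\mathcal{W}_{\mathcal{E}(t)}(i,j,l)\le 1$ for all $t$, and so $\|\mathcal{W}_{\mathcal{E}(t)}\|_F\le\sqrt{n_1n_2\cdots n_d}$. If $\eta$ could vanish because $\mathcal{E}_{t+1}=0$, we adopt the convention that the weight equals $1$; the bound still holds.

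\textbf{The multipliers $\mathcal{Y}_{t(k)}$.} Use the optimality condition of the $\mathcal{G}_k$-subproblem \eqref{eq:trpca41}. By Step 5,
\[
\mathcal{Y}_{t+1(k)}=\mathcal{Y}_{t(k)}+\mu_t\bigl(\nabla_k(\mathcal{X}_{t+1})-\mathcal{G}_{t+1(k)}\bigr),
\]
so $\mathcal{G}_{t+1(k)}$ minimizes $\frac{1}{\gamma}\|\mathcal{G}\|_{\mathcal{W},S_p}+\frac{\mu_t}{2}\|\mathcal{G}-\nabla_k(\mathcal{X}_{t+1})-\mathcal{Y}_{t(k)}/\mu_t\|_F^2$. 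Equivalently, $\mathcal{Y}_{t+1(k)}\in\frac{1}{\gamma}\partial\|\mathcal{G}_{t+1(k)}\|_{\mathcal{W},S_p}$, understood as a limiting (Fréchet) subdifferential. By Theorem~\ref{thm:proxsnn}, this condition decouples slicewise in the transform domain and then entrywise over singular values.

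The GST operator \eqref{eq:gst} has a threshold $\delta>0$. Every nonzero output $x^*$ therefore satisfies $x^*\ge x_{\min}>0$, where $x_{\min}$ depends only on $p$ and on $\tau w\ge w_{\min}/(\gamma\mu_t)$. Consider each singular value component $y$ of $\widehat{\nabla_k(\mathcal{X}_{t+1})}+\widehat{\mathcal{Y}}_{t(k)}/\mu_t$, with output $\hat x$.
\begin{itemize}
\item If $\hat x\neq0$, the component of $\mu_t(y-\hat x)$ equals $\frac{w p}{\gamma}\hat x^{p-1}$, which is bounded because $\hat x$ is bounded away from zero.
\item If $\hat x=0$, then $|y|\le\delta$ and the component equals $\mu_t|y|\le\mu_t\delta$. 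With $\tau=1/(\gamma\mu_t)$, $\delta$ scales as $\tau^{1/(2-p)}$, so $\mu_t\delta\sim\mu_t^{(1-p)/(2-p)}$, which may grow.
\end{itemize}
Mapping back with $\mathfrak{L}^{-1}$ costs only the constant factor $c$ from the unitary-up-to-scaling transform. I expect this step to be the main obstacle.

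To handle it, I would use the fact that the GST output satisfies $\hat x=0$ or $\hat x\ge(2\tau w(1-p))^{1/(2-p)}$. In the nonzero case the gradient term is then $wp\,\hat x^{p-1}/\gamma\le C\,\mu_t^{(1-p)/(2-p)}$. So neither case gives a uniform bound directly. I would therefore follow the standard practice in this literature (e.g.\ \cite{hu25convergence}) and establish instead that $\|\mathcal{Y}_{t+1(k)}\|_F\le C\mu_t^{(1-p)/(2-p)}$. Since $\mu_t\le\mu_{\max}$ is capped in Algorithm~\ref{alg:TWCTVTRPCA}, this is a finite uniform constant $C\mu_{\max}^{(1-p)/(2-p)}$. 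That gives boundedness in the literal sense of the lemma, and the rate will be needed in the subsequent convergence theorem.

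\textbf{The multiplier $\Upsilon_t$.} The argument is analogous, using the optimality condition of \eqref{eq:trpca5}: $\Upsilon_{t+1}\in\lambda\,\mathcal{W}_{\mathcal{E}(t)}\odot\partial\|\mathcal{E}_{t+1}\|_1$ on $\Omega$. Since $|\mathcal{W}_{\mathcal{E}(t)}|\le1$, this gives $\|\Upsilon_{t+1}\|_\infty\le\lambda$ on $\Omega$. Off $\Omega$, $\mathcal{E}_{t+1}=0$, so I would bound that part using the $\mathcal{X}$-update optimality relation together with the boundedness of $\mathcal{Y}_{t(k)}$ just established. In the completion case, where $\mathcal{K}$ replaces $\mathcal{E}$, the $\mathcal{K}$-update forces $P_{\Omega^\perp}(\Upsilon_{t+1})=0$, and the $\Omega$ part is controlled by the same relation.
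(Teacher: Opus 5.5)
Your handling of $\mathcal{W}_{\mathcal{E}(t)}$, of $\mathcal{Y}_{t(k)}$, and of $\Upsilon_t$ on $\Omega$ is correct and follows the paper's route. The three ingredients are the same: weights $\exp(-|\cdot|/\eta)\in(0,1]$, the stationarity of the GST step in the $\mathcal{G}_k$-subproblem \eqref{eq:trpca41}, and the soft-thresholding residual bound $|\Upsilon_{t+1}|\le\lambda\mathcal{W}_{\mathcal{E}(t)}\le\lambda$ entrywise.

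For $\mathcal{Y}_{t(k)}$ you are more careful than the paper. The paper uses only the nonzero-output equation, with a lower bound $\Gamma_1$ from \cite{Lu2014} treated as independent of $t$, and so implicitly assumes no singular value is thresholded to zero. You observe that $|y-\mathrm{GST}(y)|\le\delta$ in both cases, with $\mu_t\delta\le C\mu_t^{(1-p)/(2-p)}$. This makes explicit that the uniform bound rests on the cap $\mu_t\le\mu_{\max}$. Under the assumption $1/\mu_t\to0$ used later in Theorem~\ref{thm:stablization}, the argument gives no uniform bound.

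The step that would fail is your plan for $\Upsilon$ off $\Omega$ and in the completion branch. Combining the stationarity condition of the $\mathcal{X}$-subproblem \eqref{eq:trpca31} with the multiplier updates gives
\begin{equation*}
\Upsilon_{t+1}=\sum_{k\in\Gamma}\nabla_k^T\mathcal{Y}_{t+1(k)}+\mu_t\sum_{k\in\Gamma}\nabla_k^T\bigl(\mathcal{G}_{t+1(k)}-\mathcal{G}_{t(k)}\bigr)-\mu_t\bigl(\mathcal{E}_{t+1}-\mathcal{E}_t\bigr),
\end{equation*}
because the $\mathcal{X}$-step uses the old $\mathcal{G}_{t(k)}$. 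The same identity holds with $\mathcal{K}$ in place of $\mathcal{E}$.

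On entries where $\mathcal{E}_{t+1}=\mathcal{E}_t=0$ the last term drops, but the middle term remains. Boundedness of $\mathcal{Y}_{t(k)}$ says nothing about $\mu_t\|\mathcal{G}_{t+1(k)}-\mathcal{G}_{t(k)}\|_F$. Even with the cap you would need bounded $\mathcal{G}_{t(k)}$, which amounts to bounding $\nabla_k(\mathcal{X}_t)$. That is the content of Lemma~\ref{eqn:lemma3}, which is itself proved from the present lemma, so this route is circular.

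For the $\mathcal{E}$-branch you do not need this step at all. Algorithm~\ref{alg:TWCTVTRPCA} runs that branch only when $\Omega$ is the full index set, so your $\Omega$-argument already bounds all of $\Upsilon_t$; this is the only case the paper's proof treats.

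In the $\mathcal{K}$-branch, $\Upsilon_{t+1}\in\partial\delta_\Omega(\mathcal{K}_{t+1})=\{\mathcal{Z}: P_{\Omega^\perp}(\mathcal{Z})=0\}$, so no subgradient restricts its $\Omega$-part. Neither your sketch nor the paper supplies a bound there. You should either restrict the claim to the TRPCA branch or give a separate argument for completion.
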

\begin{proof}
    By definition, the sequence $\{\mc{W}_{\mc{E}(t)}\}_{t=1}^{\infty}$ is bounded by 1.
    Now, we use the properties of tensor Frobenius norm $\|\mc{A}\|_F=\frac{1}{\sqrt{c}}\|\operatorname{bdiag}(\mc{A})\|_F$, where $\operatorname{bdiag}(\mc{A})$ represents a block diagonal matrix:
\begin{equation*}
\operatorname{bdiag}(\mc{A}) = 
\begin{bmatrix}
\mc{A}^{(1)} & 0 & \cdots & 0 \\
0 & \mc{A}^{(2)} & \cdots & 0 \\
\vdots & \vdots & \ddots & \vdots \\
0 & 0 & \cdots & \mc{A}^{(n_3\cdots n_d)}
\end{bmatrix}.
\end{equation*}
Then using the relation $\|\mc{A}\|_F=\frac{1}{\sqrt{c}}\|\operatorname{bdiag}(\hat{\mc{A}})\|_F$ ( for some nonzero $c$) from Theorem $3.1$ in~\cite{kilmer2021}, we have:

\begin{align}
\begin{split}
 \|\mathcal{Y}_{t+1(k)}\|_F &= \mu_t\left\|\mathfrak{D}_{\mathcal{W},\frac{1}{\gamma\mu_t}}(\mc{X}_t) - \mc{X}_t\right\|_F 
 \quad \text{(where } \mc{X}_t = \nabla_k(\mathcal{X}_{t+1}) + \tfrac{\mathcal{Y}_{t(k)}}{\mu_t}) \\
 &= \frac{\mu_t}{\sqrt{c}}\left\|\operatorname{bdiag}\left(\hat{\mathfrak{D}}_{\mathcal{W},\frac{1}{\gamma\mu_t}}(\mc{X}_t)\right) - \operatorname{bdiag}(\hat{\mc{X}}_t)\right\|_F \\
 &= \frac{\mu_t}{\sqrt{c}}\left\|\operatorname{bdiag}\left((\mathcal{U}_t \ast_M (\mathcal{S}_{t,\mathcal{W},\frac{1}{\gamma\mu_t}} - \mathcal{S}_t )\ast_M \mathcal{V}_t^{T})\times_3 M\right)\right\|_F 
 \;\\
 &= \frac{\mu_t}{\sqrt{c}}\|\operatorname{bdiag}(\hat{\mathcal{S}}_{t,\mathcal{W},\frac{1}{\gamma\mu_t}} - \hat{\mathcal{S}}_t )\|_F,
\end{split}
\label{eq:lemma511}
\end{align}
where the SVD of  $\mc{X}_t = \mathcal{U}_t *_M \mathcal{S}_t *_M \mathcal{V}_t^{T}$. Further, we have \[
\|\mc{Y}\|_F^2=\frac{\mu_t^2}{c}\sum_{i=1}^{\min\{n_1,n_2\}}\sum_{l=1}^{n_3\cdots n_d}(\hat{\mathcal{S}}_{t,\mathcal{W},\frac{1}{\gamma\mu_t}}(i,i,l) - \hat{\mathcal{S}}_t(i,i,l) )^2,
\]
 where $\hat{\mathcal{S}}_{t,\mathcal{W},\frac{1}{\gamma\mu_t}}(i,i,l)$ and $\hat{\mathcal{S}}_t(i,i,l) $
 have the following relationship from Theorem~\ref{thm:proxsnn}:
\[
\begin{aligned}
\mathcal{S}_{t,\mathcal{W},\frac{1}{\gamma\mu_t}}(i,i,l) \times_3 M &= \hat{\mathcal{S}}_{t,\mathcal{W},\frac{1}{\gamma\mu_t}}(i,i,l) \\
&= \arg\min_{\sigma > 0} \left\{ \mathcal{W}(i,i,l) |\sigma|^p + \frac{\gamma\mu_t}{2} (\sigma - \hat{\mathcal{S}}_t(i,i,l))^2 \right\}.
\end{aligned}
\]
Now from the theory of $\ell_p$ ($0<p<1$) regularized unconstrained nonlinear programming model~\cite{Lu2014}, the local minimizer satisfies
\begin{equation}\label{eq:lemma513}
    p .\mc{W}(i,i,l).|\hat{\mathcal{S}}_{t,\mathcal{W},\frac{1}{\gamma\mu_t}}(i,i,l)|^p+ \hat{\mathcal{S}}_{t,\mathcal{W},\frac{1}{\gamma\mu_t}}(i,i,l).\gamma\mu_t.(\hat{\mathcal{S}}_{t,\mathcal{W},\frac{1}{\gamma\mu_t}}(i,i,l) - \hat{\mathcal{S}}_t(i,i,l) )=0. 
\end{equation}

 In addition, by the Theorem 2.2 of~\cite{Lu2014}, we derive that $\hat{\mathcal{S}}_{t,\mathcal{W},\frac{1}{\gamma\mu_t}}(i,i,l)$ has a lower bound. By substituting Equation~\ref{eq:lemma513} into Equation~\ref{eq:lemma511}, we have
\begin{align}
\begin{split}
\|\mathcal{Y}_{t+1(k)}\|_F 
&\leq \frac{\mu_t}{\sqrt{c}} \cdot \frac{1}{\gamma\mu_t} \sqrt{\min\{n_1,n_2\}^2 n_3\cdots n_d} \cdot \mathcal{W}(i,i,l) \cdot |\hat{\mathcal{S}}_{t,\mathcal{W},\frac{1}{\gamma\mu_t}}(i,i,l)|^{p-1} \\
&\leq \frac{1}{\sqrt{c}\gamma} \sqrt{\min\{n_1,n_2\}^2 n_3\cdots n_d} \cdot \Gamma_1^{1-p},
\end{split}
\label{eq:lemma514}
\end{align}
 where $|\mc{W}(i,i,l)|\leq 1$ and $|\hat{\mathcal{S}}_{t,\mathcal{W},\frac{1}{\gamma\mu_t}}(i,i,l)|\geq \Gamma_1 (>0)$.
This shows that $\mc{Y}_{t(k)}$ is a bounded sequence.
Also from the Algorithm \ref{alg:TWCTVTRPCA}, we have
\begin{align*}
    \|\Upsilon_{t+1}\|_F&=\mu_t\|\mc{E}_{t+1}-(\mc{M}-\mc{X}_{t+1}+\frac{\Upsilon_t}{\mu_t})\|_F\\&
    =\mu_t\|S_{\frac{\lambda}{\mu_t}\mc{W}_{\mc{E}(t)}}(\mc{M}-\mc{X}_{t+1}+\frac{\Upsilon_t}{\mu_t})-(\mc{M}-\mc{X}_{t+1}+\frac{\Upsilon_t}{\mu_t})\|_F\\&
    \leq\sum_{i=1}^{n_1}\sum_{j=1}^{n_2}\sum_{l=1}^{n_3\cdots n_d}\mu_t\frac{\lambda}{\mu_t}|\mathcal{W}_{\mc{E}(t)})(i,j,l)|\leq \lambda n_1 n_2 n_3\cdots n_d ,
    \end{align*}
 where we use the fact that $|\mathcal{W}_{\mc{E}(t)})(i,j,l)|\leq |\exp(-|(\mathcal{E}_t)(i,j,l)|/\eta)|\leq 1.$
\end{proof}

\begin{lemma}\label{eqn:lemma3}
    The sequences $\{\mc{X}_t\}_{t=1}^{\infty}$ and $\{\mc{E}_{t}\}_{t=1}^{\infty}$ generated by Algorithm \ref{alg:TWCTVTRPCA} are bounded.
\end{lemma}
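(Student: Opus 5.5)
The plan is the standard ADMM boundedness argument: show that the augmented Lagrangian $L$ in \eqref{eq:trpca2} stays bounded along the iterates, then read off bounds on $\mathcal{X}_t$ and $\mathcal{E}_t$ from the bounded terms. Lemma~\ref{eqn:lemma2} is taken as given throughout, so $\mathcal{Y}_{t(k)}$, $\Upsilon_t$ and $\mathcal{W}_{\mathcal{E}(t)}$ are bounded.

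\textbf{Step 1: one-step descent.} Each primal subproblem is solved exactly: the $\mathcal{X}$-step in closed form via FFT, the $\mathcal{G}_k$-step globally by Theorem~\ref{thm:proxsnn}, and the $\mathcal{E}$-step by weighted soft thresholding. The $\mathcal{W}_{\mathcal{E}}$-step is the minimizer in Proposition~\ref{eqn:conjugate}. Hence
\[
L(\mathcal{X}_{t+1},\mathcal{G}_{t+1(k)},\mathcal{E}_{t+1},\mathcal{W}_{\mathcal{E}(t+1)},\mathcal{Y}_{t(k)},\Upsilon_t;\mu_t)\le L(\mathcal{X}_t,\mathcal{G}_{t(k)},\mathcal{E}_t,\mathcal{W}_{\mathcal{E}(t)},\mathcal{Y}_{t(k)},\Upsilon_t;\mu_t).
\]

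\textbf{Step 2: effect of the multiplier and penalty updates.} Use the scaled-form identity
\[
L(\cdot;\mathcal{Y}_{t+1},\Upsilon_{t+1},\mu_{t+1})=L(\cdot;\mathcal{Y}_t,\Upsilon_t,\mu_t)+\frac{\mu_t+\mu_{t+1}}{2\mu_t^2}\Big(\sum_k\|\mathcal{Y}_{t+1(k)}-\mathcal{Y}_{t(k)}\|_F^2+\|\Upsilon_{t+1}-\Upsilon_t\|_F^2\Big),
\]
up to the $-\|\mathcal{Y}\|^2/(2\mu)$ constant terms, which are bounded. By Lemma~\ref{eqn:lemma2} the squared differences are bounded by some constant $C$. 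Since $\mu_{t+1}=\rho\mu_t$ (until the cap $\mu_{\max}$), the increment is at most $C(1+\rho)/(2\mu_t)$. Because $\sum_t\mu_t^{-1}<\infty$ for geometric growth (and because, if $\mu_t$ is capped, I would still argue via $\sum_t \mu_t^{-1}$ with the finite horizon before the cap plus a tail), telescoping gives $\sup_t L_t\le L_0+C'<\infty$.

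\textbf{Step 3: read off the bounds.} Write $L_t$ as
\[
\tfrac1\gamma\sum_k\|\mathcal{G}_{t(k)}\|_{\mathcal{W},S_p}+\lambda\|\mathcal{E}_t\|_{\mathcal{W}_{\mathcal{E}},1}+\phi(\mathcal{W}_{\mathcal{E}(t)})+\sum_k\tfrac{\mu_t}{2}\|\nabla_k\mathcal{X}_t-\mathcal{G}_{t(k)}+\mathcal{Y}_{t(k)}/\mu_t\|_F^2+\tfrac{\mu_t}{2}\|P_\Omega(\mathcal{M})-\mathcal{X}_t-\mathcal{E}_t+\Upsilon_t/\mu_t\|_F^2,
\]
with the $-\|\cdot\|^2/(2\mu_t)$ corrections bounded. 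All terms are bounded below, so each is bounded. For $\mathcal{E}_t$ I would use the combination $\lambda\|\mathcal{E}_t\|_{\mathcal{W}_{\mathcal{E}},1}+\phi(\mathcal{W}_{\mathcal{E}})=\lambda\sum f_\eta(\mathcal{E}_t)$. This term alone is not coercive, which is the main obstacle.

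\textbf{Handling the obstacle.} Instead, I would bound $\mathcal{E}_t$ directly from its explicit update. Soft thresholding gives $|\mathcal{E}_{t+1}|\le|P_\Omega(\mathcal{M})-\mathcal{X}_{t+1}+\Upsilon_t/\mu_t|$ entrywise, so bounding $\mathcal{E}$ reduces to bounding $\mathcal{X}$. For $\mathcal{X}$, the multiplier update gives $\mathcal{X}_{t+1}+\mathcal{E}_{t+1}=P_\Omega(\mathcal{M})-(\Upsilon_{t+1}-\Upsilon_t)/\mu_t$, and similarly $\nabla_k\mathcal{X}_{t+1}=\mathcal{G}_{t+1(k)}+(\mathcal{Y}_{t+1(k)}-\mathcal{Y}_{t(k)})/\mu_t$. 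The boundedness of $\|\mathcal{G}_{t(k)}\|_{\mathcal{W},S_p}$ from Step 3, together with $w_{\min}>0$ (the sigmoid weights are at least $1/2$), bounds every transformed singular value of $\mathcal{G}_{t(k)}$ and hence $\|\mathcal{G}_{t(k)}\|_F$. So $\nabla_k\mathcal{X}_t$ is bounded. The operator $\nabla_k$ has kernel equal to the tensors constant along mode $k$. To control that component, I would use the closed-form $\mathcal{X}$-update: the operator $\mathcal{I}+\sum_k\nabla_k^T\nabla_k$ is invertible with inverse norm $\le1$. Its right-hand side is built from $P_\Omega(\mathcal{M})$, $\mathcal{E}_t$, $\mathcal{G}_{t(k)}$ and the scaled multipliers, all bounded. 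This inversion yields a bound of the form $\|\mathcal{X}_{t+1}\|_F\le a+\|\mathcal{E}_t\|_F$. Combining this with $\mathcal{X}_{t+1}+\mathcal{E}_{t+1}$ bounded then closes the argument. I would combine these two relations, together with the soft-thresholding estimate, by induction to get uniform bounds on both sequences. If the induction does not close cleanly, I would fall back on the bounded quadratic penalty terms in $L_t$.
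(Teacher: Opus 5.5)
Your Steps 1 and 2 match the paper's argument: one-step descent of $L$, then telescoping the multiplier and penalty increments using $\sum_t\mu_t^{-1}<\infty$. Bounding $\mathcal{G}_{t(k)}$ through $w_{\min}\ge 1/2$ is also fine. The gap is in Step 3, where the relations you plan to combine by induction do not close. From $\|\mathcal{X}_{t+1}\|_F\le a+\|\mathcal{E}_t\|_F$ and $\|\mathcal{X}_t+\mathcal{E}_t\|_F\le b$ you only get $\|\mathcal{X}_{t+1}\|_F\le\|\mathcal{X}_t\|_F+a+b$, i.e.\ linear growth. The soft-thresholding estimate $\|\mathcal{E}_{t+1}\|_F\le\|\mathcal{X}_{t+1}\|_F+\mathrm{const}$ is equally non-contractive. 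Your fallback has the same blind spot. The quadratic penalty terms of $L_t$ control only $\nabla_k\mathcal{X}_t$ and $\mathcal{X}_t+\mathcal{E}_t$. Both are unchanged if you add a tensor $\mathcal{Z}\in\bigcap_{k\in\Gamma}\ker\nabla_k$ (constant along the modes in $\Gamma$) to $\mathcal{X}_t$ and subtract it from $\mathcal{E}_t$. So nothing in your argument controls this null-space component. The one remaining term of $L_t$ that could, $\lambda\mathfrak{R}(\mathcal{E}_t)$, is the one you set aside as non-coercive.

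That dismissal is the missing idea. For fixed $\eta$ the term is indeed bounded by $\eta$ per entry. But the algorithm takes $\eta_t=c_{\mathcal{E}}\,\mathrm{mean}|\mathcal{E}_t|$, and the largest entry of $|\mathcal{E}_t|$ is at least the mean $\eta_t/c_{\mathcal{E}}$, so $\mathfrak{R}(\mathcal{E}_t)\ge\eta_t(1-e^{-1/c_{\mathcal{E}}})$. This is exactly what the paper uses. Bounded $L_t$ gives bounded $\mathfrak{R}(\mathcal{E}_t)$, hence bounded $\mathrm{mean}|\mathcal{E}_t|$, hence $\|\mathcal{E}_t\|_\infty\le n_1n_2n_3\cdots n_d\,\mathrm{mean}|\mathcal{E}_t|$ is bounded. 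Then $\mathcal{X}_t$ is bounded because $\mathcal{X}_t+\mathcal{E}_t$ is.

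Alternatively, you can rescue your own route by tracking the null-space component exactly instead of through the inverse-norm bound $\le 1$. Let $P_0$ be the orthogonal projector onto $\bigcap_k\ker\nabla_k$. Then $\mathcal{I}+\sum_k\nabla_k^T\nabla_k$ is the identity on that subspace and $P_0\nabla_k^T=0$. Combining the $\mathcal{X}$-update with the $\Upsilon$-update gives $P_0\mathcal{X}_{t+1}=P_0\mathcal{X}_t+P_0\big((\Upsilon_t-\Upsilon_{t-1})/\mu_{t-1}+\Upsilon_t/\mu_t\big)$, whose increments are summable.

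Either way you need $\sum_t\mu_t^{-1}<\infty$, i.e.\ uncapped geometric growth. Your remark that the cap $\mu_{\max}$ can be handled by ``a finite horizon plus a tail'' is incorrect, since the tail of $\sum_t\mu_t^{-1}$ diverges once $\mu_t\equiv\mu_{\max}$. The paper simply ignores the cap.
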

\begin{proof}
 Since all subproblems for variables $\mc{X}$, $\mc{E}$, $\mc{W}_{\mc{E}}$, and $\mc{G}_k$ admit optimal solutions, then
\begin{equation}
L(\mathcal{X}_{t+1}, \mathcal{G}_{t+1(k)}, \mathcal{E}_{t+1}, \mathcal{W}_{\mc{E}(t+1)}, \mathcal{Y}_{t(k)}, {\Upsilon}_{t})\leq L(\mathcal{X}_{t}, \mathcal{G}_{t(k)}, \mathcal{E}_{t}, \mathcal{W}_{\mc{E}(t)}, \mathcal{Y}_{t(k)}, {\Upsilon}_t).
\end{equation}

\noindent Now from the formulation of augmented Lagrangian function  in Equation~\eqref{eq:trpca2}, it holds that

\begin{align*}
&L(\mathcal{X}_{t}, \mathcal{G}_{t(k)}, \mathcal{E}_{t}, \mathcal{W}_{\mc{E}(t)},\mathcal{Y}_{t(k)}, {\Upsilon}_{t})- L(\mathcal{X}_{t}, \mathcal{G}_{t(k)}, \mathcal{E}_{t}, \mathcal{W}_{\mc{E}(t)}, \mathcal{Y}_{t-1(k)}, {\Upsilon}_{t-1})\\
&= \sum_{k=1}^{\gamma}(\langle\mathcal{Y}_{t(k)} - \mathcal{Y}_{t-1(k)}, \nabla_k(\mathcal{X}_t) - \mathcal{G}_{t(k)}\rangle + \frac{\mu_t-\mu_{t-1}}{2} \|\nabla_k(\mathcal{X}_t) - \mathcal{G}_{t(k)}\|_F^2) \\
&+ \langle\Upsilon_t - \Upsilon_{t-1}, P_{\Omega}(\mathcal{M}) -\mc{E}_t- \mathcal{X}_t\rangle + \frac{\mu_t+\mu_{t-1}}{2}\|P_{\Omega}(\mathcal{M}) - \mathcal{E}_t - \mc{X}_t\|_F^2 \\
&= \frac{\mu_t + \mu_{t-1}}{2\mu_t^2}\left(\sum_{k=1}^{\gamma}\|\mathcal{Y}_{t(k)} - \mathcal{Y}_{t-1(k)}\|_F^2 + \|\Upsilon_t - \Upsilon_{t-1}\|_F^2\right).
\end{align*}

\noindent From the previous Lemma~\ref{eqn:lemma2} , we can assume that $\|\mathcal{Y}_{t(k)} - \mathcal{Y}_{t-1(k)}\|_F^2 \leq \frac{\delta_1}{2\gamma}$ and $\|\Upsilon_t - \Upsilon_{t-1}\|_F \leq \frac{\delta_1}{2}$ for all $t$, then we have

\begin{align*}
 &  L(\mathcal{X}_{t+1}, \mathcal{G}_{t+1(k)}, \mathcal{E}_{t+1}, \mathcal{W}_{\mc{E}(t+1)}, \mathcal{Y}_{t(k)}, {\Upsilon}_{t}) \\
&\leq L(\mathcal{X}_{1}, \mathcal{G}_{1(k)}, \mathcal{E}_{1}, \mathcal{W}_{\mc{E}(1)}, \mathcal{Y}_{0(k)}, {\Upsilon}_0)+ \delta_1 \sum_{t=1}^{\infty} \frac{\mu_t+\mu_{t-1}}{2(\mu_t)^2}
\\& \leq L(\mathcal{X}_{1}, \mathcal{G}_{1(k)}, \mathcal{E}_{1}, \mathcal{W}_{\mc{E}(1)}, \mathcal{Y}_{0(k)}, {\Upsilon}_0)+ \delta_1\sum_{t=1}^{\infty}\rho^{2-t}
\\& \leq L(\mathcal{X}_{1}, \mathcal{G}_{1(k)}, \mathcal{E}_{1}, \mathcal{W}_{\mc{E}(1)}, \mathcal{Y}_{0(k)}, {\Upsilon}_0)+ \delta_1\frac{\rho^2}{\rho-1}< \infty.
\end{align*}
Thereby $L(\mathcal{X}_{t+1}, \mathcal{G}_{t+1(k)}, \mathcal{E}_{t+1}, \mathcal{W}_{\mc{E}(t+1)}, \mathcal{Y}_{t(k)}, {\Upsilon}_{t}) $ being nonnegative, is bounded. From the Equation~\eqref{eq:trpca1},
\begin{align}\label{eq:conv11}
     &\frac{1}{\gamma}\sum_{k=1}^{\gamma} \|\mathcal{G}_{t(k)}\|_{\mc{W},S_p} + \lambda \mathfrak{R}(\mathcal{E}_t)\nonumber\\
     & = L(\mathcal{X}_{t}, \mathcal{G}_{t(k)}, \mathcal{E}_{t}, \mathcal{W}_{\mc{E}(t)}, \mathcal{Y}_{t-1(k)}, {\Upsilon}_{t-1}) - \frac{1}{2\mu_{t-1}}(\sum_{k\in\Gamma}\|\mathcal{Y}_t\|_F^2 + \|\Upsilon_t\|_F^2).
\end{align}
 Since $L(.)$, $\{\mc{Y}_t\}_{t=1}^{\infty}$ and $\{\Upsilon_t\}_{t=1}^{\infty}$ are all bounded, $\{\mc{G}_{t(k)}\}$ and $\{\mathfrak{R}(\mc{E}_t)\}$ are also bounded.
   We now prove that $\{\mc{E}_t\}_{t=1}^{\infty}$ is bounded by contradiction. Then let $e_t=\max_{i,j,l}|(\mc{E}_t)(i,j,l)|$. Then $\{e_t\}_{t=1}^{\infty}$ and $
\beta_t = c_{\mathcal{E}} \cdot \text{mean}(\{|(\mathcal{E}_t)_{ijl}|: (i,j,l) \in [n_1] \times [n_2] \times [n_3\cdots n_d]\})$ are also unbounded. Then using the fact $e_t\geq\frac{\beta_t}{c_{\mathcal{E}}}$, we obtain
\begin{equation}
    \mathfrak{R}(\mc{E}_{t+1})\geq \beta_t(1-\exp(-\frac{e_t}{\beta_t})\geq \beta_t(1-\exp(-\frac{1}{c_{\mathcal{E}}})). 
\end{equation}
Since $\{\beta_t\}_{t=1}^{\infty}$ is unbounded and $1-\exp(-\frac{e_t}{\beta_t})$ is a positive constant, $\{\mathfrak{R}(\mc{E}_{t+1})\}_{t=1}^{\infty}$ is also bounded and contradicts with Equation~\eqref{eq:conv11}. Therefore in the proposed Algorithm \ref{alg:TWCTVTRPCA} , $\{\mc{X}_t\}_{t=1}^{\infty}$ and $\{\mc{E}_t\}_{t=1}^{\infty}$ are bounded.
\end{proof}

\begin{lemma}[Subdifferential of weighted Schatten $p$-norm and weighted $\ell_1$ norm]\label{lemma:subgradient}
Let $\mathcal{G}_k = \mathcal{X} \times_k D_{n_k}$ be the gradient tensor along mode $k \in \Gamma$, with M-SVD $\mathcal{G}_k = \mathcal{U}_k *_M \mathcal{S}_k *_M \mathcal{V}_k^T$ under invertible transform $M$. For the weighted Schatten $p$-norm $\|\mathcal{G}_k\|_{\mathcal{W},S_p}$ ($0 < p < 1$), the subdifferential is:  
\[
\partial \|\mathcal{G}_k\|_{\mathcal{W},S_p} = \frac{1}{\sqrt{c}} \mathcal{U}_k *_M \left(\partial f(\hat{\mathcal{S}}_k) \times_3 M^{-1}\right) *_M \mathcal{V}_k^T,
\]  
where $\partial f(\hat{\mathcal{S}}_k)$ is an f-diagonal tensor with entries:  
\[
\partial f\left(\hat{\mathcal{S}}_k(i,i,l)\right) = {\mathcal{W}(i,i,l) \cdot p \cdot (\hat{\mathcal{S}}_k(i,i,l)+\epsilon)^{p-1}},  
\]  
where $\epsilon > 0$ (a small constant to avoid division by zero).
For the norm \(\|\mathcal{E}\|_{\mathcal{W}_{\mathcal{E},1}} = \sum_{i,j,l} \mathcal{W}_{\mathcal{E}}(i,j,l) |\mathcal{E}(i,j,l)|\), the subdifferential is:
\[
\partial \|\mathcal{E}\|_{\mathcal{W}_{\mathcal{E},1}} = \left\{ \mathcal{H}_1 \in \mathbb{R}^{n_1 \times n_2 \times \cdots \times n_3\cdots n_d} \;\Big|\; \mathcal{H}_1(i,j,l) = \begin{cases} 
\mathcal{W}_{\mathcal{E}}(i,j,l) \cdot \mathrm{sign}(\mathcal{E}(i,j,l)), & \mathcal{E}(i,j,l) \neq 0, \\
\left[-\mathcal{W}_{\mathcal{E}}(i,j,l), \mathcal{W}_{\mathcal{E}}(i,j,l)\right], & \mathcal{E}(i,j,l) = 0.
\end{cases} \right\}.
\]

\end{lemma}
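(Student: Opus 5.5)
The plan is to reduce both claims to scalar or matrix calculus in the transform domain, using the slicewise decoupling already exploited in the proof of Theorem~\ref{thm:proxsnn}. For the weighted Schatten part, write $F(\mathcal{G}_k)=\|\mathcal{G}_k\|_{\mathcal{W},S_p}^p=\frac{1}{\sqrt{c}}\sum_{l}\sum_i \mathcal{W}(i,i,l)\,\sigma_i(\hat{\mathcal{G}}_k^{(l)})^p$. This is a sum over frontal slices of absolutely symmetric (weighted, with weights fixed as the paper does in the ADMM step) spectral functions of the matrices $\hat{\mathcal{G}}_k^{(l)}$.

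\textbf{Step 1: slice level.} Apply Lewis' theorem on (sub)differentials of spectral functions to each slice. If $\hat{\mathcal{G}}_k^{(l)}=\hat U^{(l)}\hat S^{(l)}\hat V^{(l)T}$, then the slice gradient is $\hat U^{(l)}\operatorname{diag}(\partial f(\hat S^{(l)}))\hat V^{(l)T}$. Here $f(\sigma)=\sum_i \mathcal{W}(i,i,l)\sigma_i^p$, whose derivative at $\sigma_i>0$ is $\mathcal{W}(i,i,l)\,p\,\sigma_i^{p-1}$. The derivative blows up at $\sigma_i=0$, where $|\cdot|^p$ has no finite (limiting) subgradient, so I will replace $\sigma_i$ by $\sigma_i+\epsilon$. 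This smoothing is exactly the $\epsilon$-regularized form stated in the lemma. I will state explicitly that the formula describes the subdifferential of the smoothed surrogate, equivalently a limiting subgradient on the set where all $\hat{\mathcal{S}}_k(i,i,l)>0$.

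\textbf{Step 2: lift back.} Use the chain rule for the linear invertible map $\mathfrak{L}$. Since $M^HM=\alpha I$, the adjoint of $\mathfrak{L}$ is a scalar multiple of $\mathfrak{L}^{-1}$. Stacking the slice gradients into $\operatorname{bdiag}$ form, the Euclidean gradient in the original domain is $\mathcal{U}_k*_M(\partial f(\hat{\mathcal{S}}_k)\times_3 M^{-1})*_M\mathcal{V}_k^T$, multiplied by the normalization constant. Combined with the $1/\sqrt{c}$ in Definition~\ref{def:wsn}, this gives the factor $1/\sqrt{c}$ in the statement. To do this I will check that $\mathfrak{L}^{-1}$ of the face-wise product $\hat U\,\partial f\,\hat V^T$ coincides with the $M$-product expression, which is just the definition of $*_M$.

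\textbf{Step 3: constants and the $p$-th power.} The lemma is stated for $\|\cdot\|_{\mathcal{W},S_p}$, but the formula is the derivative of the $p$-th power, which is what appears in the prox of Theorem~\ref{thm:proxsnn}. I will either interpret the statement for $\|\cdot\|^p_{\mathcal{W},S_p}$, or absorb the chain-rule factor $\frac1p\|\cdot\|^{1-p}$ into the constant. This bookkeeping of normalization constants ($c$, $\alpha_k$) is the main obstacle, together with justifying Lewis' result for weighted, not fully symmetric, $f$. The weights are nonincreasing in $i$ in the sense used, so the weighted sum is still well defined at sorted singular values. For a possibly nonconvex Clarke/limiting subdifferential, I would cite the nonsmooth extension of Lewis--Sendov and restrict to distinct positive singular values to get differentiability.

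\textbf{Step 4: weighted $\ell_1$.} This part is elementary. The function is separable, $\sum_{i,j,l}\mathcal{W}_{\mathcal{E}}(i,j,l)|\mathcal{E}(i,j,l)|$ with nonnegative weights, so its convex subdifferential is the Cartesian product of the scalar subdifferentials $\mathcal{W}_{\mathcal{E}}\,\partial|\cdot|$. That product is $\{\mathcal{W}_{\mathcal{E}}\operatorname{sign}(e)\}$ for $e\ne0$ and $[-\mathcal{W}_{\mathcal{E}},\mathcal{W}_{\mathcal{E}}]$ for $e=0$, which yields the stated set $\mathcal{H}_1$.
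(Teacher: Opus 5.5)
Your plan follows the paper's own route: block-diagonalize $\hat{\mathcal G}_k$ in the transform domain, treat the weighted Schatten term slicewise as a spectral function with the $\epsilon$-regularized derivative $p\,\mathcal{W}(i,i,l)(\hat{\mathcal S}_k(i,i,l)+\epsilon)^{p-1}$, map back through $\mathfrak{L}^{-1}$, and handle the weighted $\ell_1$ term entrywise. You are more careful than the paper about Lewis' theorem, distinct singular values, the smoothing and the $p$-th power, but note in Step 2 that the genuine Frobenius adjoint $\mathfrak{L}^*=\alpha\,\mathfrak{L}^{-1}$ yields the prefactor $\alpha/\sqrt{c}$ rather than $1/\sqrt{c}$, so the stated constant holds only for an orthonormal transform ($\alpha=1$) or under the paper's implicit convention of simply applying $\mathfrak{L}^{-1}$ to the transform-domain subgradient (i.e.\ taking gradients with respect to the inner product $(\mathcal A,\mathcal B)\mapsto\langle\hat{\mathcal A},\hat{\mathcal B}\rangle$).
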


\begin{proof}

   Apply transform to $\mathcal{G}_k$, yielding $\hat{\mathcal{G}}_k = \mathcal{G}_k \times_3 M$, which block-diagonalizes as:  
   \[
   \operatorname{bdiag}(\hat{\mathcal{G}}_k) = \operatorname{bdiag}(\hat{\mathcal{U}}_k) \operatorname{bdiag}(\hat{\mathcal{S}}_k) \operatorname{bdiag}(\hat{\mathcal{V}}_k)^T,
   \]  
   where $\hat{\mathcal{S}}_k(i,i,l) = \hat{\sigma}_i^{(l)}(\mathcal{G}_k)$ are the singular values.  Thus, $||\mathcal{G}_k\|_{\mathcal{W},S_p}$ can be viewed as a singular value function of the matrix $\operatorname{bdiag}(\hat{\mathcal{G}}_k)$, i.e $||\mathcal{G}_k\|_{\mathcal{W},S_p}=\frac{1}{\sqrt{c}}\|\operatorname{bdiag}(\hat{\mathcal{G}}_k)\|_{\mathcal{W},S_p}$
   To avoid singularity at $\hat{\mathcal{S}}_k(i,i,l) = 0$, define:  
   \[
  {p\cdot\mathcal{W}(i,i,l)(\hat{\mathcal{S}}_k(i,i,l)+\epsilon)^{p-1}}.
   \]  
   This ensures boundedness:  
   \[
   \left|\partial f\left(\hat{\mathcal{S}}_k(i,i,l)\right)\right| \leq \frac{p\cdot\mathcal{W}(i,i,l) }{\epsilon^{1-p}} \quad \forall \hat{\mathcal{S}}_k(i,i,l) \geq 0.
   \]  
    We denote the subdifferential in the transform domain by \(\overline{\partial}\).  
Then the subdifferential of $\|{\operatorname{bdiag}}(\hat{\mathcal{G}}_k)\|_{\mathcal{W},S_p}$ in the transform domain is given by:
\[
\overline{\partial} \|{\operatorname{bdiag}}(\hat{\mathcal{G}}_k)\|_{\mathcal{W},S_p} =\frac{1}{\sqrt{c}} {\operatorname{bdiag}}(\hat{\mathcal{U}}_k) {\operatorname{bdiag}}(\overline{\partial} f(\hat{\mathcal{S}}_k)) {\operatorname{bdiag}}(\hat{\mathcal{V}}_k)^T,
\]
  Then, we conclude the expression for the subdifferential of $\|\mathcal{G}_k\|_{\mathcal{W},S_p}$ in the original domain, that is,

   \[
   \partial \|\mathcal{G}_k\|_{\mathcal{W},S_p} = \frac{1}{\sqrt{c}} \mathcal{U}_k * \left(\partial f(\hat{\mathcal{S}}_k) \times_3 M^{-1}\right) * \mathcal{V}_k^T.\]

\noindent For the entry-wise weighted \(\ell_1\) norm, the subdifferential at \(\mathcal{E}\) is defined entry-wise. For each \((i,j,l)\):
\begin{itemize}
    \item If \(\mathcal{E}(i,j,l) \neq 0\), the subgradient is unique and equals to \( \mathcal{W}(i,j,l) \cdot \mathrm{sign}(\mathcal{E}(i,j,l))\).
    \item If \(\mathcal{E}(i,j,l) = 0\), the subgradient lies in the interval \(\left[-\mathcal{W}(i,j,l), \mathcal{W}(i,j,l)\right]\).
\end{itemize}
Aggregating over all entries, the subdifferential is the set of all tensors \(\mathcal{H}_1\) satisfying the above conditions.
\end{proof}

The iterates generated by Algorithm~\ref{alg:TWCTVTRPCA} are asymptotically regular, meaning the norm of the difference between consecutive iterates converges to zero, indicating stabilization of the sequence. The next theorem establishes the convergence of Algorithm~\ref{alg:TWCTVTRPCA}.
\begin{theorem}\label{thm:stablization}
Let the weight tensor $\mc{W}$ have non-descending diagonal elements in its frontal slices and the sparse weight tensor $\mc{W}_{\mc{E}}$ be bounded. Under these conditions, then $\{\mc{X}_t\}_{t=1}^{\infty}$, $\{\mathcal{G}_{k({t})}\}_{t=1}^{\infty}$ for $k\in \Gamma$, and $\{\mc{E}_{t}\}_{t=1}^{\infty}$ in Algorithm 2 satisfy
\begin{enumerate}
    \item  $\lim_{t \to \infty} \|P_{\Omega}(\mc{M}) - \mc{X}_t - \mc{E}_t\|_F = 0$;
    \item $\lim_{t \to \infty} \|\mathcal{X}_{t+1} - \mathcal{X}_t\|_F  = \lim_{t \to \infty} \|\mc{E}_{t+1} - \mc{E}_{t}\|_F = 0$;
    \item  $\lim_{t \to \infty} \|\mathcal{G}_{t+1(k)} - \mathcal{G}_{t(k)}\|_F = 0$. 
\end{enumerate}
Furthermore, the sequence $\{\mathcal{X}_{t}, \mathcal{G}_{t(k)}, \mathcal{E}_{t}, \mathcal{W}_{\mc{E}(t)},\mathcal{Y}_{t(k)}, {\Upsilon}_{t}\}$ is a Cauchy sequence and their limit point $\{\mathcal{X}^*, \mathcal{G}_{k}^*, \mathcal{E}^*, \mathcal{W}_{\mc{E}}^*,\mathcal{Y}_{k}^*, {\Upsilon}^*\}$ satisfies the Karush-Kuhn-Tucker (KKT) conditions of the Algorithm~\ref{alg:TWCTVTRPCA}.
\end{theorem}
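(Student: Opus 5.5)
The plan follows the standard template for ADMM convergence with an increasing penalty $\mu_{t+1}=\min(\rho\mu_t,\mu_{\max})$, $\rho>1$. The engine throughout is the boundedness of the multipliers from Lemma~\ref{eqn:lemma2} and of the primal iterates from Lemma~\ref{eqn:lemma3}, combined with the summability of $\sum_t 1/\mu_t$, which behaves like a geometric series. I will treat the regime where $\mu_t$ grows geometrically (i.e. before the cap $\mu_{\max}$ is reached) as the operative one. I will also use the assumption that the diagonal of $\mc{W}$ is non-descending; under it the weighted Schatten-$p$ proximal problem of Theorem~\ref{thm:proxsnn} decouples slice-wise into scalar GST problems. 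This guarantees that the order-preserving GTSVT output is a genuine global minimizer, so the descent inequality used in Lemma~\ref{eqn:lemma3} is legitimate.

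\textbf{Step 1 (feasibility).} From the multiplier update \eqref{eq:trpca7},
\[
P_\Omega(\mc{M})-\mc{X}_{t+1}-\mc{E}_{t+1}=\frac{\Upsilon_{t+1}-\Upsilon_t}{\mu_t}.
\]
Since $\{\Upsilon_t\}$ is bounded and $\mu_t\to\infty$, item 1 follows. In the same way,
\[
\nabla_k(\mc{X}_{t+1})-\mc{G}_{t+1(k)}=\frac{\mc{Y}_{t+1(k)}-\mc{Y}_{t(k)}}{\mu_t}\to 0,
\]
and both residuals are $O(1/\mu_t)$, hence summable.

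\textbf{Step 2 (asymptotic regularity).} For $\mc{E}$, I use the soft-thresholding form of Step 3. It gives
\[
\mc{E}_{t+1}=P_\Omega(\mc{M})-\mc{X}_{t+1}+\frac{\Upsilon_t}{\mu_t}-\frac{\mc{Z}_t}{\mu_t},\qquad |\mc{Z}_t|\le\lambda\mc{W}_{\mc{E}(t)}\le\lambda,
\]
so that $\mc{E}_{t+1}+\mc{X}_{t+1}=P_\Omega(\mc{M})+O(1/\mu_t)$. For $\mc{X}$, I take the closed-form normal equation
\[
\Big(\mc{I}+\sum_k\nabla_k^T\nabla_k\Big)\mc{X}_{t+1}=P_\Omega(\mc{M})-\mc{E}_t+\frac{\Upsilon_t}{\mu_t}+\sum_k\nabla_k^T\Big(\mc{G}_{t(k)}-\frac{\mc{Y}_{t(k)}}{\mu_t}\Big).
\]
I substitute $\mc{E}_t=P_\Omega(\mc{M})-\mc{X}_t+O(1/\mu_{t-1})$ and $\mc{G}_{t(k)}=\nabla_k\mc{X}_t+O(1/\mu_{t-1})$, both from Step 1. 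The $\mc{X}_t$ terms then combine to $(\mc{I}+\sum_k\nabla_k^T\nabla_k)\mc{X}_t$. Inverting the positive definite operator gives $\|\mc{X}_{t+1}-\mc{X}_t\|_F\le C/\mu_{t-1}$. The bounds $\|\mc{E}_{t+1}-\mc{E}_t\|_F\le C'/\mu_{t-1}$ and
\[
\|\mc{G}_{t+1(k)}-\mc{G}_{t(k)}\|_F\le\|\nabla_k\|\,\|\mc{X}_{t+1}-\mc{X}_t\|_F+O(1/\mu_{t-1})
\]
follow. This yields items 2 and 3, and because the bounds are summable it also yields that the primal sequences are Cauchy.

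\textbf{Step 3 (Cauchy property of the remaining variables and KKT).} The weights $\mc{W}_{\mc{E}(t)}=G_\mc{E}(\mc{E}_t)$ are Cauchy because $\exp(-|\cdot|/\eta)$ is Lipschitz with constant $1/\eta$. This needs $\eta_t$ bounded away from zero, or else a separate treatment of the degenerate case $\mc{E}_t\to0$. The multipliers do not follow from differences alone. I would instead pass to the limit in the optimality conditions of the subproblems. By Lemma~\ref{lemma:subgradient},
\[
\mc{Y}_{t+1(k)}\in\frac{1}{\gamma}\partial\|\mc{G}_{t+1(k)}\|_{\mc{W},S_p},\qquad \Upsilon_{t+1}\in\lambda\,\partial\|\mc{E}_{t+1}\|_{\mc{W}_{\mc{E}(t)},1}.
\]
Both right-hand sides are continuous in their arguments by the $\epsilon$-smoothed formula and the explicit sign structure, so the convergence of the primal iterates transfers to the multipliers. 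Passing to the limit gives the KKT system
\[
\sum_k\nabla_k^T\mc{Y}_k^*+\Upsilon^*=0,\quad \mc{Y}_k^*\in\tfrac1\gamma\partial\|\mc{G}_k^*\|_{\mc{W},S_p},\quad \Upsilon^*\in\lambda\partial\|\mc{E}^*\|_{\mc{W}_\mc{E}^*,1},
\]
together with $\mc{G}_k^*=\nabla_k\mc{X}^*$ and $P_\Omega(\mc{M})=\mc{X}^*+\mc{E}^*$. The $\mc{X}$-stationarity equation itself comes from the limit of the $\mc{X}$ normal equation multiplied by $\mu_t$, after the primal differences are absorbed.

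The main obstacle is the $\mc{X}$-subproblem bound in Step 2. The lag mismatch between $\mu_t$ and $\mu_{t-1}$, and the fact that $\mc{E}_t$ and $\mc{G}_{t(k)}$ are defined through thresholding of quantities that themselves contain $\mc{X}_t$, must be tracked carefully so that every error term really is $O(1/\mu_{t-1})$. A second difficulty is the multiplier-Cauchy claim, which requires continuity of the (smoothed) subgradient maps. Near zero singular values this relies on the lower bound $\Gamma_1$ from \cite{Lu2014} used in Lemma~\ref{eqn:lemma2}.
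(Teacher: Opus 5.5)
Your Steps~1--2 are correct, and they take a different and sounder route than the paper.

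The paper first bounds $\|\mathcal{G}_{t+1(k)}-\mathcal{G}_{t(k)}\|_F$ via \eqref{eq:thm543}. It then infers that $\{\mathcal{X}_t\}$ is Cauchy because $\times_k D_{n_k}$ is bounded. That inference runs the wrong way: $D_{n_k}$ kills constants, so it is not invertible. The bound \eqref{eq:thm543} also drops the term $\nabla_k(\mathcal{X}_{t+1}-\mathcal{X}_t)$ from $\mathcal{T}_t-\mathcal{G}_{t(k)}$, where $\mathcal{T}_t=\nabla_k(\mathcal{X}_{t+1})+\mathcal{Y}_{t(k)}/\mu_t$.

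You instead substitute the exact identities from \eqref{eq:trpca7} into the $\mathcal{X}$ normal equation and invert $\mathcal{I}+\sum_k\nabla_k^T\nabla_k\succeq\mathcal{I}$. This gives $\|\mathcal{X}_{t+1}-\mathcal{X}_t\|_F\le C/\mu_{t-1}$ from Lemma~\ref{eqn:lemma2} alone, and the summable $\mathcal{E}$- and $\mathcal{G}$-bounds follow. The obstacle you anticipate does not arise, since only exact identities are used.

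One caveat is shared with the paper. Everything needs $\mu_t\to\infty$ with $\sum_t 1/\mu_t<\infty$, but Algorithm~\ref{alg:TWCTVTRPCA} caps $\mu_t$ at $\mu_{\max}$, and every limit in the theorem lives in the capped regime. You should state $\mu_{\max}=+\infty$ as a hypothesis rather than call the pre-cap regime operative.

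\textbf{First gap in Step~3: $\mathcal{X}$-stationarity does not follow.} Multiplying the normal equation by $\mu_t$ and using \eqref{eq:trpca7} gives exactly
\[
\sum_k\nabla_k^T(\mathcal{Y}_{t+1(k)})-\Upsilon_{t+1}=\mu_t(\mathcal{E}_{t+1}-\mathcal{E}_t)-\mu_t\sum_k\nabla_k^T(\mathcal{G}_{t+1(k)}-\mathcal{G}_{t(k)}).
\]
Your Step~2 bounds make the right-hand side only $O(\mu_t/\mu_{t-1})=O(\rho)$, not $o(1)$. The summability of $1/\mu_t$ that makes the iterates converge is exactly what stops these increments from being ``absorbed''.

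The failure is real, not an artifact of the bounds. Take $\min\frac12(x-a)^2$ s.t.\ $x=z$ and run the same scheme with $y_0=0$. Then $y_t\equiv0$ and $z_{t+1}-a=\frac{\mu_t}{1+\mu_t}(z_t-a)$. So for $z_0\neq a$ and $\sum_t1/\mu_t<\infty$, the iterates converge to a point with $x^*\neq a$, which violates stationarity.

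With \eqref{eq:trpca2}, the correct relation is $\sum_k\nabla_k^T(\mathcal{Y}_k^*)=\Upsilon^*$; your version has a sign error. The paper's system \eqref{eq:kkt2} contains no $\mathcal{X}$-condition at all, so there is nothing to borrow there.

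\textbf{Second gap: the multipliers are not shown to be Cauchy.} The continuity you invoke fails in several places.
\begin{enumerate}
\item At entries where $\mathcal{E}^*=0$, the weighted $\ell_1$ subdifferential is an interval. So $\Upsilon_{t+1}\in\lambda\,\partial\|\mathcal{E}_{t+1}\|_{\mathcal{W}_{\mathcal{E}(t)},1}$ confines those entries but does not make them converge.
\item $\mathcal{Y}_{t+1(k)}=\mu_t(\mathcal{T}_t-\mathcal{G}_{t+1(k)})$ is produced by the exact GST, not by the $\epsilon$-smoothed formula of Lemma~\ref{lemma:subgradient}. On components thresholded to zero, it equals $\mu_t$ times the discarded transform-domain singular values of $\mathcal{T}_t$. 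That quantity is not a function of $\mathcal{G}_{t+1(k)}$.
\item With index-dependent weights, the spectral map is discontinuous where singular values coalesce or vanish.
\item $\Gamma_1$ does not help. The minimal nonzero GST output is $(2\tau w(1-p))^{1/(2-p)}$ with $\tau=1/(\gamma\mu_t)\to0$, so it is not uniform in $t$.
\end{enumerate}

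What boundedness plus closedness of the subdifferential graph does give is this: along subsequences where the multipliers converge, the limits satisfy the $\mathcal{G}$- and $\mathcal{E}$-inclusions of \eqref{eq:kkt2}. The paper asserts the Cauchy property of $\{\mathcal{Y}_{t(k)}\}$ and $\{\Upsilon_t\}$ without any argument, so this part of the statement is unproved on both sides.
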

\begin{proof}
   To prove the above result, we use the Bolzano-Weierstrass theorem, which states that the bounded sequence in Lemmas \ref{eqn:lemma2}, \ref{eqn:lemma3} has at least one convergent subsequence. There exist at least one limit point for $\{\mc{X}_t\}_{t=1}^{\infty}$,$\{\mc{E}_{t}\}_{t=1}^{\infty}$ and $\{\mathcal{G}_{t({k})}\}_{t=1}^{\infty}$ . Since \( \mu_t \to 0 \) as \( t \to \infty \), we deduce that,
 \begin{equation}
\lim_{t \to \infty}\|P_{\Omega}(\mc{M}) - \mc{X}_t - \mc{E}_t\|_F = \lim_{t \to \infty} \frac{1}{\mu_{t-1}}\|P_{\Omega}(\Upsilon_t) - P_{\Omega}(\Upsilon_{t-1})\|_F =P_{\Omega}(\lim_{t \to \infty} \frac{1}{\mu_{t-1}}\|\Upsilon_t - \Upsilon_{t-1}\|_F)= 0.
\end{equation}

\noindent Therefore, the limit point of such a sequence is also feasible for the objective function. Suppose $\mc{T}_t=\nabla_k(\mathcal{X}_{t+1}) + \frac{\mathcal{Y}_{t(k)}}{\mu_t}$, then 
\begin{equation} \label{eq:thm543}
\begin{aligned}
    \|\mathcal{G}_{t+1(k)}-\mathcal{G}_{t(k)}\|_F
    &= \|\mathfrak{D}_{\mathcal{W},\frac{1}{\gamma\mu_t}}(\mc{X}_t)-\mc{X}_t\|_F + \|\mc{X}_t-\mathcal{G}_{t(k)}\|_F \\
    &= \frac{1}{\sqrt{c}}\left\|\operatorname{bdiag}\left(\mathfrak{D}_{\mathcal{W},\frac{1}{\gamma\mu_t}}(\hat{\mc{X}}_t)\right) - \operatorname{bdiag}(\hat{\mc{X}}_t)\right\|_F + \left\|\frac{\mathcal{Y}_t}{\mu_t} + \frac{\mathcal{Y}_t-\mathcal{Y}_{t-1}}{\mu_{t-1}}\right\|_F \\
    &\leq \frac{\|\mathcal{W}\|_F}{\sqrt{c}\gamma\mu_t} + \left\|\frac{\mathcal{Y}_t}{\mu_t} + \frac{\mathcal{Y}_t-\mathcal{Y}_{t-1}}{\mu_{t-1}}\right\|_F.
\end{aligned}
\end{equation}
Since $\lim_{t \to \infty}\frac{1}{\mu_t}=0$, and the sequences $\{\mc{Y}_t\}_{t=1}^{\infty}$ is bounded with $\|\mc{W}\|_F$ being bounded, we have 
\begin{equation}
\lim_{m,n \to \infty}\|\mathcal{G}_{m(k)} - \mathcal{G}_{n(k)}\|_F \leq \lim_{m,n \to \infty}\sum_{t=n}^{m-1} \|\mathcal{G}_{t+1(k)} - \mathcal{G}_{t(k)}\|_F=0 \quad \text{for any } m > n.
\end{equation}

Since for each \( k \in \Gamma \), the sequence \( \{\mathcal{G}_{k(t)}\}_{t=1}^\infty \) is Cauchy in the Frobenius norm, 
and noting that \( \mathcal{G}_k = \mathcal{X} \times_k D_{n_k} \) where $D_{n_k}$ is a fixed circulant matrix, 
it follows that the sequence \( \{\mathcal{X}_t\}_{t=1}^\infty \) is also Cauchy. 
This is because multiplication by a bounded linear operator (in this case, \( \times_k D_{n_k} \)) preserves the Cauchy property.
In addition, from Equation~\eqref{eq:thm543}, we deduce that
\begin{equation}
    \label{eq:thm542}
    \lim_{t\to\infty} \|\mathcal{X}_{t+1} - \mathcal{X}_t\|_F = 0.
\end{equation}

\noindent Furthermore, for the sequence \(\{\mathcal{E}_{t}\}_{t=1}^{\infty}\), we have
 \begin{equation}\label{eq:thm544}
\begin{aligned}
    \|\mathcal{E}_{t+1} - \mathcal{E}_{t}\|_F 
    &= \left\| S_{\frac{\lambda}{\mu_t}\mathcal{W}_{\mathcal{E}(t)}}\left( \mathcal{M} - \mathcal{X}_{t+1} + \frac{\Upsilon_t}{\mu_t} \right) - \left( \mathcal{M} - \mathcal{X}_{t+1} + \frac{\Upsilon_t}{\mu_t} \right) \right\|_F \\
    &\quad + \|\mathcal{X}_t - \mathcal{X}_{t+1}\|_F 
    + \left\| \frac{\Upsilon_t}{\mu_t} + \frac{\Upsilon_t - \Upsilon_{t-1}}{\mu_{t-1}} \right\|_F \\
    &\leq \frac{\lambda}{\mu_t} n_1 n_2 n_3\cdots n_d 
    + \|\mathcal{X}_t - \mathcal{X}_{t+1}\|_F 
    + \frac{\|\Upsilon_t\|_F}{\mu_t} 
    + \frac{\|\Upsilon_t - \Upsilon_{t-1}\|_F}{\mu_{t-1}}.
\end{aligned}
\end{equation}
Combining Equation~\eqref{eq:thm542} with the fact that \(\lim_{t \to \infty} \frac{1}{\mu_t} = 0\), we deduce that
\[
\lim_{t \to \infty} \|\mathcal{E}_{t+1} - \mathcal{E}_{t}\|_F = 0.
\]
Moreover, we have
\begin{equation}
\lim_{m,n \to \infty} \|\mathcal{E}_{m} - \mathcal{E}_{n}\|_F 
\leq \lim_{m,n \to \infty} \sum_{t=n}^{m-1} \|\mathcal{E}_{t+1} - \mathcal{E}_{t}\|_F = 0 
\quad \text{for any } m > n.
\end{equation}
This concludes that the sequence \(\{\mathcal{E}_t\}_{t=1}^\infty\) is Cauchy.

The following KKT conditions establish the first-order optimality for the limit points generated by Algorithm~\ref{alg:TWCTVTRPCA}.
Now, for $0< p < 1$, the problem derived from Equation~\eqref{eq:RTCmodel} becomes a non-convex problem, and KKT conditions become necessary but not sufficient for optimality. Since the limit points of the sequences are \(\{\mathcal{X}^*, \mathcal{G}_{k}^*, \mathcal{E}^*, \mathcal{W}_{\mathcal{E}}^*, \mathcal{Y}_{k}^*, \Upsilon^*\}\), 
the first-order optimality conditions corresponding to the subproblems~\eqref{eq:trpca41} and~\eqref{eq:trpca5} lead to the following system:
\begin{equation}\label{eq:kkt}
\begin{aligned}
& 0 \in \partial\left(\frac{1}{\gamma} \sum_{k=1}^\gamma \|\mathcal{G}_k^*\|_{\mathcal{W}, S_p} \right) - \mu\left( \nabla_k(\mathcal{X}^*) - \mathcal{G}_k^* + \frac{\mathcal{Y}_k^*}{\mu} \right), \\
& 0 \in \partial \left( \|\mathcal{E}^*\|_{\mathcal{W}_{\mathcal{E}}, 1} \right) + \frac{\mu}{\lambda}\left( P_{\Omega}(\mathcal{M}) - \mathcal{X}^* - \mathcal{E}^* + \frac{\Upsilon^*}{\mu} \right),
\end{aligned}
\end{equation}
Here, $\partial$ represents the subdifferential operator corresponding to the associated nonconvex regularization terms. By reorganizing terms, we can further simplify the optimality conditions as:
\begin{equation}\label{eq:kkt2}
\left.
\begin{aligned}
& 0 \in \frac{1}{\gamma} \partial \left( \sum_{k=1}^\gamma \|\mathcal{G}_k^*\|_{\mathcal{W}, S_p} \right) + \mathcal{Y}_k^*, \\
& 0 \in \lambda \partial \left( \|\mathcal{E}^*\|_{\mathcal{W}_{\mathcal{E}}, 1} \right) + \Upsilon^*,
\end{aligned}  \right\}
\quad \text{subject to} \quad \mathcal{G}_k^* = \nabla_k(\mathcal{X}^*), \quad P_{\Omega}(\mathcal{M}) = \mathcal{X}^* + \mathcal{E}^*.
\end{equation}

\noindent Therefore, any limit point satisfies the Karush-Kuhn-Tucker (KKT) conditions associated with problem~\eqref{eq:trpca2}.

\end{proof}

\section{Numerical Experiment and Discussions}
In this section, we analyze the performance of Algorithm~\ref{alg:TWCTVTRPCA} through empirical convergence behavior and phase transition diagrams demonstrating recovery guarantees. Also, we evaluate the effectiveness of our proposed methods against state-of-the-art techniques using various datasets, including color images, multiple spectral images, hyperspectral images, and color videos. The experiments are conducted in MATLAB R2023b running on Windows 11, using an Intel(R) Core(TM) $i9$-$12900$K ($3.2$~GHz) processor with 32~GB RAM.  All experiments use the same algorithmic parameters: maximum iterations $T_{\max} = 500$, tolerance $\epsilon = 10^{-8}$, initial penalty parameter $\mu_0 = 10^{-6}$, and penalty update ratio $\rho = 1.1$. The regularization parameter is set as $\lambda = 1/\sqrt{n_1 n_2 n_3..n_d / \min(n_1, n_2)}$ following standard practice.
\subsection{Choice of transform matrix on synthetic data}
In this experiment, we test the proposed TWCTV framework on fourth-order ($d=4$) synthetic low-rank tensors $\mathcal{M} \in \mathbb{R}^{n \times n \times n_3 \times n_4}$ with $n_3 = n_4 = 20$ under different invertible linear transforms matrix $M$. For each configuration $(n, r)$, 
the ground-truth tensor is generated via the higher-order $M$-product as
\begin{equation}\label{eq:generatetensor}
    \mathcal{M} = \mathcal{M}_1 *_{M} \mathcal{M}_2,
\end{equation}
where $\mathcal{M}_1 \in \mathbb{R}^{n \times r \times n_3 \times n_4}$ 
and $\mathcal{M}_2 \in \mathbb{R}^{r \times n \times n_3 \times n_4}$ have 
i.i.d.\ entries drawn from $\mathcal{N}(0, 1/n)$, so that 
$\mathrm{rank}_{\mathrm{M-SVD}}(\mathcal{M}) = r$ by construction. 
We choose the invertible linear transform $M \in \mathbb{R}^{n_k \times n_k}$ applied along modes $k = 3, 4$, including the Discrete Fourier Transform (DFT), the Discrete Cosine Transform (DCT), the Discrete Wavelet Transform (DWT, Haar basis), and a Random Orthogonal Transform (ROT). A random observation mask 
$\Omega$ is generated by uniformly sampling $m = \lfloor p \cdot 
n^2 n_3 n_4 \rfloor$ entries without replacement, with a fixed 
sampling ratio $p = 0.5$, and the observed tensor is 
$\mathcal{P}_{\Omega}(\mathcal{M})$. Recovery quality is measured 
by the relative error
\begin{equation*}
    \mathrm{RE} = \frac{\|\hat{\mathcal{X}} - \mathcal{M}\|_F}
                       {\|\mathcal{M}\|_F},
\end{equation*}
where $\hat{\mathcal{X}}$ denotes the recovered tensor.
 The recovered relative errors and runtimes are reported 
in Table~\ref{tab:tc4}.

\begin{table}[!h]
\centering

\caption{Recovered results on fourth-order synthetic tensor
($d = 4$, $n_3 = n_4 = 20$, $p = 0.5$).}
\label{tab:tc4}
\renewcommand{\arraystretch}{1.1}
\setlength{\tabcolsep}{6pt}

\begin{tabular}{cc|lr|lr|lr|lr}
\hline
\multirow{2}{*}{$n$} & \multirow{2}{*}{$r$}
& \multicolumn{2}{c|}{DCT}
& \multicolumn{2}{c|}{DFT}
& \multicolumn{2}{c|}{DWT}
& \multicolumn{2}{c}{ROT} \\
& & RE & Time
  & RE & Time
  & RE & Time
  & RE & Time \\
\hline

\multirow{2}{*}{30}
& 3 & $1.78\times10^{-8}$ & 7.56
    & $7.43\times10^{-8}$ & 25.69
    & $8.41\times10^{-6}$ & 29.51
    & $8.45\times10^{-6}$ & 29.39 \\
& 5 & $3.97\times10^{-8}$ & 7.53
    & $7.84\times10^{-8}$ & 25.75
    & $8.46\times10^{-6}$ & 29.52
    & $8.47\times10^{-6}$ & 29.23 \\
\hline

\multirow{4}{*}{50}
& 3  & $1.09\times10^{-7}$ & 18.71
      & $6.83\times10^{-7}$ & 27.36
      & $8.37\times10^{-6}$ & 31.78
      & $8.40\times10^{-6}$ & 31.96 \\
& 5  & $1.74\times10^{-7}$ & 19.01
      & $7.43\times10^{-7}$ & 27.39
      & $8.42\times10^{-6}$ & 32.16
      & $8.44\times10^{-6}$ & 31.86 \\
& 8  & $3.79\times10^{-7}$ & 19.12
      & $7.81\times10^{-7}$ & 28.80
      & $3.22\times10^{-5}$ & 35.01
      & $8.48\times10^{-6}$ & 34.97 \\
& 10 & $4.65\times10^{-7}$ & 19.99
      & $7.92\times10^{-7}$ & 30.06
      & $5.46\times10^{-6}$ & 34.64
      & $7.55\times10^{-5}$ & 34.46 \\
\hline
\end{tabular}

\end{table}
As shown in Table~\ref{tab:tc4}, the DCT consistently produces the lowest relative error across all tested cases, often outperforming DFT, DWT, and ROT by one to two orders of magnitude. This improvement is mainly due to the energy compaction property of the DCT: for real-valued tensors with smooth variations, most of the signal energy is concentrated in a few low-frequency components, making the data more low-rank in the transform domain and easier to recover using norm minimization. In contrast, the DFT assumes periodic boundary conditions, which can introduce artificial discontinuities at the boundaries of non-periodic data and spread the energy across many frequencies, thereby weakening the low-rank structure. The DWT and ROT also result in relatively high errors in all settings, indicating that neither the wavelet basis nor a random orthogonal basis matches the underlying low-rank structure of the synthetic tensors well.

It is also worth noting that there is no universally optimal choice of the transform matrix $M$ for all datasets. In practice, the selection depends on the structural characteristics of the data and the prior assumptions one intends to impose.


\subsection{Relative error convergence}
In this study, we investigate how the algorithm converges when using different transform matrices (DCT and DFT) within the $M$-product framework. We assess the Algorithm~\ref{alg:TWCTVTRPCA}'s convergence by tracking the relative errors between successive iterations during the reconstruction process.  Specifically, the relative errors for \(\mc{X}\) and \(\mc{E}\) at the \((t+1)\)-th iteration are defined as follows:
\begin{equation*}
    \text{Relative error of $\mc{X}$} = \min\left\{1, \frac{\|\mc{X}_{t+1} - \mc{X}_t\|_F}{\|\mc{X}_t\|_F}\right\}, \quad
\text{Relative error of $\mc{E}$} = \min\left\{1, \frac{\|\mc{E}_{t+1} - \mc{E}_t\|_F}{\|\mc{E}_t\|_F}\right\}.
\end{equation*}

\begin{figure}[!h]
\centering
\begin{tabular}{cc}
\includegraphics[width=0.5\textwidth]{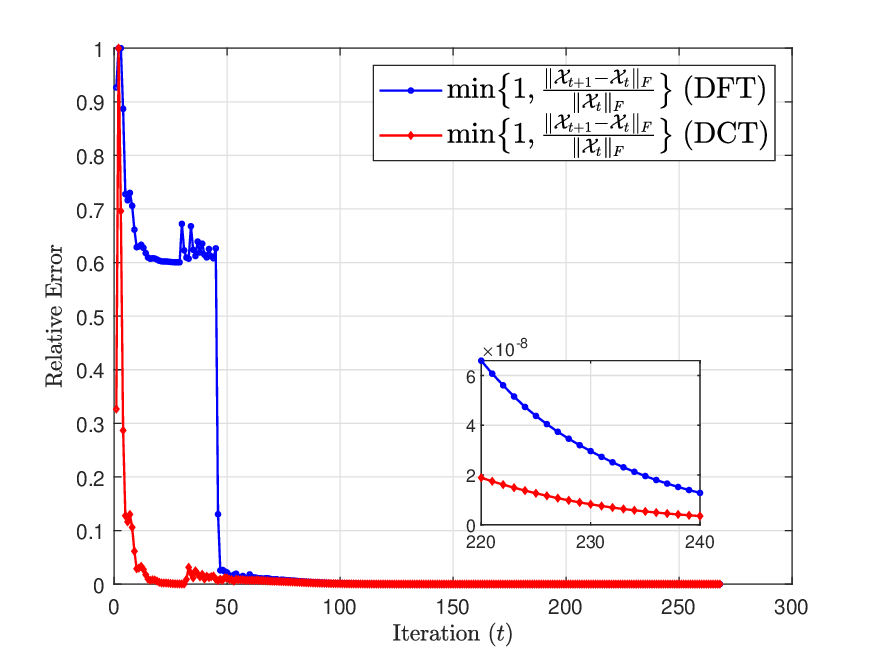} &
\includegraphics[width=0.5\textwidth]{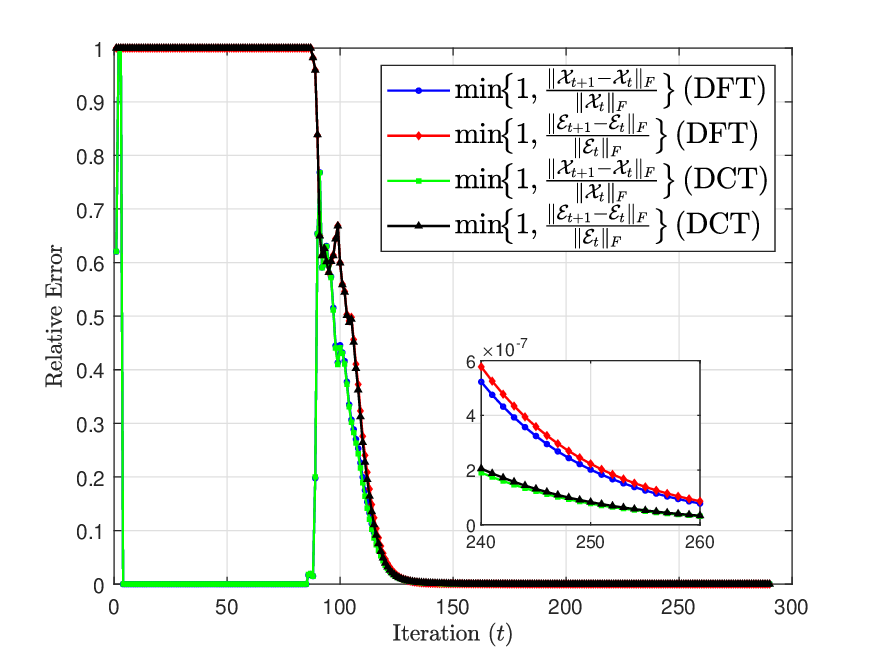} \\
(a) Tensor completion. & (b) TRPCA.
\end{tabular}
\caption{Relative error curves showing convergence behavior of: (a) Tensor completion (on the tensor of size $256\times 256\times 3$) with TWCTV regularization and (b) TRPCA (on the tensor of size $256\times 256\times 3$) with  TWCTV regularization and weighted $\ell_1$ sparse term.}
\label{fig:relerror}
\end{figure}
Figure~\ref{fig:relerror} shows that Algorithm~\ref{alg:TWCTVTRPCA} converges after certain iterations and achieve steady convergence. The convergence curve also demonstrates that using DCT yields faster results than DFT, as DCT relies on real arithmetic, while DFT involves complex arithmetic. In addition, the stopping criterion ensures that the algorithm halts when the relative changes fall below a predefined threshold, further validating its stability.

\subsection{Phase transition}
In this subsection, we evaluate Algorithm~\ref{alg:TWCTVTRPCA}'s recovery guarantee under various sampling rates and rank conditions. A synthetic low-tubal-rank tensor $\mathcal{X}$ is constructed via the $M$-product of two tensors as defined in~\eqref{eq:generatetensor}, guaranteeing that $\mathcal{X}$ has tubal rank $r$. The recovery experiments are performed on a $40 \times 40 \times 20$ tensor with varying tubal rank $r$ and sampling rate (ratio of observed entries $m$ to total entries $n_1n_2n_3$). Each configuration is tested 10 times using both weighted and non-weighted TCTV approaches. A recovery is deemed successful when the relative Frobenius norm error satisfies:
\begin{equation*}
\frac{\|\mathcal{X}-\bar{\mathcal{X}}\|_F}{\|\mathcal{X}\|_F} < 10^{-3},
\end{equation*}
where $\bar{\mathcal{X}}$ denotes the recovered tensor.

\begin{figure}[!ht]
\centering
\begin{tabular}{cc}
\includegraphics[width=0.5\textwidth]{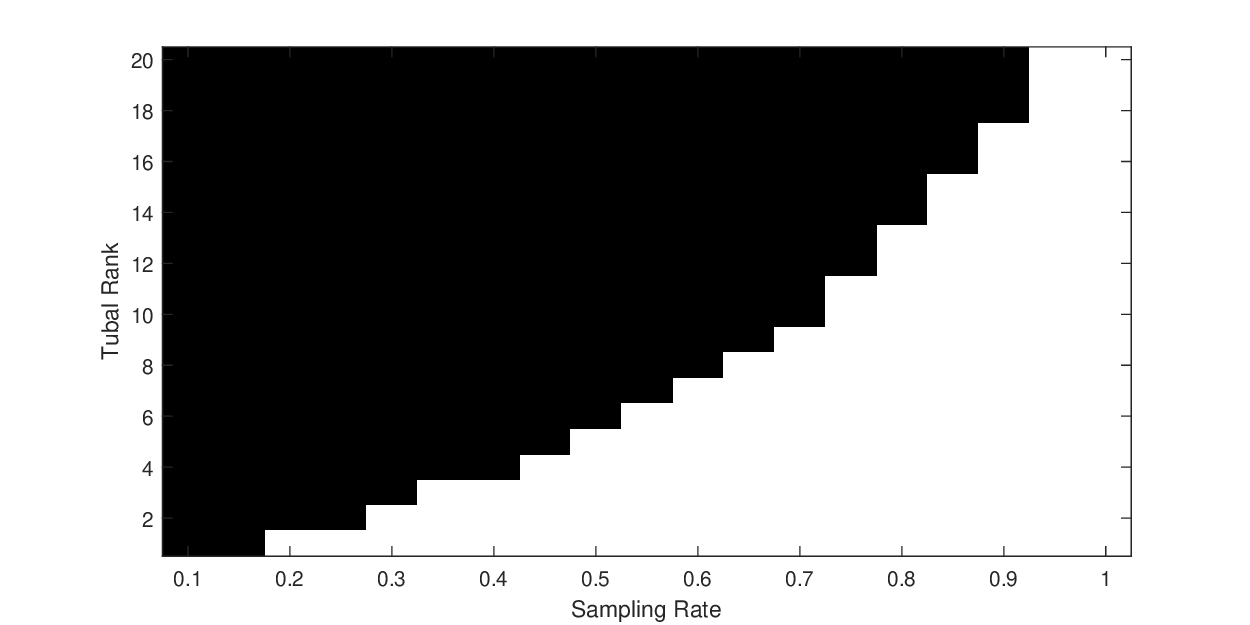} &
\includegraphics[width=0.5\textwidth]{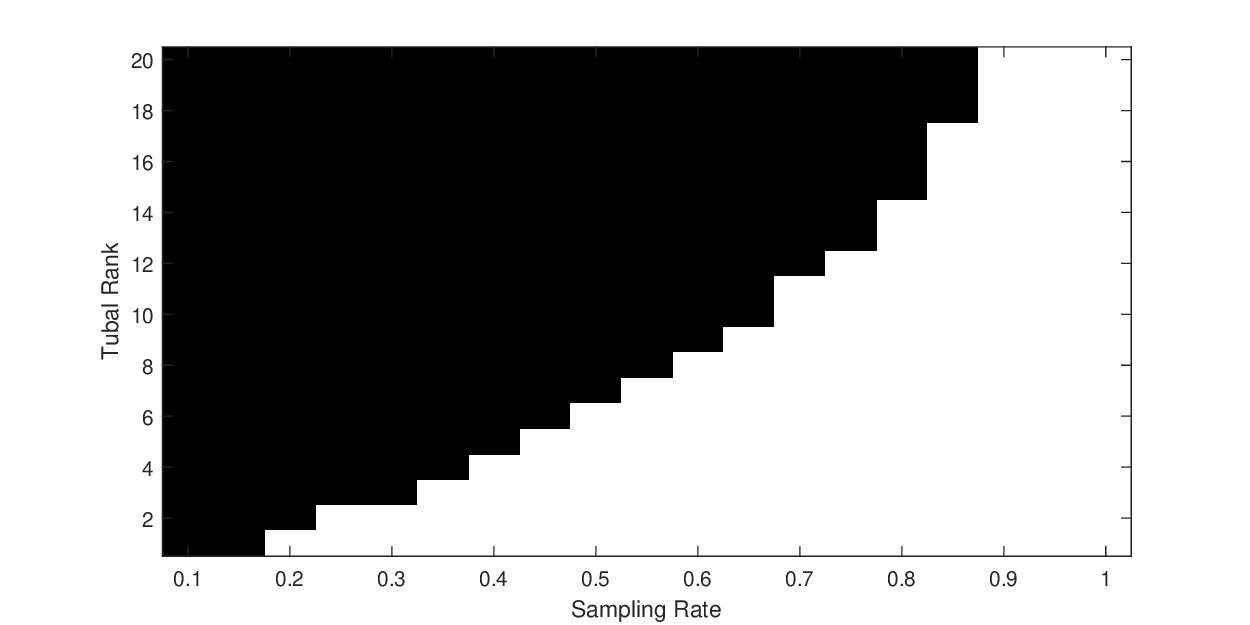} \\
(a)Tensor completion with TCTV \cite{Wang202310990} regularizer. & (b) Tensor completion with TWCTV regularizer.
\end{tabular}
\caption{Phase transition diagrams for tensor completion (size: $40 \times 40 \times 20$): (a) TCTV regularizer with successful cases (white): 142 (37.37\%) and (b) Weighted Schatten $p$-norm ($p=0.9$) based TWCTV regularizer with successful cases (white): 160 (42.11\%).}
\label{fig:phase_transition}
\end{figure}

The phase transition diagrams in Figure~\ref{fig:phase_transition} visualize the success rates of recovery, where each cell represents the ratio of successful recovery to total trials (10). Success rates range from 0 to 1, with white cells indicating perfect recovery (10/10 successful trials) and black cells indicating failure. The results demonstrate the superior performance of our weighted approach compared to the non-weighted version, achieving more successful cases than the TCTV.

\subsection{$p$-sensitivity analysis and ablation study}
\subsubsection{$p$-sensitivity analysis}
In this subsection we will conduct a comprehensive sensitivity analysis of Schatten-$p$ on synthetic data. The synthetic tensor is generated with dimensions $80 \times 80 \times 20$ using a DCT-based M-product framework with tubal rank $r=5$. We introduce $8\%$ sampling rate with $2\%$ corruption ratio and outlier magnitude of $0.4$. The Schatten-$p$ exponent is varied from $p \in \{0.1, 0.2, \ldots, 0.9\}$. For each $p$ value, we perform 10 independent trials with different random seeds to ensure statistical reliability. Figure~\ref{fig:p_sensitivity} illustrates the impact of the Schatten-$p$ exponent on tensor recovery performance across three evaluation metrics. 
As shown in Fig.~\ref{fig:p_sensitivity}(a), smaller $p$ values yield consistently lower solver residuals 
$\|P_{\Omega}(\mathcal{M})-\mathcal{X}-\mathcal{E}\|_F$, demonstrating that nonconvex regularization more effectively enforces low-rank structure 
by suppressing minor singular values. 
The best recovery accuracy is achieved within $p\!\in[0.6,0.9]$
confirming the benefit of nonconvexity for challenging recovery tasks. 
Figure~\ref{fig:p_sensitivity}(b) shows that computational time remains nearly constant across all $p$ around $0.6-0.9$. 
Finally, Fig.~\ref{fig:p_sensitivity}(c) reports the number of iterations required for convergence.
This efficiency arises because stronger nonconvexity gives more selective shrinkage on smaller singular values. 
Overall, moderate nonconvexity ($p\!\approx\!0.8$ or $0.9$) achieves the optimal balance among reconstruction accuracy, stability, 
and convergence speed, making it a robust default choice for tensor recovery problems.

\begin{figure}[!h]
\centering
\begin{tabular}{ccc}
\includegraphics[width=0.32\textwidth]{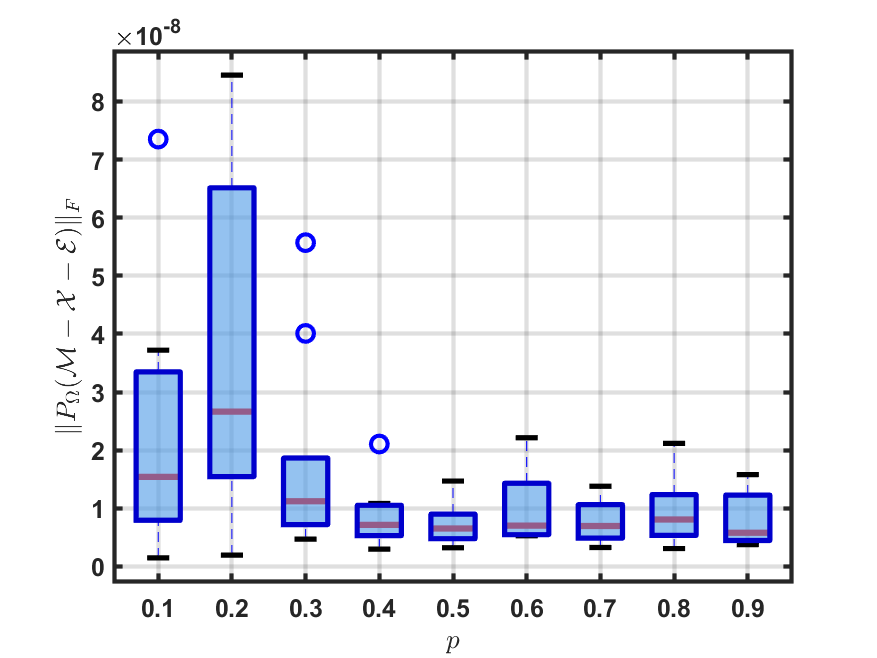} &
\includegraphics[width=0.32\textwidth]{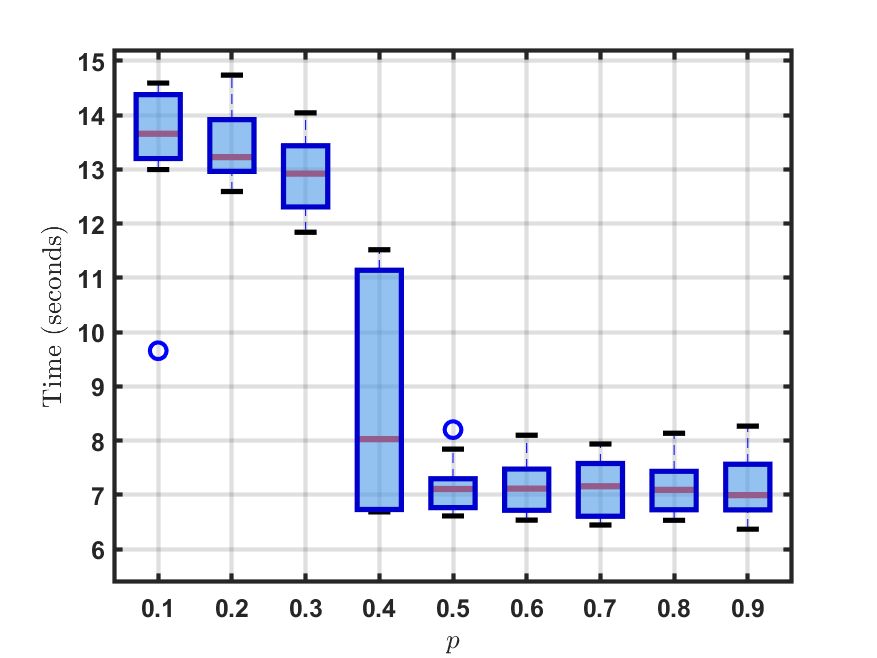} &
\includegraphics[width=0.32\textwidth]{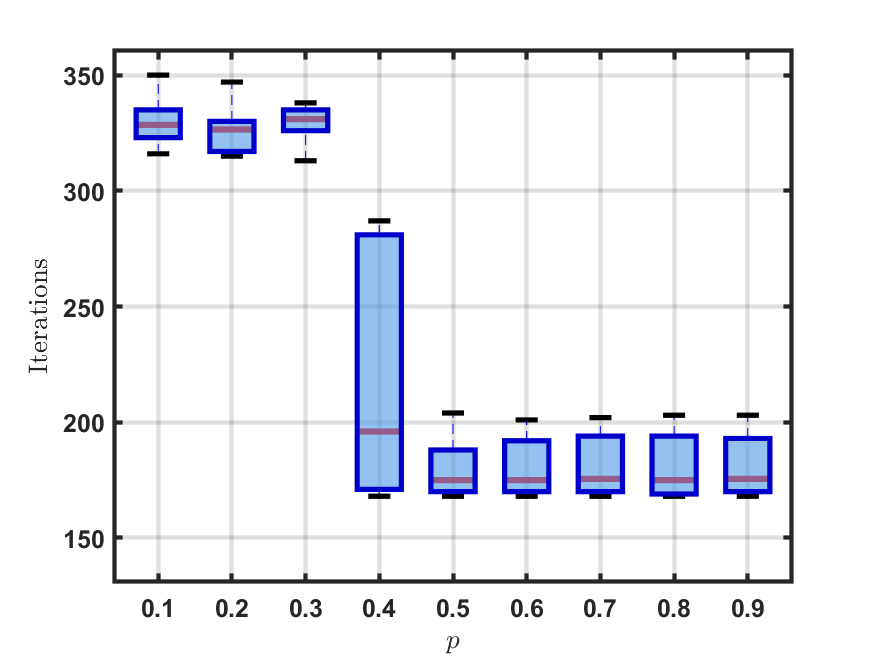} \\
(a) Solver residual. & (b) Computation time. & (c) Iterations to convergence.
\end{tabular}
\caption{Sensitivity analysis of the Schatten-$p$ exponent on synthetic tensor data ($80\times80\times20$, sampling rate $8\%$, corruption ratio $2\%$).}
\label{fig:p_sensitivity}
\end{figure}

\subsubsection{Ablation study: two nonconvex vs. one nonconvex regularization}

In this subsection we demonstrate the superiority of using two nonconvex regularizers. We compare the proposed model (TWCTV + weighted $\ell_1$) against a baseline using only one nonconvex term (TWCTV + standard convex $\ell_1$). 
Experiments are conducted on synthetic tensors of sizes $80\times80\times20$ and $100\times100\times20$ with $8\%$ sampling rate and $2\%$ corruption ratio, using identical algorithmic parameters for both models. As shown in Table~\ref{tab:ablation_compact}, the two-nonconvex formulation consistently delivers superior convergence accuracy across tensor sizes. 
For example, the suggested model obtains solver residual reductions of $79.3\%$ and $65.4\%$ on the $80\times80\times20$ and $100\times100\times20$ tensors, respectively, demonstrating the efficacy of adaptive weighting in improving sparse–low-rank separation.The execution timings for both tensor sizes are virtually equal (about $5.0$s for $80\times80\times20$ and $8.2$s for $100\times100\times20$), with iteration counts differing by no more than $3\%$. 
This indicates that adaptive weight updates in the weighted $\ell_1$ phrase have little overhead relative to tensor decomposition procedures.

\begin{table}[!h]
\centering
\caption{Ablation study comparing two-nonconvex and one-nonconvex models on synthetic tensors. Solver residuals measured as $\|P_{\Omega}(\mathcal{M})-\mathcal{X}-\mathcal{E}\|_F$.}
\label{tab:ablation_compact}
\begin{tabular}{lcccc}
\hline
\textbf{Tensor Size} & \textbf{Model} & \textbf{Solver Residual} & \textbf{Time (s)} & \textbf{Iterations} \\
\hline
\multirow{2}{*}{$80\times80\times20$} 
& Two Nonconvex & $2.82\times10^{-9}$ & $5.06$ & $170$ \\
& One Nonconvex & $1.36\times10^{-8}$ & $4.99$ & $164$ \\
\cline{2-5}

\hline
\multirow{2}{*}{$100\times100\times20$} 
& Two Nonconvex & $6.65\times10^{-9}$ & $8.22$ & $175$ \\
& One Nonconvex & $1.92\times10^{-8}$ & $8.18$ & $173$ \\
\cline{2-5}

\hline
\end{tabular}
\end{table}

\subsection{Application on image impainting}

In this subsection, we demonstrate tensor completion's effectiveness for image restoration tasks, leveraging the natural multi-dimensional structure of color images and multispectral image (MSI) data where each color channel or spectral band forms a mode of the tensor. We compare the proposed TWCTV method with current state-of-the-art tensor recovery techniques to validate its effectiveness. Various tensor completion methods, including SNN \cite{liu2012snn}, BCPF \cite{zhao2015bcpf}, KBR \cite{Qi2018kbr}, IRTNN \cite{wang2022irtnn}, TNN \cite{wenjin2022tnn}, TNN+TV \cite{qiu2021tnntv}, SNN+TV \cite{li2017snntv}, SPC+TV \cite{yokota2016spctv}, TCTV \cite{Wang202310990}, and the proposed TWCTV based tensor completion method, are applied to recover the missing entries. In our experiments, we set $p=0.9$ and choose $M$ as the DCT matrix for the proposed method. We evaluate reconstruction quality using three metrics: peak signal-to-noise ratio (PSNR), which assesses pixel-wise accuracy; structural similarity index (SSIM), which measures structural similarity; and feature similarity index (FSIM), which evaluates feature preservation. 

\subsubsection{Color image inpainting} In this part, we apply the proposed algorithm to color image recovery and compare it with other state-of-the-art methods. We use the image ``Facade''\footnote{\url{https://sipi.usc.edu/database/}} as the input, which is a tensor of size $256 \times 256 \times 3$. In our experiments, we evaluate performance under both random and text masking. For the random masking, we remove pixels at uniform rates of 
$10\%, 20\%, 50\%, \text{and}\ 80\%$, creating observations with different levels of missing data. For the text masking, we overlay text at predefined positions to mimic real-world cases such as watermarks. The proposed TWCTV method consistently outperforms existing methods, including SNN, TNN, BCPF, KBR, and TCTV, in terms of reconstruction quality, achieving higher metric values across different sampling rates and the text mask. The results shown in Figure~\ref{fig:comparison} and Table~\ref{tab:imp1} demonstrate that the proposed method effectively preserves both the colors and textures.

\begin{figure}[!ht]
\centering

\begin{tabular}{cccccc}
\includegraphics[width=0.15\textwidth]{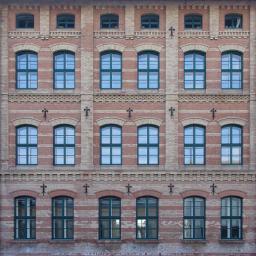} &
\includegraphics[width=0.15\textwidth]{image/Color_TC_Observed.jpg} &
\includegraphics[width=0.15\textwidth]{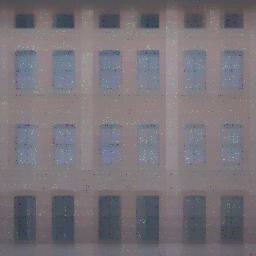} &
\includegraphics[width=0.15\textwidth]{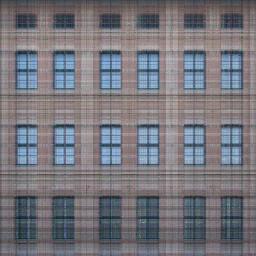} &
\includegraphics[width=0.15\textwidth]{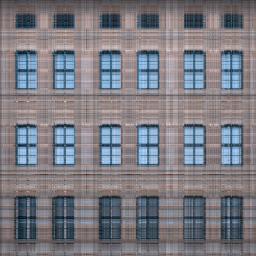} &
\includegraphics[width=0.15\textwidth]{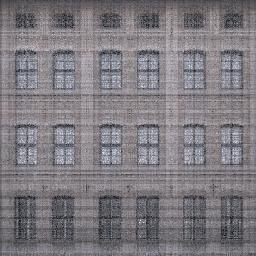} \\
Original & Observed & SNN~\cite{liu2012snn} & SNN+TV~\cite{li2017snntv} & BCPF~\cite{zhao2015bcpf} & KBR~\cite{Qi2018kbr} \\
\includegraphics[width=0.15\textwidth]{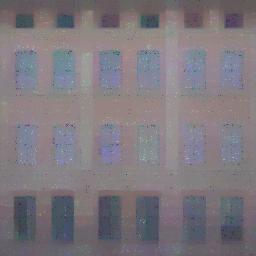} &
\includegraphics[width=0.15\textwidth]{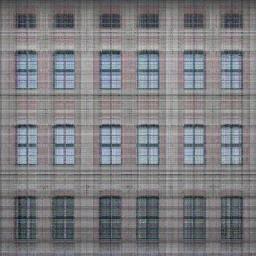} &
\includegraphics[width=0.15\textwidth]{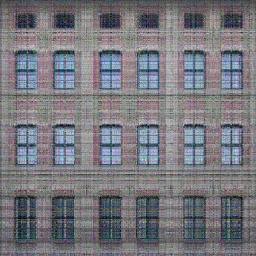} &
\includegraphics[width=0.15\textwidth]{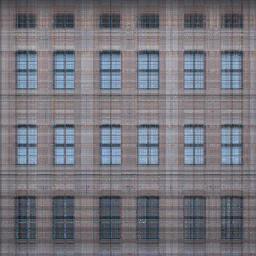} &
\includegraphics[width=0.15\textwidth]{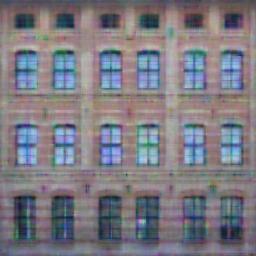} &
\includegraphics[width=0.15\textwidth]{image/Color_TC_TWCTV.jpg} \\
TNN~\cite{wenjin2022tnn} & TNN+TV~\cite{qiu2021tnntv} & IRTNN~\cite{wang2022irtnn} & SPC+TV~\cite{yokota2016spctv} & TCTV~\cite{Wang202310990} & TWCTV \\
\end{tabular}

\caption{Visual comparison of different tensor completion methods with an observed tensor having 95\% missing entries.}
\label{fig:comparison}
\end{figure}

\begin{table}[!htbp]
\caption{Results for color image inpainting under different sampling rates (SR).}
\vspace{2mm}
\centering
\setlength{\tabcolsep}{2.5mm}

\begin{tabular}{c|cccc|cccc}
\hline
\multirow{2}{*}{Image} 
& \multicolumn{4}{c|}{SR = 5\%} 
& \multicolumn{4}{c}{SR = 10\%} \\
\cline{2-9}
& PSNR & SSIM & FSIM & Time 
& PSNR & SSIM & FSIM & Time \\
\hline
Observed & 5.966 & 0.013 & 0.490 & 0.000 & 6.199 & 0.025 & 0.559 & 0.000 \\
SNN & 18.809 & 0.344 & 0.547 & 2.860 & 21.594 & 0.548 & 0.701 & 2.888 \\
SNN+TV & 22.265 & 0.684 & 0.833 & 11.686 & 24.871 & 0.793 & 0.884 & 12.432 \\
BCPF & 23.295 & 0.707 & 0.850 & 19.196 & 25.570 & 0.796 & 0.903 & 33.455 \\
KBR & 20.317 & 0.580 & 0.797 & 19.188 & 25.489 & 0.797 & 0.900 & 22.017 \\
TNN & 18.825 & 0.323 & 0.530 & 4.474 & 20.654 & 0.487 & 0.660 & 4.371 \\
TNN+TV & 21.636 & 0.668 & 0.828 & 14.924 & 24.972 & 0.802 & 0.895 & 15.716 \\
IRTNN & 20.771 & 0.616 & 0.821 & 119.125 & 24.557 & 0.779 & 0.897 & 55.370 \\
SPC+TV & 21.540 & 0.652 & 0.816 & 16.439 & 24.562 & 0.783 & 0.881 & 17.248 \\
TCTV & 22.586 & 0.656 & 0.796 & 20.710 & 25.994 & 0.820 & 0.899 & 21.333 \\
TWCTV & \textbf{25.337} & \textbf{0.799} & \textbf{0.893} & 17.097 & \textbf{27.884} & \textbf{0.872} & \textbf{0.934} & 17.314 \\
\hline
\end{tabular}

\vspace{5mm}

\begin{tabular}{c|cccc|cccc}
\hline
\multirow{2}{*}{Image} 
& \multicolumn{4}{c|}{SR = 20\%} 
& \multicolumn{4}{c}{SR = 50\%} \\
\cline{2-9}
& PSNR & SSIM & FSIM & Time 
& PSNR & SSIM & FSIM & Time \\
\hline
Observed & 6.709 & 0.055 & 0.598 & 0.000 & 8.751 & 0.178 & 0.636 & 0.000 \\
SNN & 25.314 & 0.777 & 0.859 & 2.379 & 31.426 & 0.949 & 0.972 & 1.910 \\
SNN+TV & 27.620 & 0.879 & 0.930 & 11.307 & 33.585 & 0.969 & 0.980 & 11.280 \\
BCPF & 28.501 & 0.886 & 0.940 & 32.093 & 33.277 & 0.960 & 0.975 & 36.049 \\
KBR & 28.861 & 0.894 & 0.946 & 20.620 & 35.393 & 0.977 & 0.987 & 22.974 \\
TNN & 23.733 & 0.719 & 0.819 & 3.879 & 29.857 & 0.935 & 0.964 & 3.886 \\
TNN+TV & 28.141 & 0.892 & 0.939 & 14.943 & 34.726 & 0.976 & 0.984 & 15.103 \\
IRTNN & 28.229 & 0.888 & 0.942 & 29.121 & 34.879 & 0.975 & 0.985 & 17.373 \\
SPC+TV & 27.599 & 0.878 & 0.929 & 11.166 & 32.181 & 0.960 & 0.974 & 8.302 \\
TCTV & 29.510 & 0.914 & 0.953 & 20.003 & 36.519 & 0.983 & 0.990 & 21.073 \\
TWCTV & \textbf{30.860} & \textbf{0.933} & \textbf{0.962} & 17.411 & \textbf{36.650} & \textbf{0.983} & \textbf{0.990} & 18.804 \\
\hline
\end{tabular}

\label{tab:imp1}
\end{table}

\begin{figure}[!ht]
\centering

\begin{tabular}{ccccc}
\includegraphics[width=0.15\textwidth]{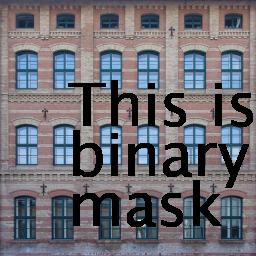} &
\includegraphics[width=0.15\textwidth]{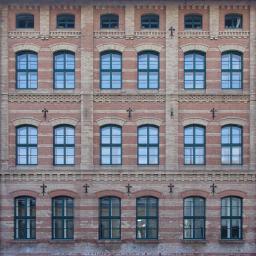} &
\includegraphics[width=0.15\textwidth]{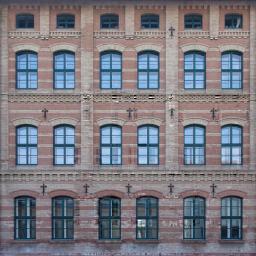} &
\includegraphics[width=0.15\textwidth]{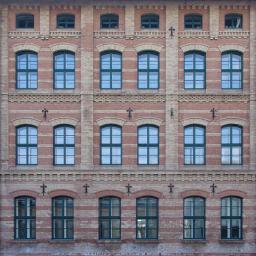} &
\includegraphics[width=0.15\textwidth]{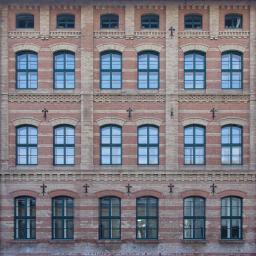} \\
Observed & SNN~\cite{liu2012snn} & BCPF~\cite{zhao2015bcpf} & KBR~\cite{Qi2018kbr} & TNN~\cite{wenjin2022tnn} \\

\includegraphics[width=0.15\textwidth]{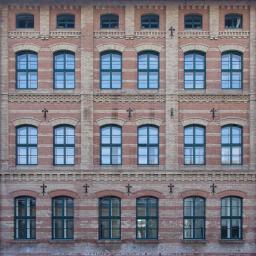} &
\includegraphics[width=0.15\textwidth]{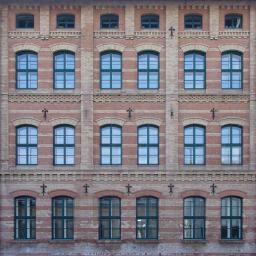} &
\includegraphics[width=0.15\textwidth]{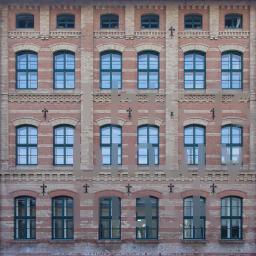} &
\includegraphics[width=0.15\textwidth]{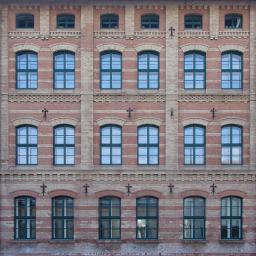} &
\includegraphics[width=0.15\textwidth]{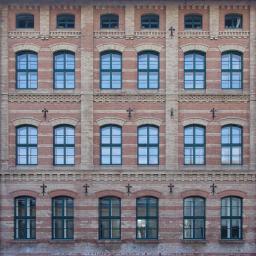} \\
IRTNN~\cite{wang2022irtnn} & SPC+TV~\cite{yokota2016spctv} & TNN+TV~\cite{qiu2021tnntv} & TCTV~\cite{Wang202310990} & TWCTV \\
\end{tabular}

\caption{Visual comparison of different tensor completion methods under a text mask. The first row shows the observed image and results from SNN, BCPF, KBR, and TNN. The second row shows results from IRTNN, SPC+TV, TNN+TV, TCTV, and the proposed TWCTV method.}
\label{fig:comparison_tc_text}
\end{figure}

\begin{table}[!ht]
\caption{Quantitative comparison of different methods on color image inpainting using text mask.}
\vspace{2mm}
\centering

\setlength{\tabcolsep}{2.5mm}{
   \begin{tabular}{cccccc}
       \hline
       \multicolumn{1}{c}{} &\multicolumn{1}{c}{Image} & \multicolumn{1}{c}{PSNR} & \multicolumn{1}{c}{SSIM} & \multicolumn{1}{c}{FSIM} & \multicolumn{1}{c}{Time} \\
       \hline
       & \multicolumn{1}{c}{Observed} &\multicolumn{1}{c}{14.957} & \multicolumn{1}{c}{0.786} &\multicolumn{1}{c}{0.772} & \multicolumn{1}{c}{0.000} \\
       & \multicolumn{1}{c}{SNN} &\multicolumn{1}{c}{33.706} & \multicolumn{1}{c}{0.977} &\multicolumn{1}{c}{0.983} & \multicolumn{1}{c}{1.801} \\
       & \multicolumn{1}{c}{BCPF} &\multicolumn{1}{c}{29.883} & \multicolumn{1}{c}{0.948} &\multicolumn{1}{c}{0.967} & \multicolumn{1}{c}{55.891} \\
       & \multicolumn{1}{c}{KBR} &\multicolumn{1}{c}{34.547} & \multicolumn{1}{c}{0.978} &\multicolumn{1}{c}{0.986} & \multicolumn{1}{c}{20.953} \\
       & \multicolumn{1}{c}{TNN} &\multicolumn{1}{c}{28.966} & \multicolumn{1}{c}{0.928} &\multicolumn{1}{c}{0.945} & \multicolumn{1}{c}{4.180} \\
      
       & \multicolumn{1}{c}{IRTNN} &\multicolumn{1}{c}{33.242} & \multicolumn{1}{c}{0.973} &\multicolumn{1}{c}{0.982} & \multicolumn{1}{c}{55.065} \\
       & \multicolumn{1}{c}{SPC+TV} &\multicolumn{1}{c}{33.546} & \multicolumn{1}{c}{0.975} &\multicolumn{1}{c}{0.983} & \multicolumn{1}{c}{4.509} \\
        & \multicolumn{1}{c}{TNN+TV} &\multicolumn{1}{c}{33.639} & \multicolumn{1}{c}{0.976} &\multicolumn{1}{c}{0.983} & \multicolumn{1}{c}{15.136} \\
       & \multicolumn{1}{c}{TCTV} &\multicolumn{1}{c}{{35.139}} & \multicolumn{1}{c}{{0.979}} &\multicolumn{1}{c}{{0.986}} & \multicolumn{1}{c}{22.218} \\
       & \multicolumn{1}{c}{TWCTV} &\multicolumn{1}{c}{\bf 35.452} & \multicolumn{1}{c}{\bf 0.980} &\multicolumn{1}{c}{\bf 0.987} & \multicolumn{1}{c}{17.688} \\
       \hline
   \end{tabular}
   
}
\label{tab:quantitative_results}

\end{table}

\subsubsection{MSI impainting}

In this experiment, the proposed TWCTV algorithm is applied to the task of multispectral image (MSI) inpainting and compared with several state-of-the-art methods. We evaluate on the ``cloth\_ms'' data\footnote{\url{https://cave.cs.columbia.edu/repository/Multispectral/Stuff}} of size $256\times256\times31$, with 5\% randomly sampled entries. The performance of the recovery methods is evaluated using both quantitative metrics and visual inspection. Table~\ref{table:MSI_inpainting_results} presents the quantitative results, while Figure~\ref{fig:pseudo_color_eight} provides a pseudo-color visualization of the reconstruction results using three representative bands (R:25, G:15, B:5).

The better performance of the proposed TWCTV method is mainly due to its use of an exponential weighting mechanism, which helps to improve the recovery results. By assigning higher weights to more significant singular values and lower weights to less relevant ones, TWCTV ensures that key spatial and spectral details are preserved during the reconstruction process. This selective emphasis on important components allows TWCTV to better handle the high sparsity in the data, especially when a large portion of the pixels (95\%) is missing.

\begin{figure}[!ht]
   \centering
   \begin{tabular}{@{}c@{\hspace{1mm}}c@{\hspace{1mm}}c@{\hspace{1mm}}c@{\hspace{1mm}}c@{\hspace{1mm}}c@{\hspace{1mm}}c@{\hspace{1mm}}c@{}}
       \includegraphics[width=0.12\textwidth]{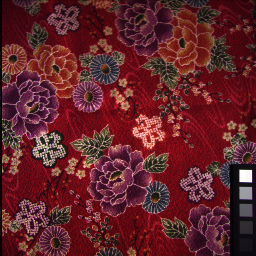} &
       \includegraphics[width=0.12\textwidth]{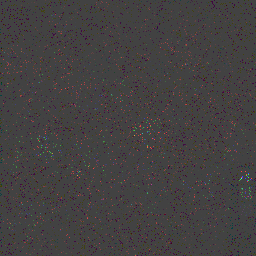} &
       \includegraphics[width=0.12\textwidth]{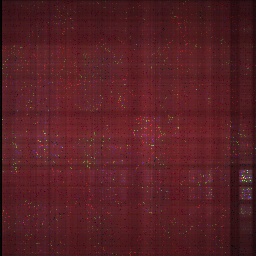} &
       \includegraphics[width=0.12\textwidth]{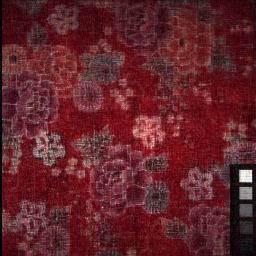} &
       \includegraphics[width=0.12\textwidth]{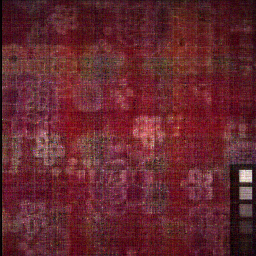} &
       \includegraphics[width=0.12\textwidth]{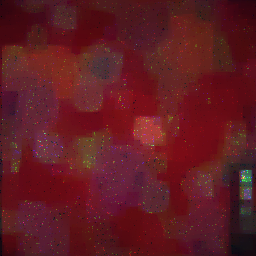} &
       \includegraphics[width=0.12\textwidth]{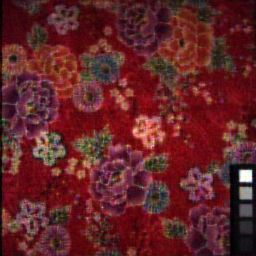} &
       \includegraphics[width=0.12\textwidth]{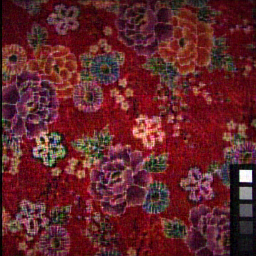} \\
      Original & Observed & SNN~\cite{liu2012snn} & KBR~\cite{Qi2018kbr} & TNN~\cite{wenjin2022tnn} & TNN+TV~\cite{qiu2021tnntv} & TCTV\cite{Wang202310990} & TWCTV \\
       
   \end{tabular}
   \caption{Pseudo-color visualization of  MSI completion methods using bands (R:25, G:15, B:5).}
   \label{fig:pseudo_color_eight}
\end{figure}

\begin{table}[!ht]
\caption{Quantitative results for MSI image inpainting with 5\% sampling rate.}
\vspace{2mm}
\centering

\setlength{\tabcolsep}{2.5mm}{
   \begin{tabular}{ccccc}
       \hline
       \multicolumn{1}{c}{} &\multicolumn{1}{c}{Image} & \multicolumn{1}{c}{PSNR} & \multicolumn{1}{c}{SSIM} & \multicolumn{1}{c}{ERGAS} \\
       \hline
       & \multicolumn{1}{c}{Observed} &\multicolumn{1}{c}{17.956} & \multicolumn{1}{c}{0.230} & \multicolumn{1}{c}{544.529} \\
       & \multicolumn{1}{c}{SNN} &\multicolumn{1}{c}{20.665} & \multicolumn{1}{c}{0.331} & \multicolumn{1}{c}{395.775} \\
       & \multicolumn{1}{c}{KBR} &\multicolumn{1}{c}{25.133} & \multicolumn{1}{c}{0.624} & \multicolumn{1}{c}{232.986} \\
       
       & \multicolumn{1}{c}{TNN} &\multicolumn{1}{c}{21.608} & \multicolumn{1}{c}{0.388} & \multicolumn{1}{c}{361.431} \\
       & \multicolumn{1}{c}{TNN+TV} &\multicolumn{1}{c}{22.249} & \multicolumn{1}{c}{0.423} & \multicolumn{1}{c}{321.439} \\
       & \multicolumn{1}{c}{TCTV} &\multicolumn{1}{c}{25.096} & \multicolumn{1}{c}{0.661} & \multicolumn{1}{c}{231.123} \\
       & \multicolumn{1}{c}{TWCTV} &\multicolumn{1}{c}{\textbf{25.864}} & \multicolumn{1}{c}{\textbf{0.699}} & \multicolumn{1}{c}{\textbf{211.465}} \\
       \hline
   \end{tabular}
}
\label{table:MSI_inpainting_results}
\end{table}

\begin{figure}[!ht]
\centering
\begin{tabular}{cc}
\includegraphics[width=0.5\textwidth]{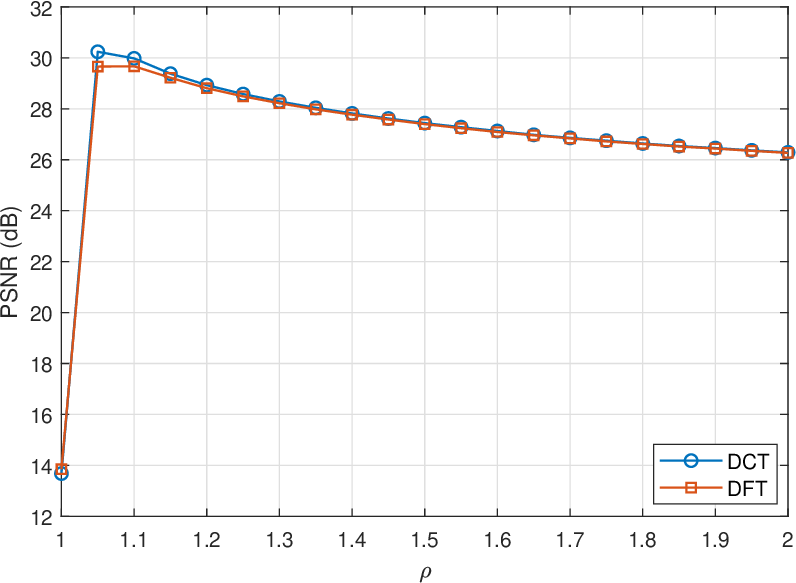} &
\includegraphics[width=0.5\textwidth]{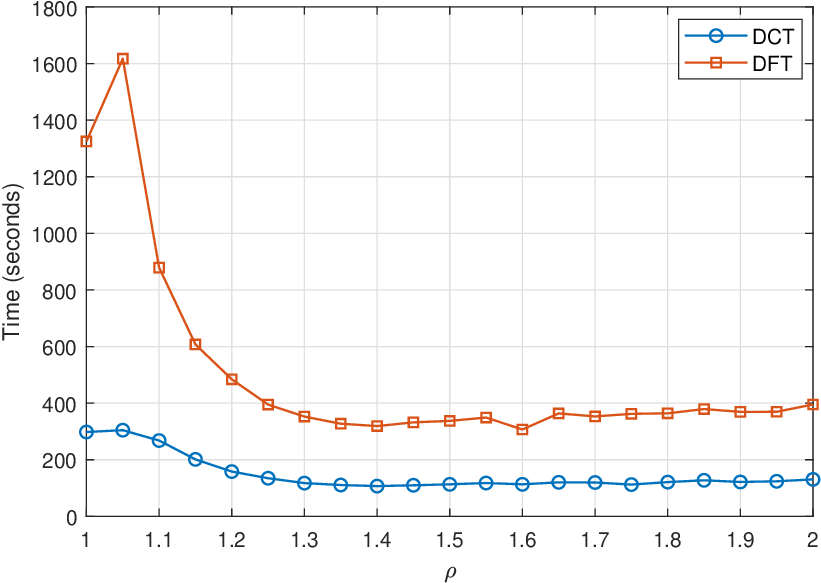} \\
(a) PSNR vs $\rho$ & (b) Time vs $\rho$
\end{tabular}
\caption{Parameter sensitivity for the penalty parameter $\rho$ for TWCTV based tensor completion on MSI image (size: $256 \times 256 \times 31$) with 80\% missing entries.}
\label{fig:parameter_sensitivity_tc}
\end{figure}
The sensitivity analysis of tuning penalty parameter $\rho$ in Algorithm~\ref{alg:TWCTVTRPCA} for TWCTV-based tensor completion is presented in Figure~\ref{fig:parameter_sensitivity_tc}, using MSI data with 80\% missing entries. The PSNR curves show that using $\rho$ values between 1.1 and 1.3 gives stable and high reconstruction quality (about 30\,dB for the DCT matrix and 29.6\,dB for the DFT matrix), and helps avoid the sharp drop in performance at higher $\rho$ values. In terms of computational efficiency, DCT consistently requires only about one-third of the processing time compared to DFT (e.g., $\sim200$s vs $\sim600$s at $\rho=1.15$), making it significantly more practical for real-world applications.

\subsection{ Image denoising using tensor RPCA}
In this subsection, we present the results of denoising experiments conducted on color image and HSI using the tensor RPCA-based method, which effectively separates sparse noise from the low-rank tensor structure inherent in multi-channel image data. A thorough comparison is made between the proposed method and several existing state-of-the-art approaches. The first group of methods includes SNN \cite{liu2012snn}, KBR \cite{Qi2018kbr}, and TNN \cite{wenjin2022tnn}, while the second group consists of LRTV \cite{he2016lrtv}, LRTDTV \cite{wang2018lrtdtv}, TLR-HTV \cite{chen2018tlrhtv}, TCTV \cite{Wang202310990}, and the proposed TWCTV-based TRPCA method. For our experiments, we set $p=0.9$, $M=\text{DCT matrix}$ for the proposed method and $\lambda = \frac{1}{\sqrt{\frac{n_1 n_2 n_3}{\min(n_1, n_2)}}}$ across all compared methods. We assess recovery performance through multiple quality metrics: PSNR captures pixel-wise reconstruction accuracy, SSIM evaluates structural and perceptual similarity, FSIM measures the preservation of low-level image features, and ERGAS provides a global measure of spectral distortion particularly relevant for hyperspectral data.

\subsubsection{Color image denoising}
We evaluate TWCTV-based TRPCA denoising performance against existing methods on the ``starfish'' image \footnote{\url{https://www2.eecs.berkeley.edu/Research/Projects/CS/vision/bsds/}} with a size of $256\times256\times3$ corrupted with salt-and-pepper noise at different levels (10\%, 30\%, 50\%).  As shown in Table~\ref{tab:trpcacoldenoise}, TWCTV consistently outperforms existing methods, including SNN, KBR, TNN, LRTV, and TCTV across all noise levels (0.1, 0.3, 0.5), achieving superior results in all metrics while maintaining competitive computational efficiency.

\begin{figure}[!ht]
\centering

\begin{tabular}{ccccc}
\includegraphics[width=0.18\textwidth]{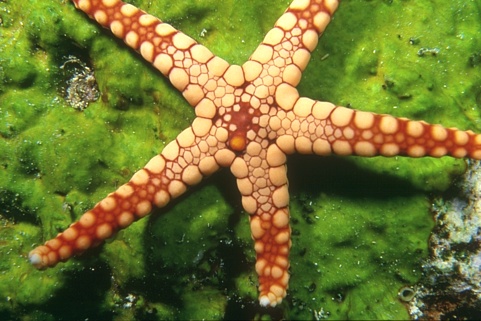} &
\includegraphics[width=0.18\textwidth]{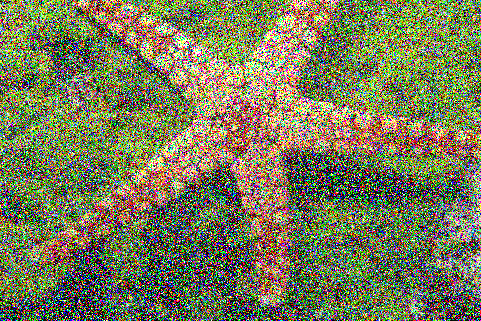} &
\includegraphics[width=0.18\textwidth]{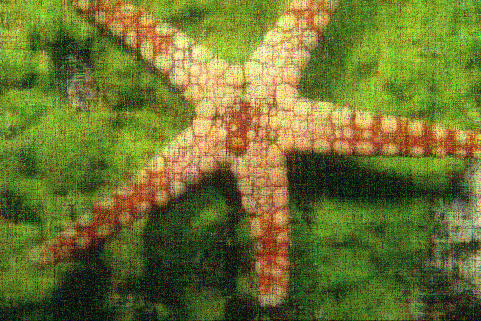} &
\includegraphics[width=0.18\textwidth]{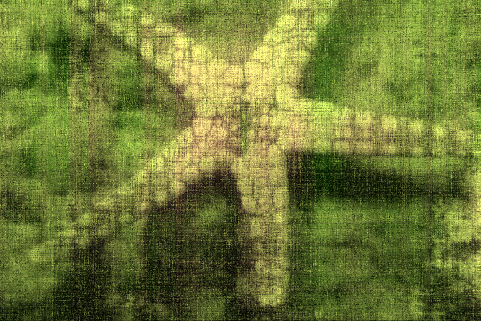} &
\includegraphics[width=0.18\textwidth]{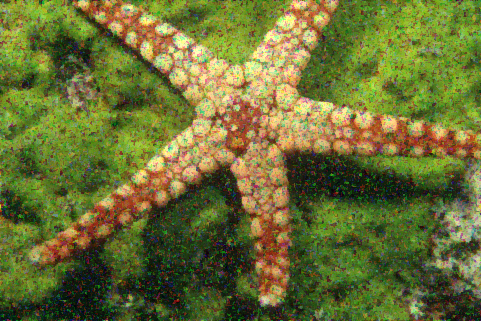} \\
Original & Observed & SNN~\cite{liu2012snn} & KBR~\cite{Qi2018kbr} & LRTV~\cite{he2016lrtv} \\

\includegraphics[width=0.18\textwidth]{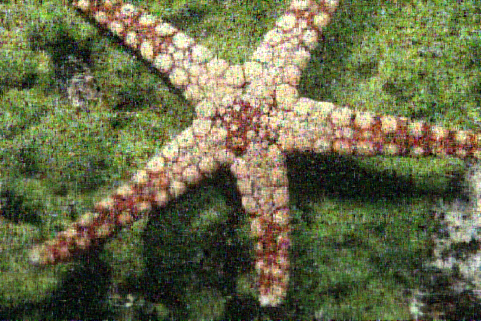} &
\includegraphics[width=0.18\textwidth]{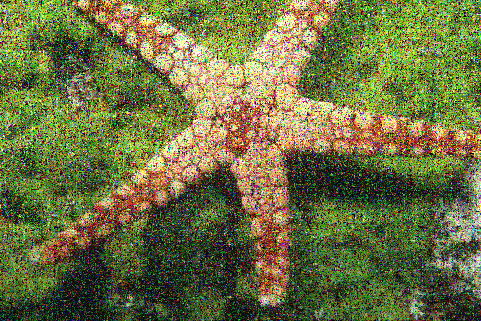} &
\includegraphics[width=0.18\textwidth]{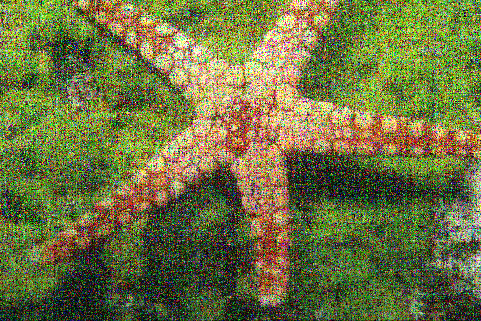} &
\includegraphics[width=0.18\textwidth]{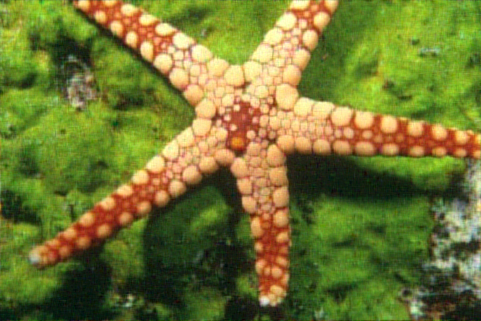} &
\includegraphics[width=0.18\textwidth]{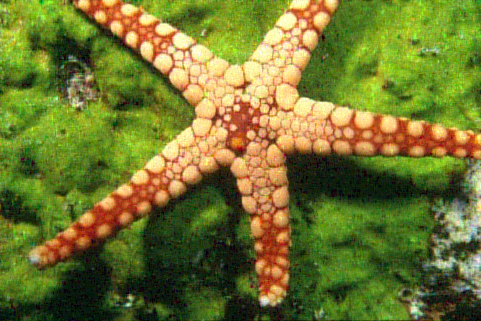} \\
LRTDTV~\cite{wang2018lrtdtv} & TLR-HTV~\cite{chen2018tlrhtv} & TNN~\cite{wenjin2022tnn} & TCTV~\cite{Wang202310990} & TWCTV \\
\end{tabular}

\caption{Visual comparison of different tensor RPCA methods on ``starfish''. The first row shows the original image, observed noisy tensor (Noise level = 0.5), and results from SNN, KBR, and LRTV. The second row shows results from LRTDTV, TLR-HTV, TNN, TCTV, and the proposed TWCTV method.}
\label{fig:comparison_rpca}
\end{figure}

\begin{table}[!ht]
\caption{Comparison of different methods on color image denoising under different levels of salt-and-pepper noise.}
\centering
\setlength{\tabcolsep}{2.5mm}
\begin{tabular}{c|ccc|ccc|ccc}
\hline
\multirow{2}{*}{Method} 
& \multicolumn{3}{c|}{Noise = 0.5} 
& \multicolumn{3}{c|}{Noise = 0.3} 
& \multicolumn{3}{c}{Noise = 0.1} \\
\cline{2-10}
& PSNR & SSIM & FSIM 
& PSNR & SSIM & FSIM 
& PSNR & SSIM & FSIM \\
\hline
Noisy & 7.803 & 0.084 & 0.454 & 10.012 & 0.155 & 0.535 & 14.803 & 0.369 & 0.738 \\
SNN & 18.341 & 0.397 & 0.742 & 24.552 & 0.796 & 0.887 & 27.996 & 0.917 & 0.944 \\
KBR & 17.890 & 0.324 & 0.700 & 21.271 & 0.678 & 0.845 & 23.929 & 0.883 & 0.931 \\
LRTV & 19.328 & 0.506 & 0.774 & 24.085 & 0.748 & 0.884 & 29.381 & 0.916 & 0.955 \\
LRTDTV & 18.012 & 0.459 & 0.745 & 22.602 & 0.715 & 0.868 & 29.209 & 0.899 & 0.947 \\
TLR-HTV & 12.591 & 0.191 & 0.577 & 22.194 & 0.642 & 0.853 & 32.478 & 0.948 & 0.973 \\
TNN & 12.313 & 0.167 & 0.574 & 23.831 & 0.713 & 0.877 & 31.722 & 0.962 & 0.975 \\
TCTV & 23.385 & 0.725 & 0.807 & 29.138 & 0.909 & 0.949 & 31.482 & 0.950 & 0.973 \\
TWCTV & \textbf{26.814} & \textbf{0.810} & \textbf{0.896} & \textbf{31.414} & \textbf{0.942} & \textbf{0.972} & \textbf{33.394} & \textbf{0.964} & \textbf{0.982} \\
\hline
\end{tabular}
\label{tab:trpcacoldenoise}
\end{table}

\begin{figure}[!ht]
\centering
\begin{tabular}{cc}
\includegraphics[width=0.5\textwidth]{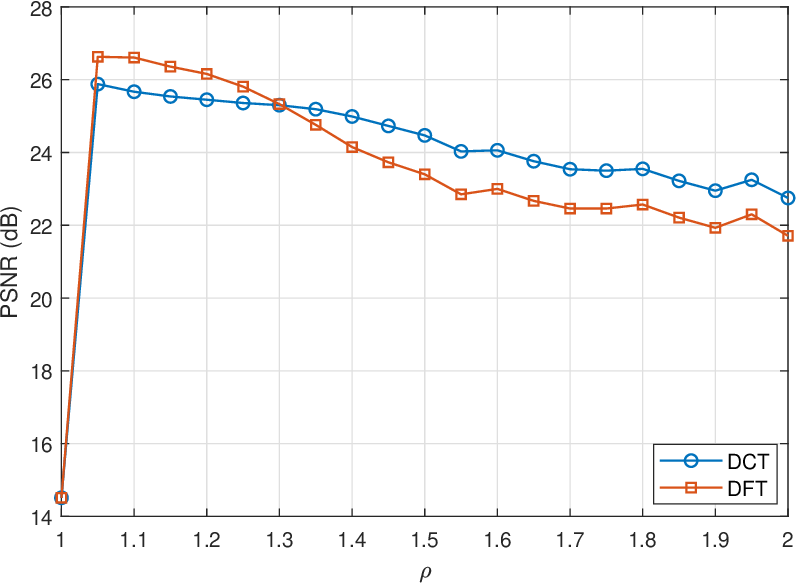} &
\includegraphics[width=0.5\textwidth]{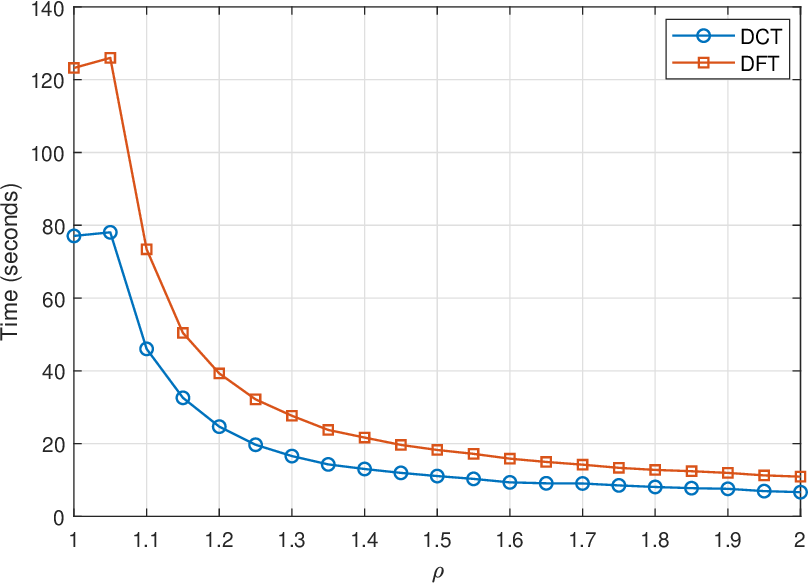} \\
(a) PSNR vs $\rho$. & (b) Time vs $\rho$.
\end{tabular}
\caption{Parameter sensitivity for the parameter $\rho$ for TWCTV based TRPCA on color image (size: $256 \times 256 \times 3$) with 50\% impulse noise or salt-paper noise.}
\label{fig:parameter_sensitivity_trpca}
\end{figure}
Figure~\ref{fig:parameter_sensitivity_trpca} demonstrates the effect of penalty parameter $\rho$ on color image denoising via TWCTV-based tensor RPCA Algorithm~\ref{alg:TWCTVTRPCA}.
Analysis of PSNR values reveals that $\rho$ values between $1.05$ to $1.15$ achieve optimal denoising performance (around $26$ dB for DFT matrix, $25.5$ dB for DCT matrix), after which the performance gradually declines. The computational comparison shows DCT maintains consistently faster processing times, requiring approximately 40\% less computation time than DFT (e.g., $\sim32$s vs $\sim50$s at $\rho=1.15$).

\subsubsection{HSI denoising}

To evaluate the robustness of the proposed methods under varying noise conditions, we used the ``hsi-Pac'' dataset \footnote{\url{https://www.ehu.eus/ccwintco/index.php/Hyperspectral_Remote_Sensing_Scenes}}
 with a size of \(200 \times 200 \times 80\) as input. Sparse salt-and-pepper noise was introduced at three levels: 10\%, 30\%, and 50\%, creating increasingly challenging conditions for denoising. The quantitative results for all methods under these sparse error rates (SER) are summarized in Table~\ref{tab:hsi_denoising}. Also visual comparison of different methods for HSI denoising with sparse rate 30 \% is shown in Figure~\ref{fig:comparisonhsi}.

\begin{figure}[!h]
\centering

\begin{tabular}{ccccc}
\includegraphics[width=0.18\textwidth]{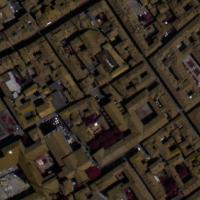} &
\includegraphics[width=0.18\textwidth]{image/HSI_Noisy.jpg} &
\includegraphics[width=0.18\textwidth]{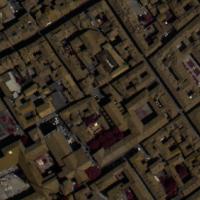} &
\includegraphics[width=0.18\textwidth]{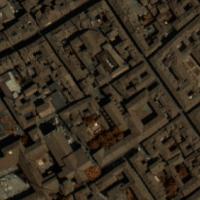} &
\includegraphics[width=0.18\textwidth]{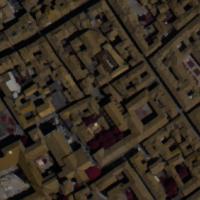} \\
Original & Noisy & SNN~\cite{liu2012snn} & KBR~\cite{Qi2018kbr} & LRTV~\cite{he2016lrtv} \\

\includegraphics[width=0.18\textwidth]{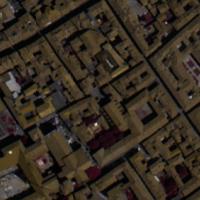} &
\includegraphics[width=0.18\textwidth]{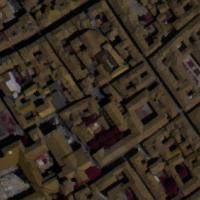} &
\includegraphics[width=0.18\textwidth]{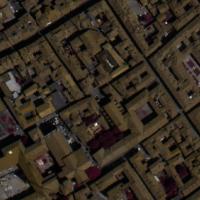} &
\includegraphics[width=0.18\textwidth]{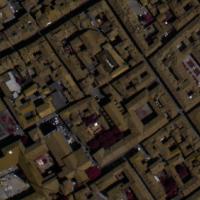} &
\includegraphics[width=0.18\textwidth]{image/HSI_TWCTV.jpg} \\
LRTDTV~\cite{wang2018lrtdtv} & TLR-HTV~\cite{chen2018tlrhtv} & TNN~\cite{wenjin2022tnn} & TCTV~\cite{Wang202310990} & TWCTV \\
\end{tabular}

\caption{Visual comparison of different methods for HSI denoising with a sparse noise rate of 30\%.}
\label{fig:comparisonhsi}
\end{figure}

In comparison, methods such as SNN, TNN and TCTV also perform well under lower noise levels but struggle as the noise intensity increases, leaving residual errors or excessively smoothing the data, thereby losing important details. Other methods, including KBR and LRTV, are unable to achieve an effective trade-off between noise suppression and detail preservation, especially under higher SERs. The sparse weighting strategy in TWCTV, combined with its capability to explore the low-rank characteristics of tensors, ensures effective noise suppression and superior recovery of clean, high-quality data. This experiment confirms that TWCTV is particularly well-suited for handling sparse noise due to its adaptive weighting mechanism, which prioritizes essential features and minimizes the impact of noise, resulting in accurate denoising and better preservation of local structures and edge information in HSI data.

{\small{
\begin{table}[!ht]
\caption{Performance comparison of various methods for HSI denoising under different sparse error rates (SER).}
\centering
\setlength{\tabcolsep}{1.8mm}{
\begin{tabular}{c|ccc|ccc|ccc}
\hline
\multirow{2}{*}{Method} 
& \multicolumn{3}{c|}{SER = 10\%} 
& \multicolumn{3}{c|}{SER = 30\%} 
& \multicolumn{3}{c}{SER = 50\%} \\
\cline{2-10}
& PSNR & ERGAS & SSIM
& PSNR & ERGAS & SSIM
& PSNR & ERGAS & SSIM \\
\hline
Noisy   & 14.524 & 699.002 & 0.275 & 9.752 & 1210.860 & 0.075 & 7.526 & 1564.154 & 0.032 \\
SNN     & 43.328 & 35.941 & 0.997 & 38.694 & 58.770 & 0.990 & 33.838 & 94.077 & 0.971 \\
KBR     & 34.127 & 77.090 & 0.970 & 31.593 & 100.917 & 0.949 & 24.538 & 216.427 & 0.709 \\
LRTV    & 38.536 & 44.482 & 0.985 & 34.863 & 68.429 & 0.964 & 30.786 & 109.954 & 0.909 \\
LRTDTV  & 38.181 & 46.094 & 0.979 & 36.865 & 53.601 & 0.974 & 33.734 & 77.380 & 0.950 \\
TLR-HTV & 36.383 & 61.872 & 0.974 & 33.163 & 84.792 & 0.948 & 29.606 & 124.096 & 0.883 \\
TNN     & 46.864 & 44.200 & 0.990 & 41.780 & 55.692 & 0.983 & 26.169 & 182.379 & 0.724 \\
TCTV    & 48.799 & 25.133 & 0.996 & 44.933 & 33.191 & 0.992 & 39.888 & 45.287 & 0.977 \\
TWCTV   & \textbf{51.959} & \textbf{20.328} & \textbf{0.998} 
        & \textbf{45.621} & \textbf{27.487} & \textbf{0.995} 
        & \textbf{41.092} & \textbf{42.918} & \textbf{0.986} \\
\hline
\end{tabular}
\label{tab:hsi_denoising}
}
\end{table}
}}

\subsection{Salient object detection via background removal using TRPCA}

In our exploration of salient object detection, we aim to distinguish the foreground object (salient object) from the background. This challenge is tackled by using the low-rank tensor property of background data and the sparse tensor representation of foreground objects using the TRPCA framework.

To evaluate our approach, we employ the ChangeDetection.net (CDNet) 2014 dataset \cite{wang2014cdnet}, which includes diverse video sequences. Here we focus on the ``highway'' video sequence (frames 900-1000), represented as a tensor of size $h\times w\times 101 \times 3$, with $h \times w$ denoting the frame size, 101 indicating the number of frames, and three corresponding to the RGB channels. We utilize various methods for comparison, including matrix-based RPCA~\cite{candes2011robust}, Tucker decomposition-based SNN~\cite{huang2015provable}, t-SVD based TNN \cite{wenjin2022tnn}, and tensor correlated total variation (TCTV)~\cite{Wang202310990} with the proposed method TWCTV based tensor RPCA. The data is processed differently for matrix and tensor methods: RPCA converts frames to a $3n_1 \times 101$ matrix ($n_1 = h\times w$), while tensor-based approaches (SNN, TNN, TCTV, TWCTV) use a $n_1 \times 101 \times 3$ format.

The regularization parameter $\lambda$ is set to $\frac{1}{\max \{3 n_1, n_2\}}$ for RPCA and $\frac{1}{3 \max \{n_1, n_2\}}$ for SNN,TNN,TCTV and TWCTV. All other parameters for these methods follow the respective specifications in the original papers.

\begin{figure}
  \centering
 \begin{tabular}{@{}c@{\hspace{2mm}}c@{\hspace{2mm}}c@{\hspace{2mm}}c@{\hspace{2mm}}c@{\hspace{2mm}}c@{\hspace{2mm}}c@{}}
    \includegraphics[width=0.13\textwidth]{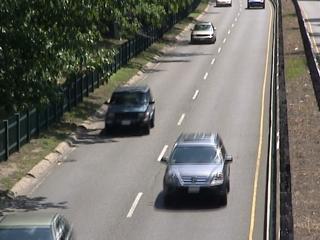} &
    \includegraphics[width=0.13\textwidth]{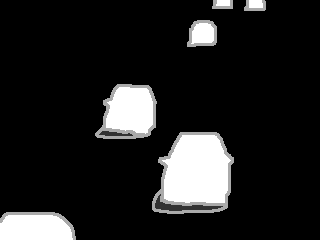} &
    \includegraphics[width=0.13\textwidth]{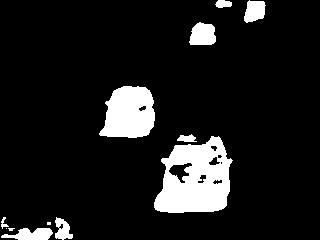} &
    \includegraphics[width=0.13\textwidth]{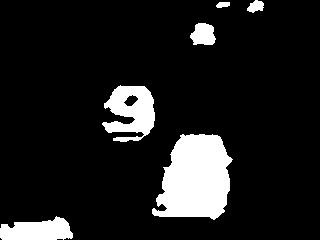} &
    \includegraphics[width=0.13\textwidth]{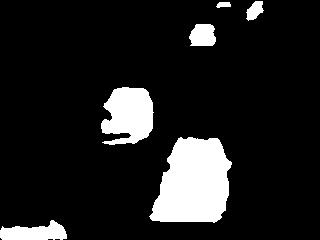} &
    \includegraphics[width=0.13\textwidth]{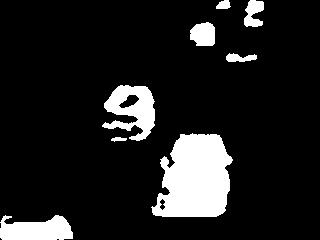} &
    \includegraphics[width=0.13\textwidth]{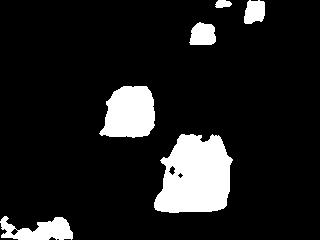} \\
    \includegraphics[width=0.13\textwidth]{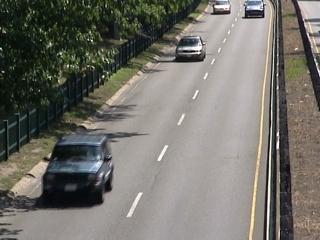} &
    \includegraphics[width=0.13\textwidth]{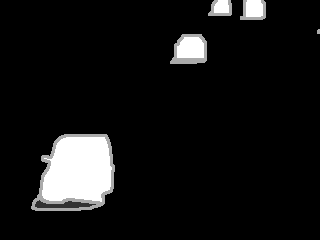} &
    \includegraphics[width=0.13\textwidth]{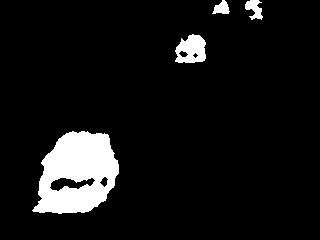} &
    \includegraphics[width=0.13\textwidth]{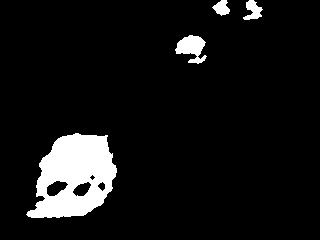} &
    \includegraphics[width=0.13\textwidth]{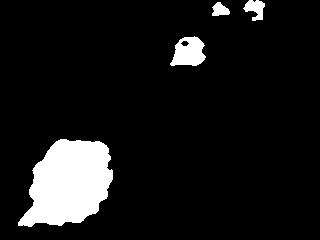} &
    \includegraphics[width=0.13\textwidth]{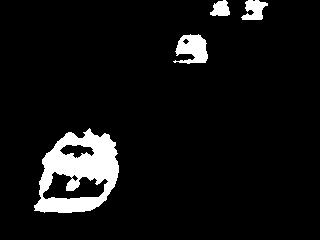} &
    \includegraphics[width=0.13\textwidth]{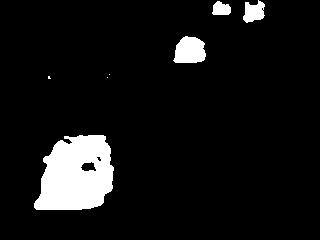} \\
  \makebox[0.12\textwidth][c]{Input}&\makebox[0.12\textwidth][c]{Ground Truth}&\makebox[0.12\textwidth][c]{RPCA}~\cite{candes2011robust}&
  \makebox[0.12\textwidth][c]{SNN}\cite{liu2012snn}&\makebox[0.12\textwidth][c]{TNN}\cite{wenjin2022tnn}&\makebox[0.12\textwidth][c]{TCTV}\cite{Wang202310990}&
  \makebox[0.12\textwidth][c]{TWCTV}
\end{tabular}
  \caption{Foreground detection results on frames 900 (top) and 925 (bottom) comparing different methods.}
    \label{fig:foreground_detection}
\end{figure}

\begin{table}[h!]
\caption{Performance comparison of foreground detection on the CDNet dataset.}
\vspace{2mm}
\centering

\begin{tabular}{|c|c|c|c|c|c|c|}
\hline
Dataset & Metrics & RPCA & SNN & TNN & TCTV & TWCTV \\
\hline
\multirow{3}{*}{highway} 
& Precision & 0.8873  & 0.8966 & 0.8566 & 0.8232 & 0.8990 \\
& Recall & 0.7886 & 0.7092  & 0.8690 & 0.7992 & 0.8866 \\
& F-Measure &  0.8348 & 0.7923 & 0.8522 & 0.8110 & 0.8927 \\

\hline
\end{tabular}

\label{table:video_results}
\end{table}

The performance of foreground detection is evaluated using Recall, Precision, and their harmonic mean, the F-measure. Recall and Precision assess foreground detection by measuring correctly identified pixels (TP) relative to total actual foreground pixels (TP + FN) and detected foreground pixels (TP + FP), respectively. The F-Measure is then calculated as:  
\[
\text{F-Measure} = 2 \cdot \frac{\text{Precision} \cdot \text{Recall}}{\text{Precision} + \text{Recall}}.
\]
 To generate binary masks for background frames, we apply hard thresholding to the sparse tensor slices $\mathcal{E}$ in~\eqref{eq:RTCmodel}, using the standard deviation of the corresponding slice of $\mathcal{E}$ as the threshold value. A median filtering operation is performed on each frame with a $5 \times 5$ window for further noise reduction. 

The proposed TWCTV-based TRPCA method enhances performance by incorporating weighted Schatten-$p$ norm on gradient tensor for low-rank regularization and adaptive weighting for the sparse component, outperforming traditional non-weighted and convex methods. Table~\ref{table:video_results} shows that our TWCTV method achieves superior salient object detection performance on the CDNet datasets in terms of the F-Measure metric. Moreover, the qualitative results in Figure~\ref{fig:foreground_detection} demonstrate our method's improved ability to capture ground truth, validating the nonconvex regularizer as a surrogate for tubal rank.

\section{Conclusion}

This paper proposed a single non-convex regularizer (TWCTV) with a weighted Schatten-$p$ norm on gradient tensors, enhancing the characterization of low-rankness and smoothness in tensors while removing the need for a balancing parameter between these terms. By incorporating exponential weight for low-rank regularization in the gradient domain and exponential functions for sparse weight modeling, the proposed RLRTC model adaptively preserves significant tensor information. Integrated into tensor completion and TRPCA tasks, the proposed method uses an ADMM-based optimization framework with proven convergence. The non-convexity of the low-rank regularizer and adaptive weighting mechanisms of sparse terms effectively reduce issues like ill-conditioning and bias found in the traditional convex approach while preserving key singular values and significant sparse entries. Future research directions include exploring new nonconvex regularizers on gradient tensors and deep unfolded network-based ADMM to enhance tensor recovery performance.

\vspace{-.4cm}
\section*{Funding}
 Ratikanta Behera is supported by the Anusandhan National Research Foundation (ANRF), Government of India, under Grant No. EEQ/2022/001065.
\vspace{-.5cm}
\section*{Conflict of Interest}
The authors would like to assure the readers that they have no potential conflicts of interest to report.
\section*{Data Availability}
In this study, no new data was generated or analyzed. All data sets referenced in this article are publicly available from the sources cited within the manuscript.

\vspace{-.4cm}
\section*{ORCID}
{\small{
Biswarup Karmakar~\orcidC \href{https://orcid.org/0009-0003-5635-5425}{ \hspace{2mm}\textcolor{lightblue}{https://orcid.org/0009-0003-5635-5425}} \\
Ratikanta Behera~\orcidA \href{https://orcid.org/0000-0002-6237-5700}{ \hspace{2mm}\textcolor{lightblue}{ https://orcid.org/0000-0002-6237-5700}}\\
}}
\vspace{-1cm}



\bibliographystyle{abbrv}
\bibliography{References}
\end{document}